\documentclass[11pt]{article}
\usepackage{graphicx}
\usepackage{natbib}
\usepackage[utf8]{inputenc} %

\usepackage[top=1.2in, right=1.15in, left=1.15in, bottom=1.2in]{geometry}		%

\usepackage{xspace}		%

\usepackage{amsfonts}
\usepackage{amsmath}
\usepackage{amssymb}
\usepackage{amsthm}
\usepackage{mathtools}		%

\mathtoolsset{%
}

\usepackage[utf8]{inputenc}		%
\usepackage[T1]{fontenc}		%

\usepackage[%
cal=cm,
]
{mathalfa}

\usepackage{dsfont}		%

\usepackage[scale=.96]{inconsolata}
\usepackage{acronym}		%
\usepackage[utf8]{inputenc}		%
\usepackage[T1]{fontenc}		%

\usepackage[dvipsnames,svgnames,table]{xcolor}
\colorlet{MyRed}{Crimson!75!Black}
\colorlet{MyGreen}{DarkGreen!80!Black}
\colorlet{MyBlue}{MediumBlue}

\usepackage[showdeletions]{color-edits}		%
\usepackage[normalem]{ulem}		%

\newcommand{\debug}[1]{#1}		%

\usepackage{algorithm}
\usepackage{algorithmic}

\usepackage{hyperref}
\hypersetup{
colorlinks=true,
linktocpage=true,
pdfstartview=FitH,
breaklinks=true,
pdfpagemode=UseNone,
pageanchor=true,
pdfpagemode=UseOutlines,
plainpages=false,
bookmarksnumbered,
bookmarksopen=false,
bookmarksopenlevel=1,
hypertexnames=true,
pdfhighlight=/O,
urlcolor=blue,linkcolor=blue,citecolor=blue,	%
pdftitle={},
pdfauthor={},
pdfsubject={},
pdfkeywords={},
pdfcreator={pdfLaTeX},
pdfproducer={LaTeX with hyperref}
}
\usepackage[showdeletions]{color-edits}		%
\usepackage[normalem]{ulem}		%

\usepackage[sort&compress,capitalize,nameinlink]{cleveref}	

\crefname{assumption}{Assumption}{Assumptions}

\usepackage{thmtools}		%
\usepackage{thm-restate}		%

\usepackage{caption}
\usepackage{subcaption}

\theoremstyle{plain}
\newtheorem{theorem}{Theorem}		%
\newtheorem{corollary}{Corollary}		%
\newtheorem{lemma}{Lemma}		%
\newtheorem{proposition}{Proposition}		%
\newtheorem{fact}{Fact}

\newtheorem{remark}{Remark}

\newtheorem*{corollary*}{Corollary}		%
\newtheorem*{theorem*}{Theorem}

\theoremstyle{definition}
\newtheorem{definition}{Definition}		%
\newtheorem*{definition*}{Definition}		%
\newtheorem*{assumption*}{Assumptions}		%
\newtheorem*{example*}{Example}		%

\DeclarePairedDelimiter{\braces}{\{}{\}}		%
\DeclarePairedDelimiter{\bracks}{[}{]}		%
\DeclarePairedDelimiter{\parens}{(}{)}		%

\DeclarePairedDelimiter{\abs}{\lvert}{\rvert}		%

\DeclarePairedDelimiterX{\setdef}[2]{\{}{\}}{#1:#2}		%
\DeclarePairedDelimiterXPP{\exclude}[1]{\mathopen{}\setminus}{\{}{\}}{}{#1}

\newcommand{\newmacro}[2]{\newcommand{#1}{\debug{#2}}}		%
\newcommand{\R}{\mathbb{R}}		%
\DeclareMathOperator{\kron}{\otimes} %
\DeclareMathOperator{\kar}{\odot} %
\DeclareMathOperator{\had}{\circ}
\DeclareMathOperator{\ex}{\mathbb{E}}		%
\DeclareMathOperator{\prob}{\mathbb{P}}		%

\DeclarePairedDelimiterXPP{\exof}[1]{\ex}{[}{]}{}{%
 #1}

\DeclarePairedDelimiterXPP{\probof}[1]{\prob}{(}{)}{}{%
 #1}

\DeclarePairedDelimiterXPP{\oneof}[1]{\one}{\{}{\}}{}{%
 #1}

\DeclarePairedDelimiter{\norm}{\lVert}{\rVert}		%
\DeclarePairedDelimiterXPP{\dnorm}[1]{}{\lVert}{\rVert}{_{\ast}}{#1}		%

\DeclarePairedDelimiterXPP{\onenorm}[1]{}{\lVert}{\rVert}{_{1}}{#1}		%
\DeclarePairedDelimiterXPP{\twonorm}[1]{}{\lVert}{\rVert}{_{2}}{#1}		%
\DeclarePairedDelimiterXPP{\supnorm}[1]{}{\lVert}{\rVert}{_{\infty}}{#1}		%

\DeclarePairedDelimiterX{\braket}[2]{\langle}{\rangle}{#1\mathopen{}\delimsize\vert\mathopen{}#2}

\DeclarePairedDelimiterX{\inner}[2]{\langle}{\rangle}{#1,#2}		%

\newcommand\bovermat[2]{%
  \makebox[0pt][l]{$\smash{\overbrace{\phantom{%
    \begin{matrix}#2\end{matrix}}}^{\text{#1}}}$}#2}

\newmacro{\mat}{\R} %
\newmacro{\field}{\mathbf{F}} %

\newcommand{\id}{\mathbb{I}} %
\newcommand{\zero}{\mbox{\normalfont\large 0}}
\newcommand{\rvline}{\hspace*{-\arraycolsep}\vline\hspace*{-\arraycolsep}} %

\newcommand{\vecd}{\mathrm{vecd}}
\newcommand{\myvec}{\mathrm{vec}}
\newcommand{\cf}{\emph{cf.}\xspace}		%
\newcommand{\eg}{\emph{e.g.},\xspace}		%
\newcommand{\ie}{\emph{i.e.},\xspace}		%

\newmacro{\dep}{l} %
\newmacro{\wid}{d} %
\newmacro{\mrank}{r} %
\newmacro{\weights}{\rmW} %
\newmacro{\weight}{w} %
\newmacro{\altweights}{\rmV} %
\newmacro{\altweight}{v} %
\newcommand{\im}[1][{}]{\rmW^{*}_{#1}} %
\newmacro{\iw}{w^*}

\newmacro{\data}{\rvx} %
\newmacro{\altdata}{\rvz}
\newmacro{\dd}{d} %

\newmacro{\batchscale}{\gamma} %
\newmacro{\batchshift}{\beta} %
\newmacro{\mscale}{\rmGamma} %
\newmacro{\mshift}{\bm{\beta}} %

\newmacro{\var}{s} %
\newmacro{\mean}{\bm{\mu}} %

\newmacro{\neuron}{i} %
\newmacro{\altneuron}{j} %

\newmacro{\relu}{{\sigma}} %

\newmacro{\normal}{\matcal{N}} %
\newmacro{\uni}{\mathcal{U}} %

\usepackage{bm}
\usepackage{bbm}
\usepackage[mathscr]{eucal}

\def\1{\bm{1}}

\def\eps{{\epsilon}}

\def\ru{{\textnormal{u}}}
\def\rv{{\textnormal{v}}}

\def\rx{{\textnormal{x}}}

\def\rva{{\mathbf{a}}}
\def\rvb{{\mathbf{b}}}

\def\rvv{{\mathbf{v}}}
\def\rvw{{\mathbf{w}}}
\def\rvx{{\mathbf{x}}}

\def\rvz{{\mathbf{z}}}

\def\rmA{{\mathbf{A}}}
\def\rmB{{\mathbf{B}}}
\def\rmC{{\mathbf{C}}}
\def\rmD{{\mathbf{D}}}
\def\rmE{{\mathbf{E}}}

\def\rmI{{\mathbf{I}}}

\def\rmM{{\mathbf{M}}}

\def\rmP{{\mathbf{P}}}
\def\rmQ{{\mathbf{Q}}}

\def\rmS{{\mathbf{S}}}

\def\rmU{{\mathbf{U}}}
\def\rmV{{\mathbf{V}}}
\def\rmW{{\mathbf{W}}}
\def\rmX{{\mathbf{X}}}

\def\rmGamma{{\mathbf{\Gamma}}}
\def\ermA{{\textnormal{A}}}
\def\ermB{{\textnormal{B}}}

\def\ermP{{\textnormal{P}}}
\def\ermQ{{\textnormal{Q}}}

\def\vmu{{\bm{\mu}}}

\def\vb{{\bm{b}}}

\def\vx{{\bm{x}}}

\def\eva{{a}}

\def\mSigma{{\bm{\Sigma}}}

\DeclareMathAlphabet{\mathsfit}{\encodingdefault}{\sfdefault}{m}{sl}
\SetMathAlphabet{\mathsfit}{bold}{\encodingdefault}{\sfdefault}{bx}{n}

\def\cA{{\mathcal{A}}}

\def\cI{{\mathcal{I}}}
\def\cJ{{\mathcal{J}}}

\def\cO{{\mathcal{O}}}

\def\cS{{\mathcal{S}}}
\def\cT{{\mathcal{T}}}

\def\bP{{\mathbb{P}}}

\def\bR{{\mathbb{R}}}

\setlength{\marginparwidth}{.5in}
\addauthor[Dimitris]{DP}{Magenta}

\setlength{\marginparwidth}{.5in}
\addauthor[Shashank]{SR}{MediumBlue}

\def\kr{\kar}

\def\bdet{\text{Det}_\text{Bool}}
\newcommand{\Mod}[1]{\ (\mathrm{mod}\ #1)}

\setlength{\marginparwidth}{.5in}
\addauthor[Angel]{AG}{DarkOrange}

\definecolor{oldlavander}{rgb}{0.8, 0.8, 1.0}
\definecolor{babypink}{rgb}{0.96, 0.76, 0.76}
\definecolor{bubblegum}{rgb}{1, 0.76, 0.85}
\definecolor{magen}{rgb}{0.6, 0.4, 0.8} %
\definecolor{bluebell}{rgb}{0.64, 0.64, 0.82}
\definecolor{languidlavender}{rgb}{0.84, 0.79, 0.87}
\definecolor{lavenderblue}{rgb}{0.8, 0.8, 1.0}
\definecolor{lavender}{rgb}{0.9, 0.9, 0.98}
\newacro{BN}{Batch Normalization}
\newacro{LN}{Layer Normalization}
\newacro{WN}{Weight Normalization}
\newacro{BNs}{BatchNorm} %

\newacro{NZ}{normalization}

\newacro{NZl}{normalization layer}
\newacro{NZls}{normalization layers}

\newacro{LNs}{LayerNorm}
\newacro{WNs}{WeightNorm}

\newacro{Norm}{Normalization}
\newacro{relu}{ReLU}

\newmacro{\realiz}{R} %
\newcommand{\rank}{\mathrm{rank}}
\usepackage[skins]{tcolorbox}
\newmacro{\chunk}{k}
\usepackage{wrapfig}
\newmacro{\ProjInput}{\rmA}
\newmacro{\ProjLayer}{\rmB}
\newmacro{\layer}{i}
\newmacro{\slayers}{\left\lceil \frac{\wid}{\chunk}\right\rceil} %
\newmacro{\blayers}{\lceil \dfrac{\wid}{\chunk}\rceil} %
\renewcommand{\det}{\text{Det}}
\newcommand{\rankk}{r}
\newmacro{\Subsel}{\rmS} %

\title{\bf The Expressive Power of Tuning Only the Normalization Layers}
\usepackage{times}

\author{
\Large Angeliki Giannou\thanks{Equal contribution, listed alphabetically. Correspondence to \emph{giannou@wisc.edu, rajput.shashank11@gmail.com}.\\Accepted for presentation at the Conference on Learning Theory (COLT) 2023.
}\;, 
Shashank Rajput$^{*}$,
 Dimitris Papailiopoulos \vspace{0.5cm}\\
\Large University of Wisconsin-Madison \\ 
}

\date{}
\begin{document}

\maketitle

\begin{abstract}

Feature normalization transforms such as Batch and Layer-Normalization have become indispensable ingredients of state-of-the-art deep neural networks. 
Recent studies on fine-tuning large pretrained models indicate that just tuning the parameters of these affine transforms can achieve high accuracy for downstream tasks.
These findings open the questions about the expressive power of tuning  the normalization layers of frozen networks.
In this work, we take the first step towards this question and show that for random ReLU networks, fine-tuning only its normalization layers can reconstruct  any target network that is 
$O(\sqrt{\text{width}})$ times smaller.
We show that this holds even for randomly sparsified networks, under sufficient overparameterization, in agreement with prior empirical work.

\end{abstract}

\section{Introduction}
\label{sec: intro}
Many modern machine learning techniques work by training or tuning only a small part of a pretrained network, rather than training all the weights from scratch. This is particularly useful in tasks like transfer learning \citep{ yosinski2014transferable, donahue2014decaf, guo2020parameter, Houlsby3032transfer, zaken2021bitfit}, multitask learning \citep{mudrakarta2018k,clark2019bam}, and few-shot learning \citep{lifchitz2019dense}. Fine-tuning only a subset of the parameters of a large-scale model allows not only significantly faster training, but can sometimes lead to 
better accuracy than training from scratch \citep{zaken2021bitfit,bilen2017universal}.

One particular way of model fine-tuning is to train only the \acl{BN} (\acl{BNs}) or \acl{LN} (\acl{LNs}) parameters
\citep{mudrakarta2018k,huang2017arbitrary}. These \acl{NZls} typically operate as affine transformations of each activation output, and as such one would expect that their expressive power is small, especially in comparison to tuning the weight matrices of a network. However,  in an extensive experimental study, \cite{frankle2020training} discovered that  training these \acl{NZ} parameters in isolation leads to predictive models with accuracy far above random guessing, even when all model weights are frozen at random values. The authors further showed that increasing the width/depth of these random networks allowed \acl{NZls} training to reach significant accuracy across CIFAR-10 and ImageNet, and higher in comparison to training subsets of the network with similar number of parameters as that of \acl{NZls}.

The above experimental studies indicate that training only the normalization layers of a network, seems ---at least in practice--- expressive enough to allow non-trivial accuracy for a variety of target tasks. In this work, we make a first step towards theoretically exploring the above phenomenon, and attempt to tackle the following open question:
\begin{quotation}
 \begin{center}
    \textit{ What is the expressive power of tuning only the normalization layers  of a neural network?}
 \end{center}
\end{quotation}

At first glance, it does not seem that training only the \acl{NZls} has large expressive power. This is because, as we argue in Section \ref{sec:prelim}, at its core, training the \acl{NZl} parameters is equivalent to scaling the input of each neuron and adding a bias term. Let $\rmW$ be a $d\times d$ weight matrix of a given layer, $\bm{\Gamma}$ a diagonal matrix (meaning that has everywhere zero values except for the ones in the diagonal) that normalizes the activation outputs and $\bm{\beta}$  a $d$-dimensional vector that acts as a bias correction term. Then we have that the output of such a layer is equal to

\begin{align}
    f_{\bm{\Gamma}, \bm{\beta}}(\vx)
    &=\sigma\left(\bm{\Gamma} \rmW \vx + \bm{\beta} \right),\label{eq:scaling_transform}
\end{align}
where $\sigma$ is an element-wise activation and $\vx\in\R^d$ is the input to that layer. 
 
Tuning just these two sets of parameters, \ie $\bm{\Gamma}$ and $\bm{\beta}$, seems way less expressive than training $\rmW$, which contains a factor of $\mathcal{O}(d)$ more trainable parameters. 
An even more striking expressive disadvantage of the normalization layer parameters is that they do not linearly combine the individual coordinates of $\vx$. 
A normalization layer only scales and shifts a layer's input. How is it then possible to get high accuracy results by training these simple affine transformations of internal features?

\paragraph{Our contributions. }
In this paper, we theoretically investigate the expressive power of the normalization layers. %
In particular, we prove that any given neural network can be perfectly reconstructed by only tuning the normalization layers of a wider, or deeper random network that contains only a factor of $\widetilde{\cO{}}(\sqrt{\wid})$ more parameters (including both trainable and random).

\begin{figure}
    \centering
  \includegraphics[width=\textwidth]{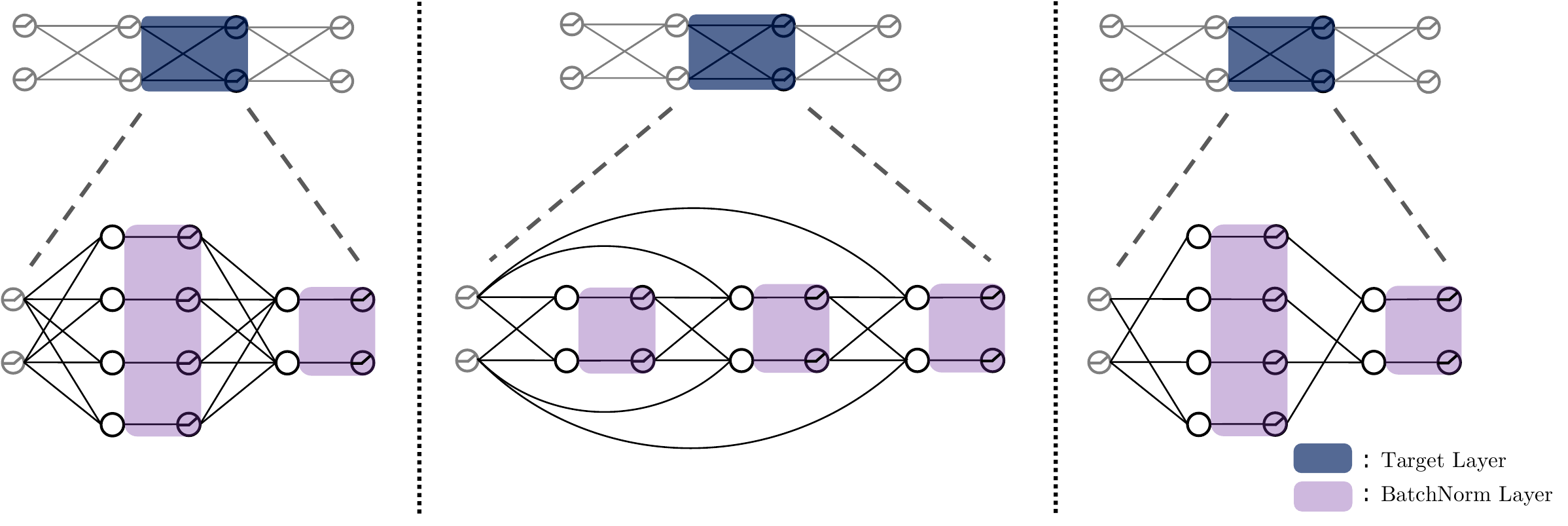}
    \caption{Each fully connected layer of a target network can be exactly recovered by either: (left) two polynomially wider layers, whose only \acl{NZ} parameters are tunable or (middle) a polynomially deeper layer with the same width and skip connections, whose only \acl{NZ} parameters are tunable. We also show that either of these two cases can also work when the random network has sparse weight matrices (right).}
    \label{fig:main_results}
\end{figure}

\begin{theorem*}[Informal]\label{th: shallow informal}
Let $g$ be any fully connected neural network with $\dep$ layers and width $\wid$. Then, any randomly initialized fully connected network $f$ with $\dep'$ layers and width $\wid'$, with \acl{NZls} can exactly recover the network $g$ functionally, by tuning only the \acl{NZl} parameters, as long as $\wid'\dep'\geq 2\wid^2\dep$,  $\wid'\geq \wid$ and $\dep' \geq 2\dep$. Further, if $f$ has sparse weight matrices, then the total number of parameters (trainable and random) only needs to be a factor of $\widetilde{\cO{}}(\sqrt{\wid})$ larger than the target network.
\end{theorem*}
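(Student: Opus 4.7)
The plan is to simulate each layer of the target network $g$ by a short block of consecutive random layers of $f$. Two ideas drive the construction: first, a large tunable bias can force every pre-activation to be non-negative, so the downstream ReLU acts as the identity and the layer becomes effectively affine; second, for random weight matrices of the right dimensions, the rank-one matrices $\{\vb_k\va_k^\top\}_k$ obtained from a column of one matrix and a row of another span the $\wid\times\wid$ target matrix space, so a diagonal reweighting can realize any target linear map.

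I would first handle the ``wider'' extreme $\wid'=\wid^2$, $\dep'=2\dep$, which is the cleanest. The invariant is that the state entering the $l$-th block is $\vz=(\vh_{l-1},\mathbf{0})\in\R^{\wid^2}$, where $\vh_{l-1}$ is the state of $g$. Within a block, let $\rmA,\rmB$ be the two random weight matrices and $\bm{\Gamma}^{(1)},\bm{\Gamma}^{(2)},\bm{\beta}^{(1)},\bm{\beta}^{(2)}$ the tunable parameters. Setting $\bm{\beta}^{(1)}$ larger than a uniform bound on $\|\bm{\Gamma}^{(1)}\rmA\vz\|_\infty$ over the bounded input domain linearizes the first sub-layer, so the block's output is $\sigma(\bm{\Gamma}^{(2)}\rmB\bm{\Gamma}^{(1)}\rmA\vz+\bm{\Gamma}^{(2)}\rmB\bm{\beta}^{(1)}+\bm{\beta}^{(2)})$. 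On coordinates $i>\wid$, setting $\bm{\Gamma}^{(2)}_{ii}=0$ and $\bm{\beta}^{(2)}_i=0$ makes the outer ReLU return zero and preserves the zero padding. On the first $\wid$ coordinates, writing $\tilde\rmA=\rmA_{:,1:\wid}$, reproducing $\sigma(\rmW^*_l\vh_{l-1}+\vb^*_l)$ reduces to the weight equation $(\bm{\Gamma}^{(2)}\rmB\bm{\Gamma}^{(1)}\tilde\rmA)_{1:\wid,\,:}=\rmW^*_l$ together with a bias equation that $\bm{\beta}^{(2)}$ solves trivially.

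The crux is the following linear-algebra lemma: for random Gaussian $\rmA,\rmB$, with $\bm{\Gamma}^{(2)}$ fixed to the identity on the first $\wid$ coordinates, there exists a diagonal $\bm{\Gamma}^{(1)}=\mathrm{diag}(\vd)$ satisfying the weight equation. Flattening this $\wid\times\wid$ matrix equation gives a linear system $M\vd=\myvec(\rmW^*_l)$ with $M\in\R^{\wid^2\times\wid^2}$ and $M_{(i,j),k}=\rmB_{ik}\tilde\rmA_{kj}$. I would prove $M$ invertible almost surely by the standard polynomial-identity argument: $\det M$ is a polynomial in the entries of $\rmA,\rmB$, so it suffices to exhibit one non-vanishing realization. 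A clean witness encodes the bijection $(i,j)\mapsto k=(i-1)\wid+j$ as a permutation pattern in $\rmB$ and $\tilde\rmA$, making $M$ itself a permutation matrix with $|\det M|=1$; absolute continuity of the Gaussian then yields almost-sure non-vanishing.

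For the full regime $\wid'\dep'\ge 2\wid^2\dep$ with $\wid'\ge\wid$ and $\dep'\ge 2\dep$, a single two-layer block only reaches a subspace of dimension $\wid'$ inside the $\wid^2$-dimensional matrix space. I would therefore chain $\lceil\wid^2/\wid'\rceil$ blocks per target layer, each contributing a partial weight $\rmW^{*,(k)}_l$ with $\sum_k\rmW^{*,(k)}_l=\rmW^*_l$, while keeping $\vh_{l-1}$ intact in an input slot and accumulating into an output slot. The two slots fit inside $\wid'\ge\wid$ via extra width when $\wid'>\wid$, and via the skip connections of the middle panel of Figure~\ref{fig:main_results} when $\wid'=\wid$. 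The sparse extension follows the same skeleton, but the linear-algebra lemma must now hold for sparse random $\rmA,\rmB$; I expect the same permutation witness to survive as long as each sparse matrix retains a nonzero at every required cell, and this density requirement is what pushes the total parameter count up by only a factor of $\widetilde{\cO}(\sqrt{\wid})$ over the target. The main obstacle throughout is the linear-algebra lemma (and its sparse analogue); the ReLU linearization, zero padding, and inter-block bookkeeping are then essentially mechanical.
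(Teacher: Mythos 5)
Your construction for the wide regime ($\wid'=\wid^2$, $\dep'=2\dep$) is essentially the paper's: linearize the inner ReLU with a large shift, cancel the induced bias at the next layer, reduce the weight equation to a linear system in the diagonal entries whose coefficient matrix is exactly the Khatri--Rao product $\rmB\kar\tilde{\rmA}^{T}$, and prove generic invertibility by exhibiting a permutation-type witness and invoking the zero-set-of-a-nonzero-polynomial argument. Your intermediate-regime chaining (additive decomposition $\sum_k\rmW^{*,(k)}_l=\rmW^*_l$ with an input slot and an accumulator slot) differs in bookkeeping from the paper's construction, which instead partitions the \emph{input} into chunks fed in through skip connections and solves for the diagonal matrices by a backward induction over the telescoping products $\prod_{i=j}^{K}\mscale_i\weights_i\ProjInput_j$; your variant would additionally require showing that the accumulator can be carried intact through each random frozen layer by the same diagonal that is simultaneously synthesizing the new partial weight, which is exactly the kind of interference the paper's induction (and its lemma that the solved diagonals have no zero entries) is designed to control. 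That part is under-specified but not obviously wrong.

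The genuine gap is in the sparse case. You propose that the permutation witness ``survives as long as each sparse matrix retains a nonzero at every required cell,'' and that this density requirement yields the $\widetilde{\cO}(\sqrt{\wid})$ overhead. This does not work: a fixed set of $\Theta(\wid^2)$ prescribed cells all surviving independent $\mathrm{Bern}(p)$ sparsification has probability $p^{\Theta(\wid^2)}$, which is negligible for any $p<1$, and demanding it deterministically forces $p=1$. The correct condition is that the surviving sparsity pattern of $\rmM_1\kar\rmM_2$ admits \emph{some} perfect matching (nonzero Boolean determinant, in the paper's terminology), and the entire difficulty of the sparse theorem is (i) proving that invertibility of $(\weights_1\had\rmM_1)\kar(\weights_2\had\rmM_2)$ is equivalent to $\bdet(\rmM_1\kar\rmM_2)=1$ despite the dependencies among entries of the Khatri--Rao product, and (ii) bounding $\probof{\bdet(\rmM_1\kar\rmM_2)=0}$ at density $p=\Theta(\sqrt{\log\wid/\wid})$, which the paper does by a column-by-column coupling to an i.i.d.\ Bernoulli matrix of density $q$ with $p\geq\sqrt{2q\wid}$ and an appeal to quantitative singularity bounds for sparse Bernoulli matrices. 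None of this is recoverable from the witness argument, so the sparse half of the informal theorem is not established by your sketch.
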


\begin{remark}
Note that for the case when $d'=\cO(d^2)$ and $l'=\cO(l)$, the number of \textbf{trainable} parameters for both the networks are still of the same order, which is $d^2l$. This is because only the diagonal elements of the \acl{NZl} matrix are trainable, which are of the order of $d'=d^2$. This distinction between the total number of parameters and trainable parameters is particularly important in our work's context because the total number of  parameters in $f$ would usually be of the order $\cO(d^3l)$ (as we will see later), however the number of trainable parameters (that is, just the \acl{NZl} parameters) would still be the same.
\end{remark}
A sketch of our construction of random frozen network with tunable \acl{NZls} that allows this result is shown in \cref{fig:main_results}.
The first construction in the figure reconstructs each layer of the target network using two layers of a wider, frozen random neural network with tunable \acl{NZl} parameters. We %
then show that if skip connections are allowed, %
we can reconstruct the target network by a much narrower yet deeper frozen random network with tunable \acl{NZls}. This indicates that adding the skip connections can potentially increase the expressive power of deep networks with \acl{NZls}.

We provide the formal theorem statements for each of the three cases: wide, deep, and sparse reconstructions in Sections \ref{sec: wide}, \ref{sec: deep}, and \ref{sec:sparsify} respectively. The proofs of the theorems rely crucially on the invertibility of Khatri-Rao products of random (possibly sparse) matrices. Due to the complex structure of the Khatri-Rao product, it introduces dependencies amongst the entries of the matrix and consequently, proving the invertibility becomes challenging; while in the case of reconstruction by a deep neural network, some more complicated arguments are required regarding the multiplication of random matrices. We provide proof sketches in the main text of the paper (Sections \ref{sec: wide}, \ref{sec: deep} and \ref{sec:sparsify}), while the full proofs are deferred to the Appendix.

\subsection{Related Work}
Feature Normalization techniques are widely known to improve generalization performance and accelerate training  in deep neural networks.
The first normalization technique introduced was  \acl{BN} by
\citet{Ioffe2015}, followed by \acl{WN} \citep{salimans2016weight} and \acl{LN} \citep{ba2016layer}. The initial method of \acl{BN}  claimed to reduce the \emph{internal covariate shift}, which \citet{Ioffe2015} define as the change in network parameters of the layers preceding to any given layer. The expectation is that by ensuring that the layer inputs are always zero mean and unit variance, training will be faster, similar to how input normalization typically helps training \citep{lecun2012efficient, wiesler2011convergence}. 

In a subsequent work, \citet{balduzzi2017shattered} showed how  \acl{BNs} leads to different neural activation patterns for different inputs; this might be connected to better generalization \citep{morcos2018importance}. 
The claim concerning the internal covariate shift was later doubted by the experimental study of \citet{santurkar2018does}, in which the authors show that adding noise with non-zero mean after \acl{BN} still leads to fast convergence.
 A potential  explanation  is that the loss landscape becomes smoother, a direction explored by \citet{bjorck2018understanding}, who show that large step sizes lead to large gradient norms if \acl{BN} is not used.   

For learning half-spaces using linear models with infinitely differentiable loss functions, \cite{kohler2019exponential} show that \acl{BN} leads to exponentially fast convergence. They show that it reparameterizes the loss such that the norm and direction component of weight vectors become decoupled, similar to weight normalization \citep{salimans2016weight, gitman2017comparison}. \cite{yang2019mean} use mean-field theory to show how residual connections help stabilize the training for networks with Batch Normalization. \cite{luo2018towards} show that \acl{BNs} can be decomposed into factors that lead to both implicit and explicit regularizations. \cite{balestriero2022batch} show that \acl{BN} modifies the geometry of a network by bringing the hyperplanes defined by the neurons closer to the data points, in an unsupervised way. 

 Tuning only the \acl{BN} layers of a random network is related to learning with random features. 
 The works of \citet{Rahimi2008kernel,Rahimi2008UniformAO, Rahimi2008weighted} focused on the power of random features, both theoretically and empirically in the context of kernel methods and later on training neural networks. \cite{andoni14}  investigate linear combinations  of random features  for approximating bounded degree polynomials with complex initialization, through the lens of two layer \acl{relu} networks. In the same context,  \cite{Bach2017}, \cite{Ji2020} give upper bounds on the width for approximating  Lipschitz functions. There are other multiple works that give upper bounds on the necessary width for approximating  with combinations of random features, different classes of functions \citep{Barron93,KlusowskiBarron2016,Sun2018, hsu2021approximation}.

Another result considering random features concerns the overparameterized setting for two layer neural networks: \emph{Stochastic Gradient Descent} tends to keep the weights of the first layer close to their initialization 
\citep{Du2019, yehudai2019power, Jacot2018neural}. 
Building on this, \cite{yehudai2019power} showed that  $poly(d)-$overparameterized two layer \acl{relu} neural network with random features can learn polynomials by training the second layer with SGD . 

On the negative side, there is a line of research on impossibility results in the approximation power of two layer \acl{relu} neural networks. \cite{Ghorbani2019} showed the limitation in the high dimensional setting for approximating high degree polynomials. \cite{yehudai2019power} showed that in order to approximate a single \acl{relu} neuron with a two layer \acl{relu}  network  exponential overparameterization is needed. However,  \cite{hsu2021approximation} give matching upper and lower bounds on the number of neurons needed to approximate any $L-$Lipschitz function. Using a different approximation scheme they achieve polynomial in the dimension bounds for $1-$ Lipschitz functions.

\section{Preliminaries}
\label{sec:prelim}
\paragraph{Notation.} We use bold capital letters to denote matrices (\eg $\rmV,\rmW,\rmA$); bold lowercase  letters to denote vectors (\eg $\rvv,\rvw,\rva$); $\bR^{n\times m}$ to denote the space of all $n\times m$ real matrices;
 
 $\kron$,$\kar$, and $\had$ to denote the Kronecker, the Khatri-Rao and the Hadamard products respectively (\cf Definitions \ref{def: KR } and \ref{def: had}); and $[i]$ to denote the set of integers $\{1,\dots ,i\}$.

When referring to randomly initialized neural networks, we imply that each weight matrix has independent, identically distributed elements, drawn from any arbitrary continuous and bounded distribution. We consider neural networks with ReLU activations, and denote the activation function by $\relu$. 
\begin{definition}[Equivalence / Realization]

We say that two neural networks $f$ and $g$, possibly with different architectures and/or weights are functionally equivalent on a given domain $\mathcal{D}$, if $\forall \vx\in \mathcal{D}$, $f(\vx) = g(\vx)$. We denote this by $f\equiv g$. 
\end{definition}
This is also the same as saying that $f$ and $g$ have the \emph{same realization} \citep{gribonval2022approximation}, or $f$ realizes $g$ and vice-versa. This work focuses on bounded domains, so $\mathcal{D}=\{\vx | \vx\in \R^d \land \|\vx\|\leq 1\}$ unless stated otherwise.

\paragraph{Linear Algebra.} For convenience, we provide the definitions of Khatri-Rao and the Hadamard products below:

\begin{definition}[Khatri-Rao product]\label{def: KR }
The Khatri-Rao product of two matrices $\rmA\in\bR^{n\times mn},\rmB\in\bR^{m\times mn}$ is defined as the matrix that contains column-wise Kronecker products. Formally,
\begin{equation}
    \rmA\kar\rmB = \begin{bmatrix}
    \rmA_1\kron\rmB_1 &\rmA_2\kron\rmB_2 &\hdots &\rmA_{mn}\kron\rmB_{mn}
    \end{bmatrix}
\end{equation}
where $\rmA\kar\rmB\in\R^{nm\times nm}$ and $\rmA_i,\rmB_i$ denote the $i-th$ column of $\rmA$ and $\rmB$ respectively.
\end{definition}

\begin{definition}[Hadamard Product]\label{def: had}
Let $\rmA\in\bR^{n\times m}$ and $\rmB\in\bR^{n\times m}$; then their Hadamard product is defined as the element-wise product between the two matrices. Formally,
\begin{equation*}
    (\rmA\had\rmB)_{i,j} = (\rmA)_{i,j}(\rmB)_{i,j}
\end{equation*}
\end{definition}

\paragraph{Batch Normalization.} \acl{BN} was introduced by \cite{Ioffe2015}. 
However, over time two major variants have been developed: in the first one, the normalization with batch mean and variance is done before applying the affine transformation with the layer's weight parameters. Using the same notation from \eqref{eq:scaling_transform}, we can write this layer as:
\begin{align*}
    f_{\bm{\Gamma}, \bm{\beta}}(\vx)=\sigma(\rmW ^\top \left(\bm{\Gamma} (\epsilon\rmI + \mSigma_{\text{pre}})^{-1}(\vx -\vmu_{\text{pre}}) +\bm{\beta}\right)+\vb), 
\end{align*}
where the mean $\vmu_{\text{pre}}$ and variance $\mSigma_{\text{pre}}$ are those of the raw inputs to the layer $\vx$. Further, $\mscale$ is the diagonal matrix containing the \acl{BNs} scaling parameters $\batchscale$ and $\mshift$ is the vector containing the \acl{BNs} shifting parameters $\batchshift$.

In the second one, the normalization with batch mean and variance is done before applying the affine transformation with the layer's weight parameters:
\begin{align}
    f_{\bm{\Gamma}, \bm{\beta}}(\vx)
    &=\sigma\left(\bm{\Gamma} (\epsilon\rmI + \mSigma_{\text{post}})^{-1}(\rmW ^\top \vx +\vb
    -\vmu_{\text{post}}) +\bm{\beta}\right)\label{eq:BN_type2}
\end{align}
Consequently, the mean $\vmu_{\text{post}}$ and variance $\mSigma_{\text{post}}$ here are those of the $(\rmW ^\top \vx +\vb)$. \cite{he2016identity} showed that the latter leads to better accuracy. In this paper, we consider scaling transformations of the \emph{second} type.

\section{Main Results}\label{sec:results}

In this work we study the expressive power of the normalization layers of a frozen or randomly initialized neural network. All of our theorems concern the \emph{ expressive power } of such a neural network and don't involve any training. As a result, the mean and variance (calculated over a batch size) are considered to be constants, as they are at inference time. However, we mention here that if one wanted to consider variable weight matrices, means and variances, then we could imagine a sequence of frames/instances with each instance representing a stage of the training process. Then for each of these instances we could apply our results.

We  assume that there exists some target network $g(\data)$ that has \acl{NZls}; then we prove that there exists a choice of the parameters of the normalization layers of another randomly initialized neural network $f(\data)$, such that $f(\data)\equiv g(\data)$.

Our first theorem states that a randomly initialized \acl{relu} network with a factor of $\wid$ overparameterization can reconstruct the target \acl{relu} network $g$.
\begin{theorem}\label{thm:width}Let $g(\data)= \mscale_{\dep}^*\im[\dep]\relu(\mscale_{\dep-1}^*\im[{\dep-1}]\relu(\hdots\relu(\mscale_{1}^*\im[1]\data + \mshift_1^*)+\mshift_{\dep-1}^*) +\mshift_\dep^*$ be any ReLU network with depth $\dep$, where $\data\in\R^d$, $\mscale_i^*\in\mat^{\wid\times \wid}$ and $\im[i]\in\bR^{\wid\times \wid}$ with $\|\im[i]\|\leq 1$ for all $i=1,\hdots \dep$. Consider a randomly initialized  ReLU network of depth $2\dep$ with  \acl{NZls}:
\begin{equation*}
f(\data) = \mscale_{2\dep}\weights_{2\dep}\relu(\mscale_{2\dep-1}\weights_{2\dep-1}\relu(\hdots\relu(\mscale_1\weights_1\data + \mshift_1))+\mshift_{2\dep-1}) + \mshift_{2\dep} 
\end{equation*}
where $\mscale_i\in\mat^{\wid^2\times\wid^2}$, $\weights_i\in\bR^{d^2\times d}$ for all odd $i\in[2\dep]$ and  $\mscale_i\in\bR^{\wid\times \wid}$, $\weights_i\in\bR^{d\times d^2}$ for all even $i\in[2\dep]$. Then one can compute the \acl{NZl} parameters $\mscale_1\dots \mscale_{2l}$ and $\mshift_1\dots \mshift_{2l}$, so that  with probability 1 the two networks are equivalent, 
 i.e., $f(\data)=g(\data)$ for all $\data\in\R^d$ with $\|\data\|\leq 1$.
\end{theorem}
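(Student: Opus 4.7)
The plan is to construct the tunable parameters of $f$ pair-by-pair, realizing the $k$-th layer of the target $g$ with the two-layer block formed by layers $2k-1$ and $2k$ of $f$. Writing out what the block must compute, the parameters need to satisfy
\[
\boldsymbol{\Gamma}_{2k}\mathbf{W}_{2k}\,\sigma\!\bigl(\boldsymbol{\Gamma}_{2k-1}\mathbf{W}_{2k-1}\sigma(\mathbf{y})+\boldsymbol{\beta}_{2k-1}\bigr)+\boldsymbol{\beta}_{2k} \;=\; \boldsymbol{\Gamma}^*_k\mathbf{W}^*_k\sigma(\mathbf{y})+\boldsymbol{\beta}^*_k
\]
for every admissible $\sigma(\mathbf{y})$. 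The first move is to kill the inner ReLU through a ``bias swamp'': because $\|\mathbf{x}\|\leq 1$ and the target weights all have operator norm at most one, the vectors $\sigma(\mathbf{y})$ entering each block stay within an inductively controlled ball, so choosing $\boldsymbol{\beta}_{2k-1}$ coordinatewise large enough forces $\boldsymbol{\Gamma}_{2k-1}\mathbf{W}_{2k-1}\sigma(\mathbf{y})+\boldsymbol{\beta}_{2k-1}\geq 0$ on the whole domain, making the middle $\sigma$ act as the identity.

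Once the middle ReLU is gone, I fix $\boldsymbol{\Gamma}_{2k}=\mathbf{I}$ and write $\boldsymbol{\Gamma}_{2k-1}=\mathrm{diag}(\boldsymbol{\gamma})$ with $\boldsymbol{\gamma}\in\mathbb{R}^{d^2}$. The matrix requirement $\mathbf{W}_{2k}\,\mathrm{diag}(\boldsymbol{\gamma})\,\mathbf{W}_{2k-1}=\boldsymbol{\Gamma}^*_k\mathbf{W}^*_k$, after vectorization, becomes the square linear system
\[
\bigl(\mathbf{W}_{2k-1}^{\top}\odot\mathbf{W}_{2k}\bigr)\boldsymbol{\gamma} \;=\; \mathrm{vec}\bigl(\boldsymbol{\Gamma}^*_k\mathbf{W}^*_k\bigr),
\]
whose coefficient matrix is the $d^2\times d^2$ Khatri--Rao product. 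The leftover bias equation is then closed in one step by setting $\boldsymbol{\beta}_{2k}=\boldsymbol{\beta}^*_k-\mathbf{W}_{2k}\boldsymbol{\beta}_{2k-1}$. Thus the whole argument reduces to solvability of this Khatri--Rao system.

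The main obstacle, and the heart of the proof, is showing that $\mathbf{W}_{2k-1}^{\top}\odot\mathbf{W}_{2k}$ is invertible with probability one. The Khatri--Rao structure introduces multiplicative dependencies among the entries, but its determinant is nevertheless a polynomial in the entries of the two random matrices, so it is enough to exhibit a single value of those matrices for which the determinant is nonzero. I would index the columns by pairs $(a,b)\in[d]^2$ via $c=(a-1)d+b$, set the $c$-th column of $\mathbf{W}_{2k}$ to $\mathbf{e}_a$, and set the $c$-th row of $\mathbf{W}_{2k-1}$ to $\mathbf{e}_b^{\top}$; then the $c$-th Khatri--Rao column is $\mathbf{e}_b\otimes\mathbf{e}_a$, and as $(a,b)$ ranges over $[d]^2$ these cover the standard basis of $\mathbb{R}^{d^2}$, making the matrix a permutation with determinant $\pm 1$. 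Hence the polynomial is not identically zero, its zero set has Lebesgue (and therefore probability) measure zero under any continuous entry distribution, and a union bound over $k\in[l]$ yields simultaneous invertibility almost surely. Combined with the bias-swamp reduction and the inductive bound on $\|\sigma(\mathbf{y})\|$, this delivers $f\equiv g$ on $\{\mathbf{x}:\|\mathbf{x}\|\leq 1\}$ with probability one.
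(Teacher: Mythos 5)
Your proposal is correct and follows essentially the same route as the paper: pair two layers of $f$ per layer of $g$, linearize the intermediate ReLU by choosing the shift parameters large enough on the bounded domain, reduce the weight-matching condition to a square Khatri--Rao linear system in the diagonal scaling, and establish almost-sure invertibility by exhibiting one weight assignment with nonzero determinant and invoking the measure-zero property of zero sets of nonzero polynomials. The only (immaterial) difference is your explicit witness, which makes the Khatri--Rao matrix a permutation matrix rather than the identity used in the paper's Lemma on full-rankness of the Khatri--Rao product.
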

\vspace{-0.4ex}
This result shows that the expressive power of scaling and shifting transformations of random features 
is indeed non-trivial. Recent work by \cite{Pufferfish2021}, and \cite{ lowrank2022kamalakara} has shown experimentally that the weight matrices of a neural network can be factorized to low rank ones and then trained with little to no harm in the accuracy of the model. While in \cref{thm:width}, $f$ needs a width overparameterization of the order of $d$ as compared to $g$, we show that if the weight matrices of the target neural network are in fact factorized and have ranks $r$ each, then $f$ only needs a width overparameterization of the order $r$ as compared to $g$. 
Consider the network $g'$ which is created by factorizing the weight matrices of $g$. Since this can be viewed as a network width depth $2l$, we can apply \cref{thm:width} on $g'$ to get the following corollary.
\begin{corollary}
Consider a randomly initialized four layer \acl{relu} network  with \acl{NZls} and width $\wid\mrank $; then  
by  appropriately choosing the scaling and shifting parameters of the \acl{NZls}, the network can be functionally equivalent with  any one layer network of the same architecture, width $\wid$ and weight matrix of $\mrank< \wid$ with probability $1$.
\end{corollary}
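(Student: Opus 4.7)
The plan is to reduce the corollary to \cref{thm:width} applied to a cleverly factored depth-$2$ target. First, I would factorize the rank-$r$ weight matrix of the target as $W^* = U^* V^*$ with $U^* \in \R^{d \times r}$ and $V^* \in \R^{r \times d}$, rescaled via the SVD so that both factors have operator norm at most $1$. Since $\|\data\| \leq 1$, the pre-activation $V^* \data$ then satisfies $\|V^* \data\|_\infty \leq 1$, so choosing a constant shift $c\vOne$ with $c > 1$ renders $V^* \data + c\vOne$ strictly positive coordinate-wise. Inserting a ReLU at that point acts as the identity, and the constant correction $-\mscale^* U^* c\vOne$ is absorbed into the final bias. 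This rewrites the target $g(\data) = \mscale^* U^* V^* \data + \mshift^*$ exactly as a depth-$2$ ReLU network $g'$ with hidden width $r$, first weight matrix $V^*$, and second weight matrix $U^*$, each still of operator norm at most $1$, so that the hypotheses required by the construction of \cref{thm:width} are met.

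Next, I would apply the construction behind \cref{thm:width} to $g'$. Each target layer of $g'$ is reconstructed by a pair of random layers whose intermediate width equals the product of that target layer's input and output dimensions: $d \cdot r = dr$ for $V^* \in \R^{r\times d}$ and $r \cdot d = dr$ for $U^* \in \R^{d\times r}$. Stacking the two pairs yields a depth-$4$ random ReLU network whose hidden width is $dr$ throughout, and the argument of \cref{thm:width} produces a choice of normalization scales $\mscale_i$ and shifts $\mshift_i$ that reconstructs $g'$, and hence $g$, exactly.

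The main obstacle is that \cref{thm:width} as stated concerns uniformly wide targets with square weight matrices, whereas the two layers of $g'$ are rectangular ($r \times d$ and $d \times r$). The heart of that theorem's argument --- that a certain Khatri--Rao product of random matrices is invertible with probability $1$ --- goes through verbatim in the rectangular setting: the Khatri--Rao product that governs the linear system the \acl{NZl} parameters must solve is a $dr \times dr$ matrix whose determinant is a nontrivial polynomial in the random weight entries, so by absolute continuity of the weight distributions it is almost surely nonzero. Once this rectangular invertibility is in hand, the two reconstructed pairs --- one realizing $V^*$, one realizing $U^*$ --- compose to give the claimed depth-$4$, width-$dr$ reconstruction of $g$ with probability $1$.
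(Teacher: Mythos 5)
Your proposal is correct and takes essentially the same route as the paper: the paper likewise factors $\im = \rmA^*\rmB^*$ via the SVD, uses the first two random layers to realize $\rmB^*$ and the last two to realize $\rmA^*$, each by the linearization-plus-Khatri--Rao argument of \cref{lem: approximation of one layer}, whose supporting lemmas are already stated for the rectangular $n\times nm$, $m\times nm$ case. Your added care in rescaling the two factors to operator norm at most $1$ and in explicitly activating the ReLU sitting between them is a detail the paper leaves implicit, but it is not a different method.
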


The results we presented so far require the width of $f$ to be larger than the width of $g$. This logically leads to the following question: \emph{Can making $f$ deeper help reduce its width?} \cite{frankle2020training} showed experimentally that increasing depth, while keeping width fixed, increases the expressive power of \acl{NZl} parameters. Turns out that this is indeed possible by leveraging skip connections. The following theorem states that any fully connected neural network can be realized by a deeper, randomly initialized neural network with skip connections by only tuning its \acl{NZl} parameters.
\vspace{-0.4ex}
\begin{theorem}\label{th: depth}
Consider the following neural network
\begin{equation*}
    g(\data) = \mscale_{\dep}^*\im[l]\relu(\hdots\relu(\mscale_{1}^*\im[1]\data + \mshift_1^*)) +\mshift_\dep^*
\end{equation*}
where $\mscale_i^*\in\mat^{d\times d},\im[i]\in\mat^{d\times d}$ with $\|\im[i]\|\leq 1$ for all $i=1,\hdots,l$. 
Let the  $j(\slayers +1)$-th layer of the neural network $f$ be 
\begin{align*}
    f^j(\altdata) &= \relu(\weights^{j}_{\slayers+1}L^{j}_{\slayers }) \text{ for } j=1,\hdots, l,\\
    \text{where, }&\altdata\in\R^d,\\
    &\weights^j_{\slayers+1}\in\mat^{d\times dk}, \text{ for } j=1,\hdots,l\\
    &L^{j}_1 = \mscale^{j}_1\weights^{j}_1\altdata^{j}_1 + \mshift^{j}_1,\\
    &L^{j}_{i} = \mscale^{j}_i\weights^{j}_i(\relu\parens{L^{j}_{i-1}}+\ProjInput^{j}_i\altdata^{j}_i) + \mshift^{j}_i\;\text{ for }i = 2,\hdots, \slayers\\
    &\weights^j_i,\mscale^j_i\in\mat^{dk\times dk}, \ProjInput_i^j\in\mat^{dk\times k},\text{ and }\mshift^j_i\in\R^{dk}\text{ for all }i=1,\hdots \slayers, j=1,\hdots,l.
\end{align*}
Here the matrices $\weights_i^j, \forall i,j$ are randomly initialized and then frozen. Then,  with probability 1, one can compute \acl{NZl} parameters so that the two networks are functionally equivalent:
\begin{equation}
    g(\data) = f^\dep(f^{\dep-1}(\hdots f^1(\data))), \forall \|\data\|\leq 1
\end{equation}
Here, the parameter $k$ is tunable and can be any integer in  $[d]$. 
\end{theorem}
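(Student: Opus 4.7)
I would proceed by induction on the block index $j\in\{1,\ldots,\dep\}$, showing that by choosing the normalization parameters of block $f^j$ appropriately, this block realizes the $j$-th layer of $g$; that is, $f^j(\altdata)=\sigma\bigl(\mscale^*_j\,\im[j]\,\altdata+\mshift^*_j\bigr)$ on every admissible input $\altdata$ it receives. Composing these equalities across $j=1,\ldots,\dep$ yields $f^\dep\circ\cdots\circ f^1\equiv g$ on the unit ball. The theorem therefore reduces to the following within-block problem: given the frozen random matrices $\weights^j_1,\ldots,\weights^j_{\slayers+1}$ and projections $\ProjInput^j_i$, find diagonal scalings $\mscale^j_i$ and shifts $\mshift^j_i$ such that
\[\weights^j_{\slayers+1}\,L^j_{\slayers}\;=\;\mscale^*_j\,\im[j]\,\altdata\;+\;\mshift^*_j,\]
after which the outer ReLUs on the two sides agree (entry-wise positivity of the final pre-activation, when required, being enforced by the same bias trick used below).

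\textbf{Linearize, then unroll.} I would first choose the early shifts $\mshift^j_i$ ($i<\slayers$) with sufficiently large positive entries so that each internal pre-activation $L^j_i$ is entry-wise positive on the bounded domain; then every internal ReLU acts as the identity, and any resulting constant offset to $L^j_{\slayers}$ can be absorbed into the remaining bias degrees of freedom. With the internal nonlinearities removed, a short induction unrolls the recursion into the affine form
\[L^j_{\slayers}\;=\;\sum_{t=1}^{\slayers} S_t\,\altdata^j_t\;+\;\vc,\qquad S_t\;=\;\biggl(\prod_{s=t+1}^{\slayers}\mscale^j_s\,\weights^j_s\biggr)\,\mscale^j_t\,\weights^j_t\,\ProjInput^j_t,\]
where $\altdata^j_t\in\R^\chunk$ is the $t$-th size-$\chunk$ chunk of $\altdata$ (with $\ProjInput^j_1$ understood as implicit in the first layer) and $\vc$ depends only on the shifts. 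The target condition then splits into $\slayers$ matrix equations $\weights^j_{\slayers+1}\,S_t=\mscale^*_j\,(\im[j])_{\cdot,\,(t-1)\chunk+1:\,t\chunk}=:T_t$ together with one affine bias constraint $\weights^j_{\slayers+1}\,\vc=\mshift^*_j$.

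\textbf{Solve reverse-sequentially via Khatri-Rao.} The $\slayers$ matrix equations share the diagonals $\mscale^j_{t+1},\ldots,\mscale^j_{\slayers}$, so I would solve them for $t=\slayers$ first and then descend. For $t=\slayers$ only $\mscale^j_{\slayers}$ appears, and vectorizing while exploiting diagonality yields
\[\bigl(\weights^j_{\slayers+1}\kar(\weights^j_{\slayers}\,\ProjInput^j_{\slayers})^{\!\top}\bigr)\,\mathrm{diag}(\mscale^j_{\slayers})\;=\;\mathrm{vec}(T_{\slayers}),\]
a square $d\chunk\times d\chunk$ Khatri-Rao system solvable with probability $1$ by the same argument invoked for Theorem~\ref{thm:width}. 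Inductively, with $\mscale^j_{\slayers},\ldots,\mscale^j_{t+1}$ already fixed, setting $\widetilde{\weights}_t:=\weights^j_{\slayers+1}\,\prod_{s>t}\mscale^j_s\,\weights^j_s$ reduces the $t$-th equation to $\widetilde{\weights}_t\,\mscale^j_t\,\weights^j_t\,\ProjInput^j_t=T_t$, again a Khatri-Rao-type linear system in $\mathrm{diag}(\mscale^j_t)$. Once all scalings are fixed, the under-determined bias constraint $\weights^j_{\slayers+1}\vc=\mshift^*_j$ is solved in the $\slayers\,d\chunk$ shift variables, choosing any particular solution and then adding large common offsets to retain the positivity required by the linearization step.

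\textbf{Main obstacle.} The crux is the almost-sure invertibility of the Khatri-Rao coefficient matrix at each intermediate step $t<\slayers$. Unlike the one-shot situation of Theorem~\ref{thm:width}, the left factor $\widetilde{\weights}_t$ is not a raw random matrix but a rational function of the entries of $\weights^j_{t+1},\ldots,\weights^j_{\slayers+1}$ and $\ProjInput^j_s$ (via the diagonals solved in the higher-indexed steps). Consequently the determinant of the coefficient matrix is a rational function of the joint randomness, and one must show that its numerator does not vanish identically. The plan is to exhibit one explicit setting of the random entries at which the matrix is nonsingular --- for instance, by choosing $\weights^j_t$ and $\ProjInput^j_t$ so that their columnwise Kronecker with $\widetilde{\weights}_t$ reduces to a permutation of a diagonal --- and then invoke Schwartz-Zippel (or an analytic-continuation argument) to conclude that the exceptional locus has measure zero under the continuous joint distribution. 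The independence of $\weights^j_t$ from the randomness already consumed in the higher-$t$ steps is what makes this non-vanishing check tractable, and carefully marshalling this independence across the $\slayers$ layers within a block is the principal new difficulty relative to the wide-network construction of Theorem~\ref{thm:width}.
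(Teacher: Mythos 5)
Your plan follows the paper's proof essentially step for step: the same chunk decomposition of $\im[j]\altdata$ via sub-selection matrices, the same bias-based linearization of the internal ReLUs, the same unrolled sum of telescoping products $\bigl(\prod_{s>t}\mscale_s\weights_s\bigr)\mscale_t\weights_t\ProjInput_t\altdata_t$, and the same backward (last-chunk-first) induction solving a Khatri-Rao system in each diagonal. The ``main obstacle'' you identify is exactly what the paper resolves concretely via \cref{lem: non zero component main} (the already-solved diagonals have no zero entries almost surely, for any fixed nonzero target) combined with the mixed-product identity of \cref{th: mixed products}, which factors each intermediate coefficient matrix into full-rank square Kronecker factors times a Khatri-Rao product of fresh random matrices --- this is the concrete instantiation of the generic-assignment/measure-zero argument you sketch.
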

Note that the parameter $k$ can be used to achieve a trade-off between width and depth. If $k$ is increased, then the width increases, but the depth decreases and vice-versa. Further for the network $f$, note that each matrix in \cref{thm:width} has dimensions ${d^2\times d}$ or ${d\times d^2}$, for a total of $\cO(d^3l)$ parameters in the network, while in \cref{th: depth} we have matrices of dimensions ${d\times dk}$ or ${dk\times dk}$. If we set $\chunk =1$ the width of $f$ matches that of $g$, however it can be checked that the total number of parameters in $f$ still remains $\cO(d^3 l)$, which is the same as that from \cref{thm:width}.
\begin{remark}
 Theorem \ref{thm:width} and Theorem \ref{th: depth} assume that all layers of $g$ have width $d$, the same as input dimension. However, all the results in the paper can be extended to the general case  where all the layers of $g$ can have different widths. We omit that setting for ease of exposition. 
\end{remark}
Our last result concerns the total number of parameters that the model entails. The question we tackle here is the sparsification of the random matrices of the network in \cref{thm:width}.  
Formally,
\begin{theorem}\label{th: sparse} For the setting of \cref{thm:width}, consider that  the random matrices of each of the layers are sparsified with probability $p=\Theta(\sqrt{\log{\wid}/\wid})$, meaning that each of their elements is zero with probability $1-p$. If the depth of the network is polynomial in the input, \ie $\dep = \text{poly}(\wid)$ then the results of \cref{thm:width} hold with probability at least $1-1/d$. 
\end{theorem}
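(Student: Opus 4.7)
The plan is to preserve verbatim the construction used in the proof of \cref{thm:width} and to replace only its single probabilistic ingredient. That construction computes the tunable \acl{NZl} parameters at layers $2\layer-1,2\layer$ by solving a linear system whose coefficient matrix is a Khatri--Rao product of the form $\weights_{2\layer-1}\kar \weights_{2\layer}^\top\in\R^{\wid^2\times \wid^2}$; if every such matrix is invertible, the rest of the argument is purely algebraic and carries over unchanged. Thus \cref{th: sparse} reduces to proving that each single sparse Khatri--Rao product is invertible with probability at least $1-\wid^{-C}$, where $C$ exceeds the degree of the polynomial bounding $\dep$, followed by a union bound across the $\dep$ pairs of layers.

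To analyse one such product I would first record the entry-wise density. Each of its $\wid^2\cdot \wid^2$ entries is a product of two independent Bernoulli$(p)$-masked continuous weights and is therefore nonzero with probability $p^2=\Theta(\log\wid/\wid)$. The number of nonzeros in column $j$ factorises as a product of two independent $\mathrm{Bin}(\wid,p)$ counts (the nonzeros of $\weights_{2\layer-1}[:,j]$ and of $\weights_{2\layer}[j,:]$), so a Chernoff bound gives $\Theta(\wid\log\wid)$ nonzeros per column and, symmetrically, per row with probability $1-\wid^{-\omega(1)}$. In particular, the expected number of all-zero rows or columns is $\wid^2(1-p^2)^{\wid^2}=\wid^2 e^{-\Theta(\wid\log\wid)}$, which is super-polynomially small.

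Invertibility would then be established in two steps. Combinatorially, the bipartite graph in which row $(a,b)$ is joined to column $j$ whenever the two underlying masks are both on admits a perfect matching with probability at least $1-\wid^{-C}$ for any prescribed $C$, by tuning the hidden constant in $p=\Theta(\sqrt{\log \wid/\wid})$ and invoking a Hall-type argument. Analytically, conditioning on $\weights_{2\layer-1}$ (both its support and its continuous values) and treating the nonzero entries of $\weights_{2\layer}$ as the remaining independent continuous variables, the determinant of $\weights_{2\layer-1}\kar \weights_{2\layer}^\top$ becomes a polynomial in those variables; a matching in the support pattern exhibits a Leibniz monomial whose coefficient is a nonvanishing product of entries of $\weights_{2\layer-1}$, so the polynomial is nonzero and continuity of the entry distribution forces the determinant to be nonzero almost surely on that event. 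Union-bounding over the $\dep=\mathrm{poly}(\wid)$ layers then yields the stated $1-1/\wid$ success probability.

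The main obstacle I anticipate is the Hall-condition argument: the bipartite support graph has strong intra-column dependencies (the edges from $(a,b_1)$ and from $(a,b_2)$ to $j$ both require $\weights_{2\layer-1}[a,j]\neq 0$), so classical invertibility results for sparse matrices with independent entries, and standard matching results for Erd\H{o}s--R\'enyi graphs, do not apply directly. I plan to decouple the analysis by conditioning on the two mask matrices separately and bounding, via a union bound over potentially Hall-violating vertex subsets, the probability that a set of rows has deficient neighbourhood; the product structure of the masks lets each subset probability be written in closed form as a power of $(1-p^2)$, and the calibration of the constants in $p=\Theta(\sqrt{\log\wid/\wid})$ is what ultimately drives the failure probability down to $\wid^{-C}$.
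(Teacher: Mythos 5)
Your overall architecture matches the paper's: reduce to invertibility of each sparse Khatri--Rao product, observe that invertibility is (a.s.) equivalent to a purely combinatorial property of the two Bernoulli masks, and finish with a union bound over the $\mathrm{poly}(\wid)$ layers. Your ``analytic'' step --- a permutation supported on the mask pattern yields a surviving Leibniz monomial because distinct permutations of $\weights_1\kar\weights_2^\top$ produce distinct monomials in the underlying variables --- is sound and is essentially the paper's Lemma~\ref{lem:Boolean nonzero}, which phrases the same equivalence via a ``Boolean determinant'' (which equals $1$ exactly when the support admits a perfect matching).

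The genuine gap is in the combinatorial core, and it is not a detail: it is the entire difficulty of the theorem. You propose to verify Hall's condition for the support graph by a union bound over deficient subsets, asserting that ``the product structure of the masks lets each subset probability be written in closed form as a power of $(1-p^2)$.'' That is false, for exactly the dependency you yourself flag: for two rows $(a,b_1)$ and $(a,b_2)$ and a column $j$, the probability that neither is adjacent to $j$ is $(1-p)+p(1-p)^2$, not $(1-p^2)^2$, because both non-adjacency events are implied by $\rmM_1[a,j]=0$. These positive correlations are strongest precisely for row sets concentrated in a single block $\{(a,b):b\in[d]\}$, and that worst case is what forces $p=\Theta(\sqrt{\log\wid/\wid})$ rather than the $\Theta(\log\wid/\wid)$ that your entrywise-density heuristic suggests; a Hall argument that ignores it would ``prove'' too much. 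Your proposal names this obstacle but does not resolve it, so the key lemma is missing. The paper circumvents it differently: Lemma~\ref{lem:substitute col} is a stochastic comparison showing that, provided $p\ge\sqrt{2q\wid}$, replacing one column of $\rmM_1\kar\rmM_2$ by i.i.d.\ $\mathrm{Bern}(q)$ entries can only increase the probability of a zero Boolean determinant; iterating over columns (which are mutually independent) reduces the problem to an i.i.d.\ Bernoulli matrix, to which Theorem~\ref{th: Bernoulli} (Basak--Rudelson) applies, with an OR-of-independent-copies trick to boost the success probability before the union bound. The proof of that comparison lemma isolates exactly the single-block worst case ($(1-p)^h(2-(1-p)^h)$ versus $(1-q)^{2\wid h}$), which is the computation your sketch would also have to carry out, in some form, to close the argument.
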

\begin{remark}
This sparsification results in a total number of $\tilde{O}(d^2\sqrt{d} l)$ non-zero parameters in $f$. However, we don't think that this result is tight. We believe that it can be improved to a higher sparsity of $p=\Theta(\log{\wid}/\wid)$, which would result in the total number of non-zero parameters being $\tilde{O}(d^2 l)$. 
\end{remark}

\section{Our Techniques}
In general, the derivation of our results rely heavily on the invertibility of the Khatri-Rao product, the establishment of full-rankness of matrix multiplications and non-degeneracy of them, by exploiting the randomness of the weight matrices.
\subsection{Reconstruction with overparameterization}\label{sec: wide}

 We prove our first result, \cref{thm:width}, by building a layer-by-layer reconstruction of $g$: For the $i$-th layer of $g$, we construct the $2i-1$ and $2i$ layers of $f$. We use the shifting parameters $\mshift_{2i-1}$ of $f$ to activate all the \acl{relu}s of the $2i-1$-th layer of $f$ and then use the shifting parameters of the next layer, $\mshift_{2i}$, to cancel out any extra bias introduced by $\mshift_{2i-1}$. Finally, we use $\mscale_{2i-1}$ to  reconstruct the targeted matrix. $\mscale_{2i}$ could be set to any arbitrary value (full-rank) diagonal matrix, but for the sake of convenience, we just set it to be the identity matrix.

\paragraph{Proof Sketch.} We prove  that  the first layer of $g$ and the first two layers of $f$ are functionally equivalent. Without loss of generality, the same proof can be applied to all subsequent layers of $g$ and $f$. Thus, it is sufficient to show that  $\mscale_2\weights_2\relu(\mscale_1\weights_1\data + \mshift_1) +\mshift_2 = \im[1]\data$ for all $\vx$ with $\|\vx\|\leq 1$.

\paragraph{Step 1.} We set the parameters of  $\mshift_1$ large enough so that all \acl{relu}s of the first layer of $f$ are activated. For this, we only need that the parameters of the first \acl{NZl} satisfy $\batchshift_1^i =\Omega(|\batchscale_1^i|)$\footnote{See the proof of  \cref{th: width app} for the exact constant.}for each neuron $i$. Next, we simply set $\mscale_2 := \id_{d}$ and  $\mshift_2 := - \weights_2\mshift_1$. Substituting these we get that $\mscale_2\weights_2\relu(\mscale_1\weights_1\data + \mshift_1) +\mshift_2 =\weights_2\mscale_1\weights_1\data$. Thus, now we need to show that the system $\weights_2\mscale_1\weights_1\data = \im[1] \data$ has a solution with respect to $\mscale_1$. %
As we will see, this solution will, in fact, be unique.

\paragraph{Step 2.} We use the following lemma to construct a $\mscale_1$ that satisfies the equality above.
\begin{lemma}\label{lem: solution with khatri-rao}
Let $\rmC\in\mat^{n\times m}$, $\rmX\in\mat^{nm\times nm}$ be a diagonal matrix, $\rmB\in\mat^{nm\times m}$ and $\im\in\mat^{n\times m}$. Then, the following  holds for the equalities below
\begin{equation*}
    \rmC\rmX\rmB = \im \iff (\rmC\kar\rmB^T)\mathrm{vecd}(\rmX) = \mathrm{vec}(\im),
\end{equation*}
where $\mathrm{vecd}$ denotes the vectorization of the diagonal of a matrix, and $\mathrm{vec}$ denotes the row major vectorization of a matrix.
\end{lemma}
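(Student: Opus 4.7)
The plan is to prove the equivalence by expanding both sides entry-wise and observing that the diagonal structure of $\rmX$ collapses the bilinear expression $\rmC\rmX\rmB$ into a form that matches exactly the column-wise Kronecker structure of the Khatri--Rao product. Concretely, writing $\rvx = \mathrm{vecd}(\rmX) \in \R^{nm}$, the diagonal assumption on $\rmX$ gives, for each $(i,j)\in[n]\times[m]$,
\begin{equation*}
(\rmC\rmX\rmB)_{ij} \;=\; \sum_{k=1}^{nm} C_{ik}\, X_{kk}\, B_{kj} \;=\; \sum_{k=1}^{nm} C_{ik}\, B_{kj}\, x_k.
\end{equation*}
So $\rmC\rmX\rmB = \im$ is equivalent to the system of $nm$ scalar equations $\sum_k C_{ik} B_{kj}\, x_k = \im_{ij}$.

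The second step is to rewrite the right-hand equation $(\rmC\kar\rmB^T)\rvx = \mathrm{vec}(\im)$ entry-wise. By the definition of the Khatri--Rao product, the $k$-th column of $\rmC\kar\rmB^T$ is $\rmC_k \kron (\rmB^T)_k$, where $\rmC_k$ is the $k$-th column of $\rmC$ and $(\rmB^T)_k$ is the $k$-th row of $\rmB$ written as a column. Using the row-major indexing of $\mathrm{vec}$, the entry indexed by $(i-1)m + j$ of the $k$-th column of $\rmC \kar \rmB^T$ is $C_{ik}\, B_{kj}$. Thus, for each $(i,j)$, the $((i-1)m+j)$-th row of the equation $(\rmC\kar\rmB^T)\rvx = \mathrm{vec}(\im)$ reads
\begin{equation*}
\sum_{k=1}^{nm} C_{ik}\, B_{kj}\, x_k \;=\; \im_{ij},
\end{equation*}
which is precisely the scalar identity from the first step. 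Matching the two collections of scalar equations yields the claimed equivalence in both directions.

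The main work, and the only place where one has to be careful, is fixing conventions: specifically, that $\mathrm{vec}$ is row-major while the Kronecker product $\rvu \kron \rvv$ is indexed so that $(\rvu \kron \rvv)_{(i-1)m+j} = u_i v_j$. Once these are aligned, no nontrivial linear algebra is required; the proof is essentially a bookkeeping argument that exploits how diagonal inner matrices turn trilinear index sums into single sums against the diagonal entries, exactly the behaviour encoded by the Khatri--Rao operator. I would present the two entry-wise derivations as short displays and conclude with one sentence noting that the two systems of $nm$ equations coincide, giving the iff.
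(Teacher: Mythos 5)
Your proposal is correct and follows essentially the same route as the paper's proof: expand $\rmC\rmX\rmB$ entry-wise using the diagonal structure of $\rmX$, stack the $nm$ scalar equations in row-major order, and recognize the coefficient matrix as $\rmC\kar\rmB^{T}$. The only difference is presentational (index formulas versus the paper's displayed block matrices), and you correctly read the dimensions as $\rmC\in\mat^{n\times nm}$, which is what the statement intends (the main-text version has a typo that the appendix restatement fixes).
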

The lemma above implies that if $\weights_2\kar\weights_1^T$ is invertible, then we can find unique $\mscale_1$\footnote{Since the matrix $\weights_2\kr\weights_1$ is full rank, the vector $\mathrm{vec}(\im)$ will be a linear combination of its columns; from the well-known theorem of Rouch\'e-Kronecker-Capelli (see \cref{thm:Capelli}) the solution will be unique.} that satisfies $\weights_2\mscale_1\weights_1 =\im[1]$. The following lemma says that with probability 1, this is true.
\begin{lemma}\label{lem: KR full rank } Let $\rmA\in\mat^{n\times nm}$, $\rmB\in\mat^{m\times nm}$ be two random matrices, whose elements are drawn independently from a continuous distribution; then, their Khatri-Rao product , $\rmA\kar\rmB$, is full rank with probability $1$.
\end{lemma}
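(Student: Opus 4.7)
The plan is to use the standard polynomial non-vanishing trick. Since $\rmA \kar \rmB$ is a square matrix of size $nm \times nm$, it has full rank if and only if $\det(\rmA \kar \rmB) \neq 0$, and this determinant is a polynomial $P$ in the $nm(n+m)$ entries of the pair $(\rmA, \rmB)$. By the classical fact that the zero locus of a nonzero polynomial has Lebesgue measure zero in Euclidean space, combined with the hypothesis that the entries are drawn i.i.d.\ from a continuous distribution (so the joint law is absolutely continuous with respect to Lebesgue measure), it suffices to exhibit a single instantiation of $(\rmA, \rmB)$ at which $P$ is nonzero.

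For the witness, I would pick any bijection $\phi \colon [nm] \to [n] \times [m]$ and, for each column index $c \in [nm]$ with $\phi(c) = (i_c, j_c)$, set the $c$-th column of $\rmA$ equal to the standard basis vector $\ve_{i_c} \in \R^n$ and the $c$-th column of $\rmB$ equal to $\vf_{j_c} \in \R^m$. By Definition~\ref{def: KR }, the $c$-th column of $\rmA \kar \rmB$ is then $\ve_{i_c} \kron \vf_{j_c}$. As $c$ ranges over $[nm]$, the pairs $(i_c, j_c)$ enumerate all of $[n]\times[m]$ exactly once, so the resulting $nm$ Kronecker products traverse the standard basis of $\R^{nm}$ without repetition. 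Hence $\rmA \kar \rmB$ is a permutation matrix, with $|\det(\rmA \kar \rmB)| = 1$, certifying that $P$ is not identically zero.

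Combining the two observations closes the proof: the event that $\rmA \kar \rmB$ is rank-deficient is precisely the zero set of the nonzero polynomial $P$ in the entries of $(\rmA, \rmB)$, which has Lebesgue measure zero, so under the absolutely continuous joint law it has probability zero. The only real content of the argument is choosing the permutation-matrix witness above; once that is in hand, the polynomial structure of the determinant and the measure-zero step are entirely standard, so I do not foresee any substantive obstacle. The same template will be reused later to transfer full-rankness claims about random matrices appearing elsewhere in the paper, so it is worth stating the argument in this clean polynomial form.
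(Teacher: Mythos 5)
Your proposal is correct and is essentially the paper's own argument: the paper likewise reduces to showing the determinant is not the zero polynomial via the measure-zero lemma, and exhibits a witness by filling the columns of $\rmA$ and $\rmB$ with standard basis vectors so that $\rmA\kar\rmB$ becomes the identity matrix (a special case of your permutation-matrix construction). No substantive difference.
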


For some intuition regarding the proof of this lemma, consider $\rmA$ to be an $nm\times nm$ matrix, whose elements are drawn from any continuous distribution independently. It is easy to see that this matrix is full rank, since the columns of this matrix are independent vectors, and the entries of vectors themselves come from a product distribution over $\bR^{nm}$. For the case of the Khatri-Rao product $\rmA\kar\rmB$, where $\rmA\in\R^{n\times nm}$ and $\rmB\in\R^{m\times nm}$ however, we only have $(n+m)nm$ `free' random variables, instead of $(nm)^2$. However, writing down the expression for the determinant as a polynomial in the  random variables, we see that certain terms have independent coefficients from others. This helps us prove that the probability of the determinant being zero is zero. 

\paragraph{Step 3.} Repeating the same proof above for all the layers of $g$ and the corresponding layers of $f$ proves a layer-wise equivalence between $f$ and $g$. Since $f$ and $g$ are just the composition of these layers, proving the equivalence layer-wise proves that $f\equiv g$.

\subsection{Width/Depth tradeoff}\label{sec: deep}

The architecture we consider is as follows (see \cref{fig:deep}): Let $\data\in\R^d$ denote the input to the layer $i$ of $g$. Then for constructing the corresponding layers of $f$ which are functionally equivalent to the the layer $i$ in $g$, we first partition $\data\in\R^d$ into blocks of size $\chunk$, which we denote with $\data_1,\hdots,\data_{\slayers}$. 
 Each of these blocks is then passed to a different layer through the skip connections, with each layer having width $\wid\chunk$. 
 \begin{wrapfigure}[14]{r}{0.6\textwidth}
\centering
\includegraphics[trim={16em 20em 40em 18em}, clip, width=\linewidth]{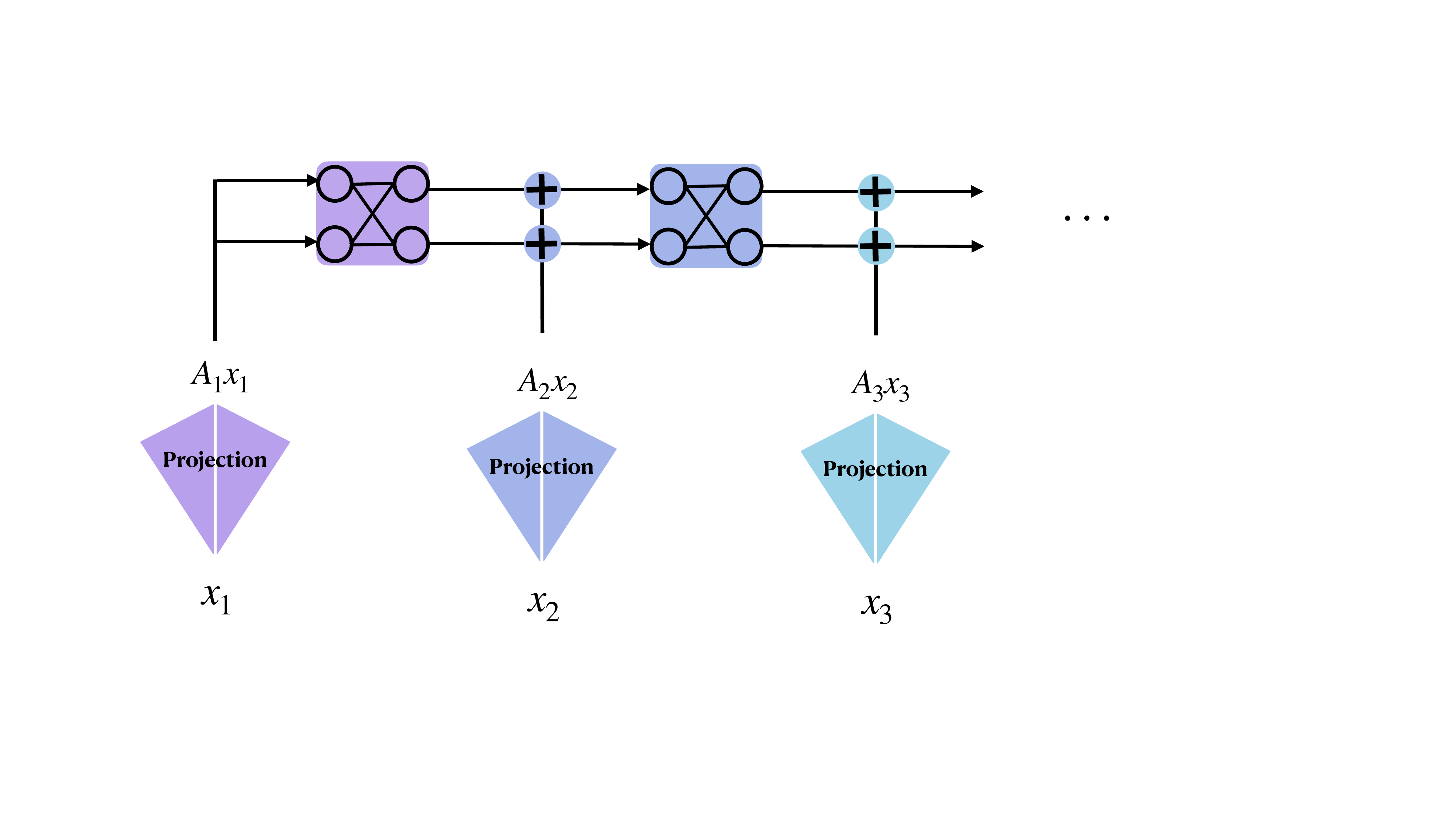}
\caption{A visualization of the skip connections.  $\data_i$ is the $i$-th partition of the input $\data$, which is then projected through the matrices $\rmA_i$ to correct its dimension. }
\label{fig:deep}
\end{wrapfigure}

At the end of $\slayers$ layers, a final linear layer is added to ``correct'' the dimensions and match them 
with the ones of the target network. 
This new architecture is applied to substitute each of the layers of the target network, as is illustrated in \cref{fig:main_results}.

\paragraph{Skip connections.}  As explained earlier, we break the input $\data$ into chunks of size $k$. Thus for example the first chunk, $\data_1 = \begin{bmatrix}
    x_1, \hdots ,x_k
\end{bmatrix}$. This can be seen as a projection to a $k$ dimensional subspace. However, as we mentioned each layer of this construction will have dimension $dk$, we thus use now a random matrix $\rmA_i\in\R^{k\times dk}$, $i=1,\hdots \slayers$  to project the input and match the dimensions.

\paragraph{Proof Sketch.} 
 The idea is to partition the input, pass a different piece of the input in each individual layer and try to exactly reconstruct the parts of the target matrix $\im$ that correspond to the specific piece. Let $\chunk $ be the size of each partition and so in total we need to have $\slayers$ layers. More specifically, we can rewrite $\im$ as

\begin{equation}\label{eq: summation main}
  \vspace{2ex}
  \im  \data= \im \Subsel^{T}_{1}\Subsel_1 \data + \im\Subsel_2^T\Subsel_2\data+\hdots+\im\Subsel_{\slayers}^T \Subsel_{\slayers}\data  
\end{equation}
where $\Subsel_i\in\mat^{k\times d}$ is a sub-selection matrix and specifically
$    \Subsel_i = \begin{bmatrix}
      \zero &\rvline & \id_{\chunk} &\rvline &\zero
    \end{bmatrix}$
where the $k\times k$ identity matrix occupies the $(i-1)\chunk +1, \dots ,i\chunk$ columns. 

Having this in mind, we will use each one of the layers with the skip connections to approximate a specific part of the input, which %
is then projected to a $d\chunk$-dimensional vector  through a random matrix $\ProjInput_i$ %
to match the dimensions of the input. The output of the Neural Network is 
\begin{equation}
    f(\data) = \weights_{\slayers+1}L_{\slayers }
\end{equation}
where $L_1 = \mscale_1\weights_1\data_1 + \mshift_1$ and
$L_{i} = \mscale_i\weights_i[\relu\parens{L_{i-1}}+\ProjInput_i\data_i] + \mshift_i$ for $i = 2,\hdots, \slayers-1$. %

\paragraph{Linearization.} We will set the parameters $\mshift_i$, $i=1,\hdots,\slayers-1$ to be such that all \acl{relu}s are activated as we did in the proof of \cref{thm:width}, while we set $\mshift_{\slayers}$ appropriately to cancel out any error created.

\paragraph{Segmentation.} After the previous step has been completed, the output of the network becomes 
\begin{equation}\label{eq:deep net main}
    f(\data) = \weights_{\slayers+1}\parens*{\prod_{i=1}^{\slayers}\mscale_i\weights_i\ProjInput_1\data_1+ \prod_{i=2}^{\slayers}\mscale_i\weights_i\ProjInput_2\data_2+\hdots +\mscale_{\slayers}\weights_{\slayers}\ProjInput_{\slayers}\data_{\slayers}}
\end{equation}
We now use each of the terms in the summation above to reconstruct the corresponding term in \cref{eq: summation main}. Thus we need to solve a system of $\slayers$ linear systems. 
We show that this system is indeed feasible, by using  induction. 

\paragraph{Induction.} For the base case, we start with the last equation and we show that the linear system
\begin{equation}
    \weights_{\slayers+1}\mscale_{\slayers}\weights_{\slayers}\ProjInput_{\slayers} = \im[\slayers]
\end{equation}
has a unique solution with probability $1$. This is a corollary of the proof of \cref{thm:width}. 

Once the solution for this equation has been found, the idea is to move on to the next layer. One key detail is that we need to prove that all elements of $\mscale_{\slayers}$ will be non-zero with probability one. This is because $\mscale_{\slayers}$ and any $\mscale_i$ appear in \cref{eq:deep net main} the products corresponding to all previous layers. Thus, if one of the elements of these matrices is zero, full-rankness and hence a solution to the corresponding linear systems cannot be guaranteed.
\begin{lemma}\label{lem: non zero component main} Let $\rmA\in\mat^{n\times nm}$, $\rmC\in\mat^{m\times nm}$ be random matrices, $\rmD\in\mat^{nm\times nm}$ a diagonal fixed matrix with non-zero elements and $\iw$ a fixed vector which is  not the zero vector. Define $\rmB = \rmA\kar(\rmC\rmD)$, then
\begin{equation}
    \probof{\inner{\ermB^{-1}_i}{\iw} = 0} = 0
\end{equation}
where $\ermB^{-1}_i$ is the $i-$th row of the inverse of $\rmB$.
\end{lemma}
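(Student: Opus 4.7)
The plan is to reduce the statement to the invertibility of the plain Khatri-Rao product $\rmA \kar \rmC$ (which is Lemma \ref{lem: KR full rank }) and then pin down a single coordinate of $\rmE^{-1}\iw$ via Cramer's rule plus a polynomial non-vanishing argument. First I would peel off the diagonal matrix $\rmD$: because $\rmD$ is diagonal, the $j$-th column of $\rmC\rmD$ equals $D_{jj}\rmC_j$, so the $j$-th column of $\rmA\kar(\rmC\rmD)$ is $D_{jj}(\rmA_j\kron\rmC_j)$; equivalently, $\rmB=(\rmA\kar\rmC)\rmD$, and therefore $\rmB^{-1}=\rmD^{-1}(\rmA\kar\rmC)^{-1}$. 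Since $\rmD^{-1}$ is diagonal with nonzero entries $1/D_{ii}$, the $i$-th row of $\rmB^{-1}$ is a nonzero scalar multiple of the $i$-th row of $(\rmA\kar\rmC)^{-1}$. So it suffices to prove the statement with $\rmD$ replaced by the identity; let $\rmE:=\rmA\kar\rmC$.

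Next I would translate the inner-product condition into a single coordinate. Setting $\vz:=\rmE^{-1}\iw$, one has $\langle (\rmE^{-1})_i,\iw\rangle = z_i$. By Cramer's rule,
\begin{equation*}
    z_i\;=\;\frac{\det(M_i)}{\det(\rmE)},
\end{equation*}
where $M_i$ denotes $\rmE$ with its $i$-th column replaced by $\iw$. Lemma \ref{lem: KR full rank } makes $\det(\rmE)\neq 0$ with probability one, so what is left is to prove $\det(M_i)\neq 0$ almost surely. Observe that $\det(M_i)$ is a polynomial in the entries of the columns $\{\rmA_j,\rmC_j:j\neq i\}$ (the $i$-th column of $M_i$ is the fixed vector $\iw$), and these entries come from a continuous joint distribution; so it is enough to exhibit one assignment of those columns that makes this polynomial nonzero, and the almost-sure conclusion then follows from the standard fact that a nonzero polynomial vanishes on a set of measure zero under any continuous density.

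For the witness, the plan is to exploit the fact that $\{e_a\kron e_b:a\in[n],\,b\in[m]\}$ is exactly the standard basis of $\R^{nm}$. Since $\iw\neq 0$, some coordinate of $\iw$ is nonzero, say the one indexed by a pair $(a^*,b^*)$. Fix any bijection between the $nm-1$ remaining pairs $(a,b)\neq(a^*,b^*)$ and the indices $j\in[nm]\setminus\{i\}$, and under this bijection set $\rmA_j:=e_a$ and $\rmC_j:=e_b$, so that the $j$-th column of $M_i$ equals $e_a\kron e_b$. These $nm-1$ Kronecker basis vectors span the coordinate hyperplane orthogonal to $e_{a^*}\kron e_{b^*}$, while the $i$-th column $\iw$ has a nonzero component along $e_{a^*}\kron e_{b^*}$; hence all $nm$ columns of $M_i$ are linearly independent and $\det(M_i)\neq 0$ for this choice. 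The main subtlety is that the Khatri-Rao structure forces the non-$\iw$ columns of $M_i$ to be Kronecker products rather than arbitrary vectors, but this constraint is exactly compatible with realizing every standard basis vector of $\R^{nm}$, which is all the witness construction needs.
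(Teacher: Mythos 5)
Your proposal is correct and follows essentially the same route as the paper's proof: both reduce $\probof{\inner{\ermB^{-1}_i}{\iw}=0}$ to showing that the determinant of $\rmA\kar\rmC$ with one column replaced by $\iw$ is almost surely nonzero (you via Cramer's rule, the paper via the cofactor expansion, which is the same identity), and both then exhibit the same standard-basis witness assignment for $\rmA,\rmC$ and invoke the measure-zero lemma for zero sets of nonzero polynomials. Your preliminary step of factoring out $\rmD$ as $\rmB=(\rmA\kar\rmC)\rmD$, and your explicit choice of which Kronecker basis vector to omit so that it aligns with a nonzero coordinate of $\iw$, are minor streamlinings of what the paper handles by carrying $\rmD$ through and by a ``without loss of generality'' on the index of the nonzero entry.
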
   

For the inductive step, we assume that all components up to (and not included) the $i-th$ component have been exactly reconstructed and $\mscale_j$, $j=\slayers,\hdots, i-1$ have non-zero elements with probability one; and then we reconstruct the $i$-the component and show that $\mscale_i$ also has only non-zero diagonal elements with probability one.

\subsection{Reconstruction of Sparse Networks}\label{sec:sparsify}
   
The proof of this result has two components to it. Our target is to show that if the matrices involved in the Khatri-Rao product appeared in \cref{sec: wide} are sparsified randomly, then with high probability the matrix created is still full rank. We  first show that as long as both matrices have i.i.d. entries from a continuous distribution, the sparsity pattern that is created  dictates whether the matrix is non-singular or not. Then, we prove bounds on the non-singularity of Khatri-Rao products of random Boolean matrices by reducing the problem to the problem of non-singularity of random Boolean matrices with i.i.d. elements.
\paragraph{Proof Sketch.} We now provide  more details of the steps we follow to prove this result. The sparsification of each of the random weight matrices can be expressed as their Hadamard product with a random Bernoulli matrix, \ie $(\weights\had \rmM)$, where $\weights$ has each entry i.i.d. from a continuous distribution and $\rmM$ is a Boolean matrix. From the proof technique presented in \cref{sec: wide}, we conclude that it is sufficient to show that the probability that the matrix $(\weights_1\had\rmM_1)\kar(\weights_2\had\rmM_2)$ is \emph{not} invertible is very small. We assume that the elements of $\rmM_1,\rmM_2$ are \emph{i.i.d.} $\text{Bern}(p)$\footnote{We say that  $x\sim \text{Bern}(p)$, if $\probof{x=1} = p$ and $\probof{x=0} =1-p$ }. Our target is to find a value of the probability $p$ that ensures small enough probability of non-invertibility of the Khatri-Rao product.

\paragraph{Step 1.}  
As mentioned  above, we will relate the invertibility of $(\weights_1\had\rmM_1)\kar(\weights_2\had\rmM_2)$ to the invertibility of $\rmM_1\kar\rmM_2$. To do so, we first introduce the notion of the \emph{Boolean determinant} for a Boolean matrix.
\begin{definition}\label{def: boolean determinant}
The Boolean determinant of a  Boolean matrix $\rmB\in\mat^{d\times d}$ is recursively defined as follows:
\begin{itemize}
\item if $d=1$, $\bdet (\rmB):=\ermB_{1,1}$, that is, it is the only element of the matrix, and
    \item if $d>1$, $
    \bdet (\rmB):=\max_{i\in \{1,\dots, d\}}(\ermB_{i,1}\bdet(\rmB_{-(i,1)}))$.
\end{itemize}    
where $\ermB_{i,j}$ is the element of $\rmB$ at $i$-th row and $j$-th column, and $\rmB_{-(i,j)}$ is the sub-matrix of $\rmB$ with $i$-th row and $j$-th column removed. Note that $\bdet(\rmB)\in \{0,1\}$ for any Boolean matrix $\rmB$. 
\end{definition}
We start by proving the following: $$
 \det\parens{(\weights_1\had\rmM_1)\kar(\weights_2\had\rmM_2)} = 0 \Leftrightarrow \bdet(\rmM_1\kar\rmM_2)=0$$
To prove this, note that despite the fact that the Khatri-Rao product matrix $(\weights_1\had\rmM_1)\kar(\weights_2\had\rmM_2)$ no longer has independent entries, its columns are still independent. Hence, while writing the expression for determinant computed along any column, the corresponding sub-determinants are independent of the column. This combined with the fact that the entries of $\weights_1$ and $\weights_2$ come from continuous distributions, can be used to show the equivalence above.
  
 \paragraph{Step 2.}  Thus, next we need to prove that the Boolean determinant of $\rmM_1\kar\rmM_2$ is zero with a small probability.
 In order to do so, we start replacing each column of $\rmM_1\kar\rmM_2$ with a new vector of equal length, which has $\text{Bern}(q)$ i.i.d. elements. Let this new matrix be $\rmM$. We want to find the value of $q$ as a function of $p$ such that the following holds: $$\probof{\bdet(\rmM_1\kar\rmM_2) =0}\leq \probof{\bdet{\rmM}=0}.$$
 Once we find such a $q$, we can keep replacing each column of $\rmM_1\kar\rmM_2$ to get a matrix $\rmM'$ which all of its entries  are  i.i.d.  $\text{Bern}(q)$ variables, and for which $\probof{\bdet(\rmM_1\kar\rmM_2) =0}\leq \probof{\bdet{\rmM'}=0}$. We can then use existing results on the singularity of Boolean matrices with i.i.d. entries. 

 \paragraph{Step 3.} We show that if $p\geq \sqrt{2qd}$ then the requirement above is satisfied. To prove this, we first show that it is sufficient to prove the following, stronger statement: If $\ru_0,\dots,\ru_{d-1}$, $\rv_0,\dots,\rv_{d-1}\sim \text{Bern}(p)$ and are sampled independently and $\rx_0,\dots,\rx_{d^2-1}\sim \text{Bern}(q)$ are also independent with $p\geq \sqrt{2qd}$; then, for any subset $\cS\subset \{0,\dots, d^2-1\}$,
\begin{align*}
    \bP\left(\sum_{i\in \cS}\ru_{\lfloor i/d \rfloor}\rv_{i\Mod{d}}=0\right)\leq \bP\left(\sum_{i\in \cS}\rx_{i}=0\right)
\end{align*}
This proof requires probabilistic arguments and is deferred to the Appendix. 

\paragraph{Step 4.} 
After this result has been established, we use the following theorem to get bounds on probability of non-singularity of the i.i.d. Boolean entries matrix created in the previous step:
\begin{theorem}[Theorem 1.1 in \citep{Basak2018}]\label{th: Bernoulli}
Let $\rmA\in\mat^{d\times d}$ be an a matrix with \emph{i.i.d.} $Ber(p)$ entries.  Then,  there exist absolute constants $0 < c, \bar{c}, C <\infty$ such that for any $\eps>0$, and $\rho$ such
that $n\rho \geq \log(1/\rho)$, we have
\begin{equation*}
    \prob\parens*{\braces*{\left(s_{min}(\rmA)\leq \bar{c}\eps\exp\left(-C\dfrac{\log(1/\rho)}{\log(n\rho)}\sqrt{\dfrac{\rho}{n}}\right)\right)\cap\Omega_o^c}}\leq \eps + n^{-{c}}
\end{equation*}
where $s_{min}$ is the minimum singular value of the matrix $\rmA$ and $\Omega_0$ is the event that any column or some row is identical to $0$. 
\end{theorem}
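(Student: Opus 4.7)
The plan is to follow the Rudelson--Vershynin framework for the least singular value of random matrices, suitably adapted to the sparse Bernoulli regime. The goal is to bound
\[
\prob\bigl(s_{\min}(\rmA) \leq t \bigr) = \prob\Bigl(\inf_{\vx \in S^{d-1}} \|\rmA \vx\| \leq t\Bigr)
\]
for an appropriate $t$ of the form $\bar{c}\eps \exp\bigl(-C \tfrac{\log(1/\rho)}{\log(n\rho)}\sqrt{\rho/n}\bigr)$, on the event $\Omega_o^c$ that no row or column of $\rmA$ vanishes. The first step is the standard decomposition of the sphere into \emph{compressible} vectors $\mathrm{Comp}(\delta,\rho_0)$ (vectors within distance $\rho_0$ of a $\delta d$-sparse vector) and their complement $\mathrm{Incomp}(\delta,\rho_0)$. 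One then writes $s_{\min}(\rmA) \geq \min\bigl(\inf_{\vx\in\mathrm{Comp}} \|\rmA \vx\|,\, \inf_{\vx\in\mathrm{Incomp}} \|\rmA \vx\|\bigr)$ and treats the two pieces by different arguments.

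For the compressible part, the strategy is a net argument: build an $\eps$-net on $\mathrm{Comp}$ of cardinality depending on $\delta$ and $\rho$, and show by a large-deviation bound that for each net point $\vx$, $\|\rmA \vx\|$ is bounded below by a quantity of order $\sqrt{d\rho}$ with failure probability exponentially small in $d\rho$. Summing over the net (using that $|N| \leq (Cd)^{\delta d}$) and discarding the event $\Omega_o$ where a column could be zero, this yields a bound of the form $n^{-c}$, accounting for the second term in the statement. The technical delicacy here is that for sparse matrices one cannot use Gaussian concentration directly; instead one needs sharp tensorization of small-ball probabilities or a Bernstein-type inequality sensitive to the variance proxy $p(1-p)$.

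For the incompressible part, the standard reduction is the ``invertibility via distance'' identity: for $\vx \in \mathrm{Incomp}$ there exists a coordinate $i$ with $|x_i| \asymp 1/\sqrt{d}$, and then $\|\rmA \vx\| \geq |x_i|\,\mathrm{dist}(\rmA_i, H_i)$ where $H_i$ is the span of the other rows. Thus it suffices to control $\mathrm{dist}(\rmA_i, H_i) = |\langle \rmA_i, \vy\rangle|$ for a unit vector $\vy$ orthogonal to $H_i$ (independent of $\rmA_i$). The heart of the proof is a small-ball/Littlewood--Offord estimate for sums $\sum_j y_j \xi_j$ where $\xi_j \sim \mathrm{Bern}(p)$; in the dense case this would give $O(1/\sqrt{d})$ anti-concentration, but in the sparse regime $p = \rho$ the quality of this bound is governed by how much of $\vy$'s mass sits on coordinates where $\xi_j = 1$. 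One must therefore classify the normal vector $\vy$ by its \emph{least common denominator} (LCD) and show that, conditional on the event $\Omega_o^c$, $\vy$ cannot be too structured.

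The main obstacle, and the principal technical contribution of Basak--Rudelson, is precisely this sparse anti-concentration step: one needs an LCD-based Littlewood--Offord inequality tailored to $\mathrm{Bern}(\rho)$ variables that degrades gracefully as $\rho \to 0$, producing the characteristic factor $\exp\bigl(-C\tfrac{\log(1/\rho)}{\log(n\rho)}\sqrt{\rho/n}\bigr)$. The ``regularized LCD'' must be defined relative to the effective randomness scale $\sqrt{\rho}$ rather than $1$, and one needs a new concentration statement for the LCD of the kernel vector $\vy$ under the sparse model. Once this sparse small-ball bound is established, integrating it against the probability that the LCD of $\vy$ lies in various dyadic shells and combining with the compressible bound via a union bound yields the stated estimate $\eps + n^{-c}$.
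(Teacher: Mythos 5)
This statement is not proved in the paper at all: it is quoted verbatim as Theorem 1.1 of \citet{Basak2018} and invoked as a black box inside the proof of the sparse reconstruction result, so there is no in-paper argument to compare yours against. Judged on its own terms, your proposal is a faithful roadmap of the actual Basak--Rudelson argument: the compressible/incompressible decomposition of the sphere, the reduction of the incompressible case to the distance from a row to the span of the others, and the need for an anti-concentration estimate for $\sum_j y_j \xi_j$ with sparse Bernoulli $\xi_j$, organized through a regularized least common denominator.

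However, as a proof it has a genuine gap, and the gap sits exactly where the theorem's content lives. Every quantitatively hard step is named rather than carried out: you do not define the regularized LCD, do not prove the sparse small-ball (Littlewood--Offord) inequality that produces the factor $\exp\bigl(-C\tfrac{\log(1/\rho)}{\log(n\rho)}\sqrt{\rho/n}\bigr)$, do not establish that the normal vector to the span of $n-1$ rows is unstructured conditional on $\Omega_o^c$, and do not show how the net cardinality on the compressible set is beaten by the individual small-ball probability when each row has only about $n\rho$ nonzero entries --- the regime where the classical Rudelson--Vershynin tensorization fails outright and where conditioning on $\Omega_o^c$ becomes essential rather than cosmetic. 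Deriving the specific exponent in the statement from ``integrating over dyadic shells of the LCD'' is precisely the lengthy technical core of \citet{Basak2018}; asserting that it ``yields the stated estimate'' is a citation in disguise, not a proof. For the purposes of this paper the correct move is the one the authors make, namely to cite the result; if you intend to actually prove it, each of the steps above must be supplied in full.
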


It is easy to see that $n\rho\geq \log(1/\rho)$ holds when $\rho\geq \log n/n $.
For the purpose of our work, we will focus on the invertibility of matrix $\rmA$, that is the probablity that $s_{min}(\rmA)>0$. As a result,  let $\eps$  be arbitrarily small  and $\rho = \log n/n$; then the matrix $\rmA$ is non-singular with probability of the order $1/n^c$. We emphasize that the constant $c$ is universal.

Using \cref{th: Bernoulli}, we get that every Bernoulli i.i.d. matrix $\text{Bern}(\rho)$, with $\rho= \log d^2/d^2$, is singular with probability at most $d^{-2c}$ for a universal constant $c$. 
However for our purposes, we need the probability of being singular to be $d^{-\bar{C}}$ for some $\bar{C}$ which we will determine later in the proof. 
To achieve this smaller probability, consider drawing independently $\frac{\bar{C}}{c}$ $\text{Bern}(\rho)$ matrices: $\rmE_i$, $\forall i\in[\frac{\bar{C}}{c}]$ . Then the probability of all of them being singular is at most $d^{-2\bar{C}}$. It is straightforward to see that whenever the determinant of a Boolean matrix is non-zero then the Boolean Determinant of that matrix is also non-zero. Hence, with probability at least $1-d^{-2\bar{C}}$, at least one of these matrices has non-zero Boolean determinant. We create our matrix $\rmM' = \rmE_1 \lor\hdots\lor \rmE_\frac{\bar{C}}{c}$, where the `or' operation is done element-wise. Note that $\rmM' $ has a non-zero Boolean determinant if any of the matrices $ \rmE_1\dots \rmE_\frac{\bar{C}}{c}$ has a  non-zero one. Thus $\rmM'$ has a non-zero Boolean determinant with probability at least $1-d^{-2\bar{C}}$.
Note that $\rmM'$ is itself an i.i.d. Bernoulli matrix, with $q\approx \rho \bar{C}/c = \cO(\log d^2/d^2)$.

\paragraph{Step 5.} To conclude, since $\rmM'$ has a zero Boolean determinant with probability at most  $d^{-2\bar{C}}$, then to ensure that the Khatri-Rao product of the sparse matrices is invertible for all layers, we need to take a union bound over all the layers. Thus, the probability that our result will not hold is at most $d^{-2\bar{C}}2l$. Since we have assumed that $l=\cO(\text{poly}(d))$, for large enough constant $\bar{C}$, the overall probability of the union bound can be driven down to be at most $1/d$. Recalling that $p=\sqrt{2dq}$, we get that $p=\Theta(\sqrt{\log d/d})$ suffices for the theorem to hold.

\section{Discussion and Future Work}\label{sec: Discussion}
This work focuses on the expressive power of \acl{NZ} parameters; but
although our methods are constructive, it is important to note that the results may not apply to the training process through gradient based optimization algorithms.
It would be  interesting though to see if techniques like SGD can indeed leverage this power to the fullest.
We also note that our approach is limited, in the sense that we aimed for exact reconstruction of the target network, and which necessarily meant that we need  $d^2$ \acl{NZ} parameters per layer. The  regime in which  \acl{NZ} layer is only moderately overparameterized could be explored in future work.
Another interesting direction 
 is the fine-tuning capacity of \acl{NZ} layers, where we are given a network trained on a certain task, and we want to fine-tune it on a slightly different task by only modifying the \acl{NZ} layers. Extending our results to other architectures like CNNs and Transformers, and proving lower bounds or impossibility results about the expressive power of \acl{NZls} are also exciting open problems.

\section*{Acknowledgements}
DP acknowledges the support of an NSF CAREER Award \#1844951, a Sony Faculty Innovation Award, an AFOSR \& AFRL Center of Excellence Award FA9550-18-1-0166, an NSF TRIPODS Award \#1740707, and an ONR Grant No. N00014-
21-1-2806. SR acknowledges the support of a Google PhD Fellowship award.

\bibliographystyle{plainnat}
\bibliography{Bibliography.bib}

\begin{thebibliography}{48}
\providecommand{\natexlab}[1]{#1}
\providecommand{\url}[1]{\texttt{#1}}
\expandafter\ifx\csname urlstyle\endcsname\relax
  \providecommand{\doi}[1]{doi: #1}\else
  \providecommand{\doi}{doi: \begingroup \urlstyle{rm}\Url}\fi

\bibitem[Andoni et~al.(2014)Andoni, Panigrahy, Valiant, and Zhang]{andoni14}
Alexandr Andoni, Rina Panigrahy, Gregory Valiant, and Li~Zhang.
\newblock Learning polynomials with neural networks.
\newblock In Eric~P. Xing and Tony Jebara, editors, \emph{Proceedings of the
  31st International Conference on Machine Learning}, volume~32 of
  \emph{Proceedings of Machine Learning Research}, pages 1908--1916, Bejing,
  China, 22--24 Jun 2014. PMLR.
\newblock URL \url{https://proceedings.mlr.press/v32/andoni14.html}.

\bibitem[Ba et~al.(2016)Ba, Kiros, and Hinton]{ba2016layer}
Jimmy~Lei Ba, Jamie~Ryan Kiros, and Geoffrey~E Hinton.
\newblock Layer normalization.
\newblock \emph{arXiv preprint arXiv:1607.06450}, 2016.

\bibitem[Bach(2017)]{Bach2017}
Francis Bach.
\newblock Breaking the curse of dimensionality with convex neural networks.
\newblock \emph{J. Mach. Learn. Res.}, 18\penalty0 (1):\penalty0 629–681, jan
  2017.
\newblock ISSN 1532-4435.

\bibitem[Balduzzi et~al.(2017)Balduzzi, Frean, Leary, Lewis, Ma, and
  McWilliams]{balduzzi2017shattered}
David Balduzzi, Marcus Frean, Lennox Leary, JP~Lewis, Kurt Wan-Duo Ma, and
  Brian McWilliams.
\newblock The shattered gradients problem: If resnets are the answer, then what
  is the question?
\newblock In \emph{International Conference on Machine Learning}, pages
  342--350. PMLR, 2017.

\bibitem[Balestriero and Baraniuk(2022)]{balestriero2022batch}
Randall Balestriero and Richard~G Baraniuk.
\newblock Batch normalization explained.
\newblock \emph{arXiv preprint arXiv:2209.14778}, 2022.

\bibitem[Barron(1993)]{Barron93}
A.R. Barron.
\newblock Universal approximation bounds for superpositions of a sigmoidal
  function.
\newblock \emph{IEEE Transactions on Information Theory}, 39\penalty0
  (3):\penalty0 930--945, 1993.
\newblock \doi{10.1109/18.256500}.

\bibitem[Basak and Rudelson(2018)]{Basak2018}
Anirban Basak and Mark Rudelson.
\newblock Sharp transition of the invertibility of the adjacency matrices of
  sparse random graphs, 2018.
\newblock URL \url{https://arxiv.org/abs/1809.08454}.

\bibitem[Bilen and Vedaldi(2017)]{bilen2017universal}
Hakan Bilen and Andrea Vedaldi.
\newblock Universal representations:the missing link between faces, text,
  planktons, and cat breeds, 2017.

\bibitem[Bjorck et~al.(2018)Bjorck, Gomes, Selman, and
  Weinberger]{bjorck2018understanding}
Nils Bjorck, Carla~P Gomes, Bart Selman, and Kilian~Q Weinberger.
\newblock Understanding batch normalization.
\newblock \emph{Advances in neural information processing systems}, 31, 2018.

\bibitem[Caron and Traynor(2005)]{caron2005zero}
Richard Caron and Tim Traynor.
\newblock The zero set of a polynomial.
\newblock \emph{WSMR Report}, pages 05--02, 2005.

\bibitem[Clark et~al.(2019)Clark, Luong, Khandelwal, Manning, and
  Le]{clark2019bam}
Kevin Clark, Minh-Thang Luong, Urvashi Khandelwal, Christopher~D Manning, and
  Quoc~V Le.
\newblock Bam! born-again multi-task networks for natural language
  understanding.
\newblock \emph{arXiv preprint arXiv:1907.04829}, 2019.

\bibitem[Donahue et~al.(2014)Donahue, Jia, Vinyals, Hoffman, Zhang, Tzeng, and
  Darrell]{donahue2014decaf}
Jeff Donahue, Yangqing Jia, Oriol Vinyals, Judy Hoffman, Ning Zhang, Eric
  Tzeng, and Trevor Darrell.
\newblock Decaf: A deep convolutional activation feature for generic visual
  recognition.
\newblock In \emph{International conference on machine learning}, pages
  647--655. PMLR, 2014.

\bibitem[Du et~al.(2018)Du, Zhai, Poczos, and Singh]{Du2019}
Simon~S. Du, Xiyu Zhai, Barnabas Poczos, and Aarti Singh.
\newblock Gradient descent provably optimizes over-parameterized neural
  networks, 2018.
\newblock URL \url{https://arxiv.org/abs/1810.02054}.

\bibitem[Frankle et~al.(2020)Frankle, Schwab, and Morcos]{frankle2020training}
Jonathan Frankle, David~J Schwab, and Ari~S Morcos.
\newblock Training batchnorm and only batchnorm: On the expressive power of
  random features in cnns.
\newblock \emph{arXiv preprint arXiv:2003.00152}, 2020.

\bibitem[Ghorbani et~al.(2019)Ghorbani, Mei, Misiakiewicz, and
  Montanari]{Ghorbani2019}
Behrooz Ghorbani, Song Mei, Theodor Misiakiewicz, and Andrea Montanari.
\newblock Linearized two-layers neural networks in high dimension, 2019.
\newblock URL \url{https://arxiv.org/abs/1904.12191}.

\bibitem[Gitman and Ginsburg(2017)]{gitman2017comparison}
Igor Gitman and Boris Ginsburg.
\newblock Comparison of batch normalization and weight normalization algorithms
  for the large-scale image classification.
\newblock \emph{arXiv preprint arXiv:1709.08145}, 2017.

\bibitem[Gribonval et~al.(2022)Gribonval, Kutyniok, Nielsen, and
  Voigtlaender]{gribonval2022approximation}
R{\'e}mi Gribonval, Gitta Kutyniok, Morten Nielsen, and Felix Voigtlaender.
\newblock Approximation spaces of deep neural networks.
\newblock \emph{Constructive approximation}, 55\penalty0 (1):\penalty0
  259--367, 2022.

\bibitem[Guo et~al.(2020)Guo, Rush, and Kim]{guo2020parameter}
Demi Guo, Alexander~M Rush, and Yoon Kim.
\newblock Parameter-efficient transfer learning with diff pruning.
\newblock \emph{arXiv preprint arXiv:2012.07463}, 2020.

\bibitem[He et~al.(2016)He, Zhang, Ren, and Sun]{he2016identity}
Kaiming He, Xiangyu Zhang, Shaoqing Ren, and Jian Sun.
\newblock Identity mappings in deep residual networks.
\newblock In \emph{European conference on computer vision}, pages 630--645.
  Springer, 2016.

\bibitem[Horn and Johnson(1991)]{horn_johnson_1991}
Roger~A. Horn and Charles~R. Johnson.
\newblock \emph{Topics in Matrix Analysis}.
\newblock Cambridge University Press, 1991.
\newblock \doi{10.1017/CBO9780511840371}.

\bibitem[Houlsby et~al.(2019)Houlsby, Giurgiu, Jastrzebski, Morrone,
  de~Laroussilhe, Gesmundo, Attariyan, and Gelly]{Houlsby3032transfer}
Neil Houlsby, Andrei Giurgiu, Stanislaw Jastrzebski, Bruna Morrone, Quentin
  de~Laroussilhe, Andrea Gesmundo, Mona Attariyan, and Sylvain Gelly.
\newblock Parameter-efficient transfer learning for nlp, 2019.
\newblock URL \url{https://arxiv.org/abs/1902.00751}.

\bibitem[Hsu et~al.(2021)Hsu, Sanford, Servedio, and
  Vlatakis-Gkaragkounis]{hsu2021approximation}
Daniel Hsu, Clayton~H Sanford, Rocco Servedio, and Emmanouil~Vasileios
  Vlatakis-Gkaragkounis.
\newblock On the approximation power of two-layer networks of random relus.
\newblock In \emph{Conference on Learning Theory}, pages 2423--2461. PMLR,
  2021.

\bibitem[Huang and Belongie(2017)]{huang2017arbitrary}
Xun Huang and Serge Belongie.
\newblock Arbitrary style transfer in real-time with adaptive instance
  normalization.
\newblock In \emph{Proceedings of the IEEE international conference on computer
  vision}, pages 1501--1510, 2017.

\bibitem[Ioffe and Szegedy(2015)]{Ioffe2015}
Sergey Ioffe and Christian Szegedy.
\newblock Batch normalization: Accelerating deep network training by reducing
  internal covariate shift, 2015.
\newblock URL \url{https://arxiv.org/abs/1502.03167}.

\bibitem[Jacot et~al.(2018)Jacot, Gabriel, and Hongler]{Jacot2018neural}
Arthur Jacot, Franck Gabriel, and Cl{\'e}ment Hongler.
\newblock Neural tangent kernel: Convergence and generalization in neural
  networks.
\newblock \emph{Advances in neural information processing systems}, 31, 2018.

\bibitem[Ji et~al.(2019)Ji, Telgarsky, and Xian]{Ji2020}
Ziwei Ji, Matus Telgarsky, and Ruicheng Xian.
\newblock Neural tangent kernels, transportation mappings, and universal
  approximation, 2019.
\newblock URL \url{https://arxiv.org/abs/1910.06956}.

\bibitem[Jiang et~al.(2001)Jiang, Sidiropoulos, and ten Berge]{JiangSidir2001}
Tao Jiang, N.D. Sidiropoulos, and J.M.F. ten Berge.
\newblock Almost-sure identifiability of multidimensional harmonic retrieval.
\newblock \emph{IEEE Transactions on Signal Processing}, 49\penalty0
  (9):\penalty0 1849--1859, 2001.
\newblock \doi{10.1109/78.942615}.

\bibitem[Kamalakara et~al.(2022)Kamalakara, Locatelli, Venkitesh, Ba, Gal, and
  Gomez]{lowrank2022kamalakara}
Siddhartha~Rao Kamalakara, Acyr Locatelli, Bharat Venkitesh, Jimmy Ba, Yarin
  Gal, and Aidan~N. Gomez.
\newblock Exploring low rank training of deep neural networks, 2022.
\newblock URL \url{https://arxiv.org/abs/2209.13569}.

\bibitem[Klusowski and Barron(2016)]{KlusowskiBarron2016}
Jason~M. Klusowski and Andrew~R. Barron.
\newblock Approximation by combinations of relu and squared relu ridge
  functions with $ \ell^1 $ and $ \ell^0 $ controls, 2016.
\newblock URL \url{https://arxiv.org/abs/1607.07819}.

\bibitem[Kohler et~al.(2019)Kohler, Daneshmand, Lucchi, Hofmann, Zhou, and
  Neymeyr]{kohler2019exponential}
Jonas Kohler, Hadi Daneshmand, Aurelien Lucchi, Thomas Hofmann, Ming Zhou, and
  Klaus Neymeyr.
\newblock Exponential convergence rates for batch normalization: The power of
  length-direction decoupling in non-convex optimization.
\newblock In \emph{The 22nd International Conference on Artificial Intelligence
  and Statistics}, pages 806--815. PMLR, 2019.

\bibitem[LeCun et~al.(2012)LeCun, Bottou, Orr, and
  M{\"u}ller]{lecun2012efficient}
Yann~A LeCun, L{\'e}on Bottou, Genevieve~B Orr, and Klaus-Robert M{\"u}ller.
\newblock Efficient backprop.
\newblock In \emph{Neural networks: Tricks of the trade}, pages 9--48.
  Springer, 2012.

\bibitem[Lifchitz et~al.(2019)Lifchitz, Avrithis, Picard, and
  Bursuc]{lifchitz2019dense}
Yann Lifchitz, Yannis Avrithis, Sylvaine Picard, and Andrei Bursuc.
\newblock Dense classification and implanting for few-shot learning.
\newblock In \emph{Proceedings of the IEEE/CVF Conference on Computer Vision
  and Pattern Recognition}, pages 9258--9267, 2019.

\bibitem[Liu and Trenkler(2008)]{Liu2008}
Shuangzhe Liu and OTZ Trenkler.
\newblock Hadamard, khatri-rao, kronecker and other matrix products.
\newblock \emph{International Journal of Information \& Systems Sciences}, 4,
  01 2008.

\bibitem[Luo et~al.(2018)Luo, Wang, Shao, and Peng]{luo2018towards}
Ping Luo, Xinjiang Wang, Wenqi Shao, and Zhanglin Peng.
\newblock Towards understanding regularization in batch normalization.
\newblock \emph{arXiv preprint arXiv:1809.00846}, 2018.

\bibitem[Morcos et~al.(2018)Morcos, Barrett, Rabinowitz, and
  Botvinick]{morcos2018importance}
Ari~S Morcos, David~GT Barrett, Neil~C Rabinowitz, and Matthew Botvinick.
\newblock On the importance of single directions for generalization.
\newblock \emph{arXiv preprint arXiv:1803.06959}, 2018.

\bibitem[Mudrakarta et~al.(2018)Mudrakarta, Sandler, Zhmoginov, and
  Howard]{mudrakarta2018k}
Pramod~Kaushik Mudrakarta, Mark Sandler, Andrey Zhmoginov, and Andrew Howard.
\newblock K for the price of 1: Parameter-efficient multi-task and transfer
  learning.
\newblock \emph{arXiv preprint arXiv:1810.10703}, 2018.

\bibitem[Rahimi and Recht(2007)]{Rahimi2008kernel}
Ali Rahimi and Benjamin Recht.
\newblock Random features for large-scale kernel machines.
\newblock In J.~Platt, D.~Koller, Y.~Singer, and S.~Roweis, editors,
  \emph{Advances in Neural Information Processing Systems}, volume~20. Curran
  Associates, Inc., 2007.
\newblock URL
  \url{https://proceedings.neurips.cc/paper/2007/file/013a006f03dbc5392effeb8f18fda755-Paper.pdf}.

\bibitem[Rahimi and Recht(2008{\natexlab{a}})]{Rahimi2008UniformAO}
Ali Rahimi and Benjamin Recht.
\newblock Uniform approximation of functions with random bases.
\newblock \emph{2008 46th Annual Allerton Conference on Communication, Control,
  and Computing}, pages 555--561, 2008{\natexlab{a}}.

\bibitem[Rahimi and Recht(2008{\natexlab{b}})]{Rahimi2008weighted}
Ali Rahimi and Benjamin Recht.
\newblock Weighted sums of random kitchen sinks: Replacing minimization with
  randomization in learning.
\newblock In D.~Koller, D.~Schuurmans, Y.~Bengio, and L.~Bottou, editors,
  \emph{Advances in Neural Information Processing Systems}, volume~21. Curran
  Associates, Inc., 2008{\natexlab{b}}.
\newblock URL
  \url{https://proceedings.neurips.cc/paper/2008/file/0efe32849d230d7f53049ddc4a4b0c60-Paper.pdf}.

\bibitem[Salimans and Kingma(2016)]{salimans2016weight}
Tim Salimans and Durk~P Kingma.
\newblock Weight normalization: A simple reparameterization to accelerate
  training of deep neural networks.
\newblock \emph{Advances in neural information processing systems}, 29, 2016.

\bibitem[Santurkar et~al.(2018)Santurkar, Tsipras, Ilyas, and
  Madry]{santurkar2018does}
Shibani Santurkar, Dimitris Tsipras, Andrew Ilyas, and Aleksander Madry.
\newblock How does batch normalization help optimization?
\newblock \emph{Advances in neural information processing systems}, 31, 2018.

\bibitem[Sun et~al.(2018)Sun, Gilbert, and Tewari]{Sun2018}
Yitong Sun, Anna Gilbert, and Ambuj Tewari.
\newblock On the approximation properties of random relu features, 2018.
\newblock URL \url{https://arxiv.org/abs/1810.04374}.

\bibitem[Wang et~al.(2021)Wang, Agarwal, and Papailiopoulos]{Pufferfish2021}
Hongyi Wang, Saurabh Agarwal, and Dimitris Papailiopoulos.
\newblock Pufferfish: Communication-efficient models at no extra cost, 2021.
\newblock URL \url{https://arxiv.org/abs/2103.03936}.

\bibitem[Wiesler and Ney(2011)]{wiesler2011convergence}
Simon Wiesler and Hermann Ney.
\newblock A convergence analysis of log-linear training.
\newblock \emph{Advances in Neural Information Processing Systems}, 24, 2011.

\bibitem[Yang et~al.(2019)Yang, Pennington, Rao, Sohl-Dickstein, and
  Schoenholz]{yang2019mean}
Greg Yang, Jeffrey Pennington, Vinay Rao, Jascha Sohl-Dickstein, and Samuel~S
  Schoenholz.
\newblock A mean field theory of batch normalization.
\newblock \emph{arXiv preprint arXiv:1902.08129}, 2019.

\bibitem[Yehudai and Shamir(2019)]{yehudai2019power}
Gilad Yehudai and Ohad Shamir.
\newblock On the power and limitations of random features for understanding
  neural networks.
\newblock \emph{Advances in Neural Information Processing Systems}, 32, 2019.

\bibitem[Yosinski et~al.(2014)Yosinski, Clune, Bengio, and
  Lipson]{yosinski2014transferable}
Jason Yosinski, Jeff Clune, Yoshua Bengio, and Hod Lipson.
\newblock How transferable are features in deep neural networks?
\newblock \emph{Advances in neural information processing systems}, 27, 2014.

\bibitem[Zaken et~al.(2021)Zaken, Ravfogel, and Goldberg]{zaken2021bitfit}
Elad~Ben Zaken, Shauli Ravfogel, and Yoav Goldberg.
\newblock Bitfit: Simple parameter-efficient fine-tuning for transformer-based
  masked language-models.
\newblock \emph{arXiv preprint arXiv:2106.10199}, 2021.

\end{thebibliography}

\newpage
\appendix

\section{Linear Algebra}\label{app: linear}
In this part we mention some useful properties of the Kronecker and Khatri-Rao products, as well as some basic results from linear algebra.

\subsection{Multiplication of matrices \& full rank}

\begin{fact}\label{fact: multiplication is full rank}
Let $\rmA\in\mat^{n\times m}$ and $\rmB\in\R^{m\times l}$ be two full rank matrices, with $n<m$ and $l<m$. If the nullspace of $\rmA$ doesn't have intersection with the column space of $\rmB$ then $\rank(\rmA\rmB) = \min\braces{n,l}$.
\end{fact}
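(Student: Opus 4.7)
The plan is to prove the claim in two steps: first, show that $\rank(\rmA\rmB) \geq l$ using the hypothesis on trivial intersection, and second, observe that the same hypothesis forces $l\leq n$ by a dimension-counting argument, so that $l = \min\{n,l\}$.

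For the first step, I will argue that the map $\vx\mapsto \rmA\rmB\vx$ (viewed on $\R^l$) has trivial kernel. Suppose $\rmA\rmB\vx = 0$ and set $\vy := \rmB\vx$. Then $\vy$ lies in the column space of $\rmB$, and since $\rmA\vy = 0$, it also lies in the nullspace of $\rmA$. By hypothesis this intersection is trivial, so $\vy = \rmB\vx = 0$. Since $\rmB\in\R^{m\times l}$ is full rank with $l<m$, it has trivial nullspace, forcing $\vx = 0$. Therefore $\rmA\rmB$ has full column rank, i.e., $\rank(\rmA\rmB) = l$.

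For the second step, I will use the subspace dimension formula
\begin{equation*}
\dim(\mathrm{null}(\rmA)) + \dim(\mathrm{colspace}(\rmB)) - \dim(\mathrm{null}(\rmA)\cap\mathrm{colspace}(\rmB)) = \dim(\mathrm{null}(\rmA) + \mathrm{colspace}(\rmB)) \leq m.
\end{equation*}
Since $\rmA$ has rank $n$ with $n<m$, its nullspace has dimension $m-n$; and $\mathrm{colspace}(\rmB)$ has dimension $l$. Under the hypothesis of trivial intersection, the formula reduces to $(m-n)+l \leq m$, i.e., $l\leq n$. Therefore $\min\{n,l\}=l$, and combined with the first step we conclude $\rank(\rmA\rmB) = l = \min\{n,l\}$.

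This is a short linear-algebra fact and I do not foresee a serious obstacle; the only subtle point is noticing that the stated hypothesis implicitly forces the inequality $l\leq n$, so the conclusion $\min\{n,l\}$ is just $l$ rather than a case analysis. The standard bound $\rank(\rmA\rmB)\leq \min\{\rank(\rmA),\rank(\rmB)\} = \min\{n,l\}$ is then matched from below by the trivial-kernel argument above.
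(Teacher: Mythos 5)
Your proof is correct. The paper states this as a \emph{Fact} without providing any proof (it is followed only by a remark that the condition is also necessary), so there is no argument of the authors' to compare against. Your two-step argument is complete: the trivial-kernel computation shows $\rank(\rmA\rmB)=l$, and the dimension formula for $\mathrm{null}(\rmA)+\mathrm{colspace}(\rmB)$ inside $\R^m$ gives $(m-n)+l\leq m$, i.e., $l\leq n$, so $\min\{n,l\}=l$ and no case analysis is needed. That second observation is a nice point the paper leaves implicit: the trivial-intersection hypothesis cannot hold unless $l\leq n$, so the conclusion is really that $\rmA\rmB$ has full column rank.
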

\begin{remark}
We note here, that the above  is a necessary and sufficient condition for the multiplication of two matrices to be full rank. The same argument can also be made through the left nullspace and the row space.
\end{remark}
\subsection{Some products \& their properties}

\begin{definition}[Kronecker product]\label{def: kroneckerapp}
The Kronecker product of two matrices, which we denote with $\rmA\kron\rmB$, where $\dim(\rmA) = m\times n $ and $\dim(\rmB) = p\times q$ is defined to be the block matrix
\begin{equation}
\rmA\kron\rmB \equiv \begin{bmatrix}
\eva_{1,1}\rmB &\hdots  &\eva_{1,n}\rmB \\ 
 \vdots&\ddots  &\vdots \\ 
 \eva_{m,1}\rmB& \hdots & \eva_{m,n}\rmB
\end{bmatrix}
\end{equation}
\end{definition}
\begin{theorem}\label{th: kronecker}
The Kronecker product has the following properties:
\begin{enumerate}
    \item The rank of the Kronecker product of two matrices is equal to the product of their individual ranks, \ie
\begin{equation}
    \rank(\rmA\kron\rmB) = \rank(\rmA)\rank(\rmB)
\end{equation}
\item Furthermore,
\begin{equation*}
    (\rmA_1\rmA_2\hdots \rmA_n)\kron(\rmB_1\rmB_2\hdots\rmB_n) = (\rmA_1\kron\rmB_1)(\rmA_2\kron\rmB_2)\hdots(\rmA_n\kron\rmB_n)
\end{equation*}
\end{enumerate}
\end{theorem}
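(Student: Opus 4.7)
The plan is to prove the two claims in reverse order: first the mixed-product identity (2), which is a structural fact about block matrices, and then use it together with the singular value decomposition to deduce the rank formula (1).

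For part (2), I would start with the case $n=2$ and verify directly from \cref{def: kroneckerapp} that
\begin{equation*}
    (\rmA_1\rmA_2)\kron(\rmB_1\rmB_2) = (\rmA_1\kron\rmB_1)(\rmA_2\kron\rmB_2).
\end{equation*}
Writing the right-hand side as a block matrix with $(i,j)$-block equal to $\sum_{\ell} (\rmA_1)_{i,\ell}(\rmA_2)_{\ell,j}\,\rmB_1\rmB_2$, this is precisely the $(i,j)$-block of $(\rmA_1\rmA_2)\kron(\rmB_1\rmB_2)$, using bilinearity of ordinary matrix multiplication on the inner $\rmB_1\rmB_2$ factor. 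The general case then follows by a short induction on $n$: assuming the identity for $n-1$ factors, group the first $n-1$ matrices on each side, apply the inductive hypothesis, and then apply the $n=2$ case once more.

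For part (1), I would invoke the singular value decompositions $\rmA = \rmU_A\mSigma_A\rmV_A^\top$ and $\rmB = \rmU_B\mSigma_B\rmV_B^\top$ and use part (2) twice to factor
\begin{equation*}
    \rmA\kron\rmB = (\rmU_A\kron\rmU_B)(\mSigma_A\kron\mSigma_B)(\rmV_A\kron\rmV_B)^\top.
\end{equation*}
A short calculation shows that the Kronecker product of orthogonal matrices is orthogonal (using part (2) applied to $(\rmU_A\kron\rmU_B)^\top(\rmU_A\kron\rmU_B) = (\rmU_A^\top\rmU_A)\kron(\rmU_B^\top\rmU_B) = \rmI$), so the outer factors are unitary and do not affect the rank. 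It remains to observe that $\mSigma_A\kron\mSigma_B$ is a (rectangular) diagonal matrix whose nonzero diagonal entries are exactly the products $\sigma_i^A\sigma_j^B$ with both factors nonzero, of which there are $\rank(\rmA)\cdot\rank(\rmB)$.

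No serious obstacle is anticipated here; both statements are standard and the only mildly delicate point is keeping the block indices straight in the $n=2$ base case of part (2). An entirely alternative route for part (1) is to use the tensor product interpretation of $\kron$ on column spaces, which gives $\mathrm{colspace}(\rmA\kron\rmB) = \mathrm{colspace}(\rmA)\otimes\mathrm{colspace}(\rmB)$ and hence the product-of-dimensions formula directly; I would mention this only as a remark, since the SVD argument is the most self-contained.
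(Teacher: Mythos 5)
Your proposal is correct, but note that the paper does not actually prove this statement: it records both properties as standard facts and refers the reader to Horn and Johnson for the proofs. Your argument is the standard textbook one and is sound. The $n=2$ block computation for the mixed-product identity is exactly right, and the induction to general $n$ is routine. For the rank formula, the SVD route works; two small points worth making explicit are (i) you implicitly use $(\rmV_A\kron\rmV_B)^\top=\rmV_A^\top\kron\rmV_B^\top$, which should be stated (it follows immediately from the block definition), and (ii) when $\mSigma_A$ and $\mSigma_B$ are rectangular, the nonzero entries of $\mSigma_A\kron\mSigma_B$ do not all sit on the main diagonal, but each nonzero entry is the sole nonzero entry of its row and of its column, which is what actually justifies counting them to get the rank. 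Neither issue is a gap, just a matter of writing the argument carefully. Since the paper offers no proof to compare against, your self-contained derivation is a strictly more complete treatment than what appears in the text.
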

Proofs of these properties can be found in \cite{horn_johnson_1991}.
\begin{definition}[Khatri-Rao product ]
The Khatri-Rao product of two matrices $\rmA\in\bR^{n\times mn},\rmB\in\bR^{m\times mn}$ is defined as the matrix that contains column-wise Kronecker products. Formally,
\begin{equation}
    \rmA\kar\rmB = \begin{bmatrix}
    \rmA_1\kron\rmB_1 &\rmA_2\kron\rmB_2 &\hdots &\rmA_{mn}\kron\rmB_{mn}
    \end{bmatrix}
\end{equation}
where $\rmA_i,\rmB_i$ denote the $i-th$ column of $\rmA$ and $\rmB$ respectively.
\end{definition}

\begin{definition}[Hadamard Product]
Let $\rmA\in\bR^{n\times m}$ and $\rmB\in\bR^{n\times m}$; then their Hadamard product is defined as the element-wise product between the two matrices. Formally,
\begin{equation*}
    (\rmA\had\rmB)_{i,j} = (\rmA)_{i,j}(\rmB)_{i,j}
\end{equation*}
\end{definition}

\begin{theorem}[Mixed products]\label{th: mixed products}
The products defined before are connected in the following ways:
\begin{enumerate}
    \item The  Khatri-Rao and the Kronecker product satisfy
\begin{equation}
    (\rmA\rmC)\kar(\rmB\rmD) = (\rmA\kron\rmB)(\rmC\kar\rmD)
\end{equation}
\item The Khatri-Rao and the Hadamard product satisfy
\begin{equation}
    (\rmA\kar\rmB)\had(\rmC\kar\rmD) = (\rmA\had\rmC)\kar(\rmB\had\rmD)
\end{equation}
\end{enumerate}
\end{theorem}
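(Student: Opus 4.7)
The plan is to establish both identities by a column-by-column verification, exploiting the definition of the Khatri-Rao product as a column-wise Kronecker product. Since two matrices of the same dimension are equal if and only if their columns coincide, it suffices to compute the $i$-th column of each side for an arbitrary index $i$ and use the Kronecker mixed-product property already available as Item 2 of \cref{th: kronecker}.

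For Identity 1, I would first write the $i$-th column of $(\rmA\rmC)\kar(\rmB\rmD)$ as $(\rmA\rmC)_i \kron (\rmB\rmD)_i$, and then observe that $(\rmA\rmC)_i = \rmA\rmC_i$ and $(\rmB\rmD)_i = \rmB\rmD_i$, so this equals $(\rmA\rmC_i)\kron(\rmB\rmD_i)$. Treating $\rmC_i$ and $\rmD_i$ as single-column matrices, the Kronecker mixed-product identity gives
\begin{equation*}
(\rmA\rmC_i)\kron(\rmB\rmD_i) = (\rmA\kron\rmB)(\rmC_i\kron\rmD_i).
\end{equation*}
Since $\rmC_i\kron\rmD_i$ is by definition the $i$-th column of $\rmC\kar\rmD$, this expression is exactly the $i$-th column of $(\rmA\kron\rmB)(\rmC\kar\rmD)$. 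Matching columns for every $i$ yields the claimed equality, and I would briefly note that the dimensions are consistent: $\rmA\kron\rmB$ is $mq\times nr$ and $\rmC\kar\rmD$ is $nr\times p$, so the right-hand side is $mq\times p$, matching the Khatri-Rao product on the left.

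For Identity 2, I would again fix an arbitrary column index $i$ and expand componentwise. The $i$-th column of the left-hand side is $(\rmA_i\kron\rmB_i)\had(\rmC_i\kron\rmD_i)$. Indexing entries of a Kronecker of two column vectors by a pair $(j,k)$, the $(j,k)$-entry of $\rmA_i\kron\rmB_i$ is $(\rmA_i)_j(\rmB_i)_k$ and similarly for $\rmC_i\kron\rmD_i$. Their Hadamard product therefore has entry
\begin{equation*}
(\rmA_i)_j(\rmB_i)_k(\rmC_i)_j(\rmD_i)_k = \bigl[(\rmA_i)_j(\rmC_i)_j\bigr]\bigl[(\rmB_i)_k(\rmD_i)_k\bigr],
\end{equation*}
which is precisely the $(j,k)$-entry of $(\rmA_i\had\rmC_i)\kron(\rmB_i\had\rmD_i)$. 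Recognizing $\rmA_i\had\rmC_i$ as the $i$-th column of $\rmA\had\rmC$ (and likewise for $\rmB,\rmD$), this is the $i$-th column of $(\rmA\had\rmC)\kar(\rmB\had\rmD)$, completing the proof column-wise.

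There is no real technical obstacle here; both identities are algebraic in nature and follow from bookkeeping once we reduce to the single-column case. The only step that requires any care is the index manipulation in Identity 2, where one must be consistent about the pair-indexing convention for Kronecker products of column vectors so that the Hadamard product on the left is unambiguously matched with the Kronecker product on the right. I would make that convention explicit at the outset to avoid any confusion between lexicographic and reverse-lexicographic orderings.
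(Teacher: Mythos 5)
Your proof is correct. Note that the paper does not actually prove this statement: it defers both identities to the references \cite{Liu2008} and \cite{horn_johnson_1991}, so there is no in-paper argument to compare against. Your column-by-column reduction is the standard self-contained derivation: for Identity 1 you correctly reduce to the Kronecker mixed-product rule (Item 2 of \cref{th: kronecker}) applied with column vectors as the second factors, and for Identity 2 the entrywise bookkeeping with pair indices $(j,k)$ is exactly what is needed; your closing remark about fixing the pair-indexing convention is the right thing to be careful about, and the implicit dimension requirements (equal shapes for the Hadamard factors, matching column counts for the Khatri--Rao factors) are consistent with how the theorem is used elsewhere in the paper. This gives the paper something it currently lacks, namely a proof rather than a citation.
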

The above two properties can be found in \cite{Liu2008} and \cite{horn_johnson_1991}.
\subsection{Linear Systems}
We state below the well-known theorem of Rouch\'e-Kronecker-Capelli, about the existence of solutions of Linear systems.
\begin{theorem}[Rouch\'e-Kronecker-Capelli]\label{thm:Capelli} Let $\rmA\rvx = \rvb$ be a linear system of equations, with $\rmA\in\R^{m\times n}$, $\rvx\in\R^n$ and $\rvb\in\R^n$. Then this system has
\begin{enumerate}
    \item \emph{No solution:} if $\rank(\rmA)\neq \rank([\rmA|\rvb])$.
    \item \emph{Unique solution:} if  $\rank(\rmA)= \rank([\rmA|\rvb]) = n$.
    \item \emph{Infinite solutions:} if $\rank(\rmA)= \rank([\rmA|\rvb]) < n$.
\end{enumerate}
\end{theorem}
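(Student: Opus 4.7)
The plan is to reduce all three cases to two well-known facts of linear algebra: (a) $\rmA\rvx=\rvb$ has at least one solution if and only if $\rvb$ lies in the column space $\mathrm{col}(\rmA)$ of $\rmA$; and (b) the set of solutions, when non-empty, is an affine translate of the null space $\ker(\rmA)$. Throughout I will write $[\rmA\mid\rvb]\in\R^{m\times(n+1)}$ for the augmented matrix, noting that $\mathrm{col}([\rmA\mid\rvb])=\mathrm{col}(\rmA)+\mathrm{span}(\rvb)$, so that $\rank([\rmA\mid\rvb])\in\{\rank(\rmA),\rank(\rmA)+1\}$, with the larger value occurring precisely when $\rvb\notin\mathrm{col}(\rmA)$.

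For case (1), I would argue the contrapositive: if a solution $\rvx_0$ exists, then $\rvb=\rmA\rvx_0$ is a linear combination of the columns of $\rmA$, hence $\mathrm{col}([\rmA\mid\rvb])=\mathrm{col}(\rmA)$ and the two ranks agree. Conversely, if the ranks agree, then by the observation above $\rvb\in\mathrm{col}(\rmA)$ and a solution exists. This is a straightforward equivalence, so I do not expect difficulty here.

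For cases (2) and (3), assume a solution $\rvx_0$ exists; by case (1) this is exactly the hypothesis $\rank(\rmA)=\rank([\rmA\mid\rvb])$. I would then show that the full solution set is $\{\rvx_0+\rvy:\rvy\in\ker(\rmA)\}$, since $\rmA(\rvx-\rvx_0)=0$ iff $\rvx-\rvx_0\in\ker(\rmA)$. By the rank-nullity theorem, $\dim\ker(\rmA)=n-\rank(\rmA)$. In case (2) with $\rank(\rmA)=n$, the null space is trivial, so the solution is unique. In case (3) with $\rank(\rmA)<n$, the null space has positive dimension, so (over $\R$) it contains infinitely many vectors, yielding infinitely many solutions.

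The main obstacle, such as it is, is purely bookkeeping: one must be careful that the three cases are mutually exclusive and jointly exhaustive given the trivial inequality $\rank(\rmA)\le\rank([\rmA\mid\rvb])\le\min\{m,n+1\}$ and $\rank(\rmA)\le n$. Once the equivalence in case (1) is established, cases (2) and (3) follow immediately from rank-nullity, so the argument is essentially a two-line reduction; I would simply present the affine-structure lemma and invoke rank-nullity directly, without further computation.
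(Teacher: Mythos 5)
Your proof is correct: part (1) is exactly the column-space characterization ($\rank(\rmA)=\rank([\rmA\mid\rvb])$ iff $\rvb\in\mathrm{col}(\rmA)$ iff a solution exists), and parts (2) and (3) follow from the affine structure of the solution set together with rank--nullity. The paper does not actually prove this classical theorem---it cites it as well known and only remarks that it ``essentially states that if $\rvb$ is in the column space of $\rmA$, then the system has a solution''---and your argument is the standard one that fills in precisely that remark, so there is nothing to contrast.
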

This theorem essentially states that if $\rvb$ is in the column space of $\rmA$, then the system has a solution.

\begin{lemma}\label{lem: solution with khatri-rao app}
Let $\rmC\in\mat^{n\times nm}$, $\rmX\in\mat^{nm\times nm}$ a diagonal matrix, $\rmB\in\mat^{nm\times m}$ and $\im\in\mat^{n\times m}$ then the solution of
\begin{equation}
    \rmC\rmX\rmB = \im
\end{equation} 
with respect to the elements of the matrix $\rmX$ can be recasted as
\begin{equation}
    (\rmC\kar\rmB^T)\mathrm{vecd}(\rmX) = \mathrm{vec}(\im)
\end{equation}
where  we use $\mathrm{vecd}$ to denote the vector consisting of only the diagonal elements of the diagonal matrix $\rmX$ and $\mathrm{vec}$ is the row-major vectorization of $\im$. %
\end{lemma}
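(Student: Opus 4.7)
}
The plan is to directly expand both sides and verify that the linear system $\rmC\rmX\rmB=\im$ in the variables $\{\rmX_{kk}\}_{k=1}^{nm}$ is coordinate-wise identical to $(\rmC\kar\rmB^T)\,\mathrm{vecd}(\rmX)=\mathrm{vec}(\im)$. Since $\rmX$ is diagonal, write $x_k:=\rmX_{kk}$ for $k=1,\dots,nm$. Expanding the $(i,j)$-entry of $\rmC\rmX\rmB$ gives
\begin{equation*}
(\rmC\rmX\rmB)_{ij}=\sum_{k=1}^{nm}\rmC_{ik}\,x_k\,\rmB_{kj},
\end{equation*}
which is linear in $x_1,\dots,x_{nm}$ with coefficient $\rmC_{ik}\rmB_{kj}$ in front of $x_k$. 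So the original equation is a system of $nm$ scalar linear equations (indexed by $(i,j)\in[n]\times[m]$) in the $nm$ unknowns $x_k$.

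The second step is to place that system in matrix–vector form. Using the row-major vectorization, the scalar equation corresponding to index $(i,j)$ is placed in row $(i-1)m+j$ of the vectorized system, with right-hand side $\im_{ij}=(\mathrm{vec}(\im))_{(i-1)m+j}$. It then remains to identify the coefficient matrix. By the definition of the Khatri-Rao product (\cref{def: KR }), the $k$-th column of $\rmC\kar\rmB^T$ is $\rmC_k\otimes(\rmB^T)_k$, where $\rmC_k\in\R^n$ is the $k$-th column of $\rmC$ and $(\rmB^T)_k\in\R^m$ is the $k$-th column of $\rmB^T$, i.e.\ the $k$-th row of $\rmB$ viewed as a column vector $(\rmB_{k1},\dots,\rmB_{km})^T$. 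By the standard convention for Kronecker products, the $((i-1)m+j)$-th entry of this column is $\rmC_{ik}\rmB_{kj}$, which is exactly the coefficient of $x_k$ appearing in the $(i,j)$ scalar equation derived above.

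Combining these two observations, the $((i-1)m+j)$-th row of $(\rmC\kar\rmB^T)\,\mathrm{vecd}(\rmX)$ equals $\sum_{k=1}^{nm}\rmC_{ik}\rmB_{kj}\,x_k=(\rmC\rmX\rmB)_{ij}=(\mathrm{vec}(\im))_{(i-1)m+j}$, so the two systems coincide row by row. I expect no serious obstacle here beyond careful index bookkeeping: the only place to slip is in matching conventions (row-major versus column-major vectorization, and which factor appears on the left of the Kronecker product). Once the convention used in \cref{def: KR } and in $\mathrm{vec}(\cdot)$ is fixed consistently, the identification is immediate.
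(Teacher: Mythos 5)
Your proposal is correct and follows essentially the same route as the paper's proof: expand $(\rmC\rmX\rmB)_{ij}=\sum_k \rmC_{ik}x_k\rmB_{kj}$, read this as a linear system of $nm$ equations in the $nm$ diagonal unknowns, and recognize the coefficient matrix as $\rmC\kar\rmB^T$ under row-major vectorization. Your version is, if anything, slightly more careful about the index bookkeeping (explicitly matching the $((i-1)m+j)$-th Kronecker entry to $\rmC_{ik}\rmB_{kj}$) than the paper's, which simply displays the expanded system and identifies the matrix by inspection.
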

\begin{proof}
We want to show that there exists a choice of the elements of $\rmX$ such that
\begin{equation}
    \rmC\rmX\rmB = \im
\end{equation} 
where $\rmX$ is a diagonal matrix. 
Equivalently,  we can rewrite the above equation as
\begin{equation}
\renewcommand\arraystretch{2}
\begin{bmatrix}
\sum_{\neuron} c_{1,\neuron}x_{\neuron}b_{\neuron,1} &\sum_{\neuron} c_{1,\neuron}x_{\neuron}b_{\neuron,2}  &\hdots  &\sum_{\neuron} c_{1,\neuron}x_{\neuron}b_{\neuron,m} \\ 
\sum_{\neuron} c_{2,\neuron}x_{\neuron}b_{\neuron,1} & \sum_{\neuron} c_{2,\neuron}x_{\neuron}b_{\neuron,2} &\hdots  &\sum_{\neuron} c_{2,\neuron}x_{\neuron}b_{\neuron,m} \\ 
 \vdots&  \ddots& \hdots & \vdots \\ 
 \sum_{\neuron} c_{n,\neuron}x_{\neuron}b_{\neuron,1}& \sum_{\neuron} c_{n,\neuron}x_{\neuron}b_{\neuron,2} & \hdots &\sum_{\neuron} c_{n,\neuron}x_{\neuron}b_{\neuron,m} 
\end{bmatrix} = \begin{bmatrix}
\iw_{1,1} &\iw_{1,2} &\hdots &\iw_{1,m}\\
\iw_{2,1} &\iw_{2,2} &\hdots &\iw_{2,m}\\
 \vdots&  \ddots& \hdots & \vdots \\
 \iw_{n,1} &\iw_{n,2} &\hdots &\iw_{n,m}
\end{bmatrix}
\end{equation}
for some choice of the elements of $\rmX$.

We rewrite the above matrix equation as a linear system and we have
\begin{equation}
\renewcommand\arraystretch{1.5}
\left\{\begin{matrix}
\sum_\neuron c_{1,\neuron}x_{\neuron}b_{\neuron,1} &  =&  \iw_{1,1}& \\ 
\sum_\neuron c_{1,\neuron}x_{\neuron}b_{\neuron,2} &  =&\iw_{1,2}  & \\ 
 &  \vdots&  & \\
 \sum_\neuron c_{1,\neuron}x_{\neuron}b_{\neuron,m} &  =&\iw_{1,m}  & \\

  &  \vdots&  & \\
  \sum_\neuron c_{n,\neuron}x_{\neuron}b_{\neuron,1} &  =&  \iw_{n,1}& \\ 
\sum_\neuron c_{n,\neuron}x_{\neuron}b_{\neuron,2} &  =&\iw_{n,2}  & \\ 
 &  \vdots&  & \\
 \sum_\neuron c_{n,\neuron}x_{\neuron}b_{\neuron,m} &  =&\iw_{n,m}  & \\ 
\end{matrix}\right.
\end{equation}
Finally, we rewrite the linear system in a matrix form and we have
\begin{equation}
 \begin{bmatrix}
 c_{1,1}b_{1,1} &c_{1,2}b_{2,1}   &\hdots  &c_{1,nm}b_{nm,1}  \\ 
c_{1,1}b_{1,2} &c_{1,2}b_{2,2}   &\hdots  &c_{1,nm}b_{nm,2}  \\ 
 \vdots&  \ddots& \hdots & \vdots \\ 
 c_{1,1}b_{1,m} &c_{1,2}b_{2,m}   &\hdots  &c_{1,nm}b_{nm,m}  \\ 
 \vdots&  \vdots& \vdots & \vdots \\ 
 c_{n,1}b_{1,1} &c_{n,2}b_{2,1}   &\hdots  &c_{n,nm}b_{nm,1}  \\ 
c_{n,1}b_{1,2} &c_{n,2}b_{2,2}   &\hdots  &c_{n,nm}b_{nm,2}  \\ 
 \vdots&  \ddots& \hdots & \vdots \\ 
 c_{n,1}b_{1,m} &c_{n,2}b_{2,m}   &\hdots  &c_{n,nm}b_{nm,m}
 \end{bmatrix}   \begin{bmatrix}
 x_1\\ 
x_2\\ 
\vdots\\
x_{m}\\
\vdots\\
x_{nm-m+1}\\
x_{nm-m+2}\\
\vdots\\
x_{nm}
\end{bmatrix} = \begin{bmatrix}
 \iw_{1,1}\\ 
\iw_{1,2}\\ 
\vdots\\
\iw_{1,m}\\
\vdots\\
\iw_{n,1}\\
\iw_{n,2}\\
\vdots\\
\iw_{n,m}
\end{bmatrix}
\end{equation}
The matrix appearing in this equation is the so-called Khatri-Rao product (or column-wise Kronecker product) of the $\rmC,\rmB^T$.
\end{proof}

\subsection{Results on random matrices}

\textbf{Khatri-Rao product and rank. }%
The full rankness of the Khatri-Rao for random matrices has been proven in  \cite{JiangSidir2001}, over the set of complex numbers. This however doesn't instantly imply that the result holds for the reals too. We thus prove it here for completeness. We will first introduce a lemma, that  has been proven in \citep{caron2005zero} and shows that if a polynomial is not the zero polynomial then the set that results to zero value of the polynomial is of measure zero. %
\begin{lemma}\label{lem: measure zero}
Let $p(\data)$  be a polynomial of degree $d$, $\data\in\R^n$. If $p$ is not the zero polynomial, then the set
\begin{equation}
    \mathcal{S} := \braces{\data\in\R^n | p(\data) =0}
\end{equation}
is of measure zero.\footnote{Specifically Lebesgue measure zero.}
\end{lemma}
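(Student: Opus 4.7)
The plan is to proceed by induction on the number of variables $n$. For the base case $n=1$, the statement reduces to the fundamental fact that a nonzero univariate polynomial of degree $d$ has at most $d$ roots, and a finite subset of $\R$ has Lebesgue measure zero.

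For the inductive step, I would assume the result for polynomials in $n-1$ variables. Given a nonzero polynomial $p(\data)$ in $n$ variables, I would group terms by the power of the last coordinate $\data_n$ and write
\begin{equation*}
    p(\data_1,\dots,\data_n) \;=\; \sum_{k=0}^{d} q_k(\data_1,\dots,\data_{n-1})\, \data_n^{\,k},
\end{equation*}
where each coefficient $q_k$ is a polynomial in $n-1$ variables of degree at most $d$. Since $p$ is not the zero polynomial, at least one coefficient, say $q_{k^*}$, is also not the zero polynomial. By the inductive hypothesis, the ``bad'' set $\mathcal{E} := \{(\data_1,\dots,\data_{n-1})\in\R^{n-1} : q_{k^*}(\data_1,\dots,\data_{n-1}) = 0\}$ has Lebesgue measure zero in $\R^{n-1}$.

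I would then finish via Fubini--Tonelli applied to the indicator $\bOne_{\{p=0\}}$, which is measurable because $\{p=0\}$ is closed. The key observation is that for every fixed $(\data_1,\dots,\data_{n-1})\notin \mathcal{E}$, the resulting univariate polynomial $p(\data_1,\dots,\data_{n-1},\cdot)$ in $\data_n$ is nonzero of degree at most $d$, so its zero set in $\R$ is finite and hence has measure zero. Therefore,
\begin{equation*}
\mu_n(\mathcal{S}) \;=\; \int_{\R^{n-1}} \mu_1\bigl(\{\data_n : p(\data_1,\dots,\data_n)=0\}\bigr)\, d\data_1\cdots d\data_{n-1},
\end{equation*}
and the integrand vanishes outside $\mathcal{E}$ while the set $\mathcal{E}$ itself contributes zero, giving $\mu_n(\mathcal{S})=0$.

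I do not anticipate a real obstacle: the argument is a standard inductive application of Fubini. The only care required is (i) identifying a coefficient polynomial $q_{k^*}$ that is provably nonzero (immediate from the assumption that $p$ itself is nonzero, since polynomials in $\data_n$ over the integral domain $\R[\data_1,\dots,\data_{n-1}]$ vanish identically only if every coefficient does), and (ii) verifying measurability, which follows because $\{p=0\}$ is the preimage of the closed set $\{0\}$ under a continuous map.
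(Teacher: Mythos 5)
Your proof is correct. The paper does not actually prove this lemma---it simply cites \citet{caron2005zero} and uses the statement as a black box---so there is no in-paper argument to compare against. Your induction-on-$n$ plus Fubini--Tonelli argument is the standard proof (and is essentially the one given in that reference): the base case via finiteness of roots of a nonzero univariate polynomial, the identification of a nonzero coefficient $q_{k^*}$ in $\R[\data_1,\dots,\data_{n-1}][\data_n]$, and the observation that almost every slice of $\{p=0\}$ in the $\data_n$-direction is finite are all handled correctly, and the measurability point needed to apply Tonelli is addressed. This is a complete, self-contained justification of the lemma.
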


\begin{lemma}\label{lem: KR full rank app}
Let $\rmA,\rmB$ be matrices whose elements are drawn independently from some continuous distributions, with $\rmA\in\mat^{n\times nm}$, $\rmB\in\mat^{m\times nm}$. Then the matrix forming their Khatri-Rao product is full rank with probability $1$.
\end{lemma}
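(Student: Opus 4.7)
The plan is to view $\det(\rmA\kar\rmB)$ as a polynomial $p$ in the $nm(n+m)$ entries of $\rmA$ and $\rmB$, and then to show that $p$ is not the zero polynomial by exhibiting a single deterministic instance on which it evaluates to a non-zero number. Each entry of $\rmA\kar\rmB$ has the form $(\rmA_i)_a(\rmB_i)_b$, so $p$ is indeed a polynomial (of total degree $2nm$) in the entries of $(\rmA,\rmB)$. Once $p$ is known to be non-zero as a polynomial, \cref{lem: measure zero} tells us that $\{(\rmA,\rmB): p(\rmA,\rmB)=0\}$ has Lebesgue measure zero in $\R^{nm(n+m)}$; since each entry is drawn independently from a continuous (hence absolutely continuous) distribution, the joint law is absolutely continuous with respect to Lebesgue measure, and so this singular set is hit with probability zero.

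The only real work is producing the witness. For each $i\in[nm]$ I write $i-1 = (a_i-1)m + (b_i-1)$ uniquely with $a_i\in[n]$ and $b_i\in[m]$, that is, $a_i=\lceil i/m\rceil$ and $b_i = ((i-1)\bmod m)+1$. Now take the $i$-th columns to be standard basis vectors $\rmA_i:=\ve_{a_i}\in\R^n$ and $\rmB_i:=\ve_{b_i}\in\R^m$, where $\ve_j$ denotes the $j$-th standard basis vector of the appropriate dimension. A direct calculation using the definition of the Kronecker product gives
\begin{equation*}
\rmA_i\kron\rmB_i \;=\; \ve_{a_i}\kron\ve_{b_i} \;=\; \ve_{(a_i-1)m+b_i} \;=\; \ve_i \in \R^{nm},
\end{equation*}
so that, under this deterministic choice, $\rmA\kar\rmB=\id_{nm}$, whose determinant equals $1$. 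Hence $p$ is not identically zero.

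I do not anticipate any serious obstacle: the explicit construction is the standard combinatorial observation that picking columns of $\rmA$ and $\rmB$ from standard bases makes the Khatri-Rao product collapse to a permutation of the identity, and the polynomial-to-almost-sure transfer is routine given \cref{lem: measure zero}. It is worth flagging only that continuity of the entry distributions is used precisely at the last step, in order to convert a Lebesgue-null set into a probability-zero event; without absolute continuity (for instance for discrete distributions) the conclusion can fail, since the singular variety may then be hit with positive mass.
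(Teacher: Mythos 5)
Your proof is correct and follows essentially the same route as the paper's: reduce to showing the determinant is not the zero polynomial via \cref{lem: measure zero}, then exhibit a witness assignment under which $\rmA\kar\rmB$ becomes the identity (your standard-basis-vector columns are exactly the paper's block-of-ones choice for $\rmA$ and repeated-identity choice for $\rmB$, written column by column). No gaps.
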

\begin{proof}
We rewrite here the Khatri-Rao product of two matrices for clarity
\begin{equation}
    \rmA\kar\rmB = \begin{bmatrix}
    \eva_{1,1}\ermB_1&\hdots&\eva_{1,nm}\ermB_{nm}\\
    \vdots&\hdots &\vdots\\
    \eva_{n,1}\ermB_1&\hdots&\eva_{n,nm}\ermB_{nm}
    \end{bmatrix}
\end{equation}
Using \cref{lem: measure zero} we have that it is sufficient to find an assignment of the values, such that the determinant of the matrix is not the zero polynomial. One such assignment is the following 
\vspace{3em}
\begin{equation}
\begin{matrix}
    \rmA = \begin{bmatrix}
 \bovermat{m columns}{1& \hdots& 1}&  \bovermat{m columns}{0& \hdots& 0}&  \bovermat{m columns}{0& \hdots& 0}\\
    0& \hdots& 0&  1&\hdots& 1& 0& \hdots& 0 &\hdots\\
    0& \hdots& 0&  0& \hdots& 0 & 1&\hdots& 1 &\hdots\\ 
    \vdots& \ddots& \vdots&  \vdots& \ddots& \vdots&\vdots& \ddots& \vdots &\hdots \\
     0& \hdots& 0  &  0& \hdots& 0  &  0& \hdots& 0 &\hdots
    \end{bmatrix}
\end{matrix}
\end{equation}

And for the choice of B we have 
\begin{equation}
    \begin{matrix}
    \rmB = \begin{bmatrix}
    \id_{m} | \hdots |\id_{m}
    \end{bmatrix}
    \end{matrix}
\end{equation}
Then the Khatri-Rao of $\rmA\kar\rmB$ is equal to the identity matrix and so its determinant is not identical to zero and this completes the proof.
\end{proof}
\begin{remark}\label{rem: KR more matrices}
If we have the Khatri-Rao product of four or three random matrices, \ie $(\rmA\rmB)\kar(\rmC\rmD)$ or $\rmA\kar(\rmC\rmD)$, where $\rmA\in\R^{n\times nm}$, $\rmB\in\R^{nm\times nm}$, $\rmC\in\R^{m\times nm}$ and $\rmD\in\R^{nm\times nm}$ then we can set $\rmD, \rmB$ to be the identity and we just go back to the Khatri-Rao of two matrices.
\end{remark}

\section{Width Overparameterization}\label{app: width}
Before we proceed with the theorems and the proofs of this section we remind the following key concepts \& assumptions used: 
\begin{itemize}
    \item When referring to a randomly initialized neural network with \acl{BNs} it means that all parameters (weights, mean and variance) are randomly initialized from an arbitrary continuous and bounded distribution except for the scaling and shifting parameters which we can control.
    \item Unless stated otherwise $\twonorm{\data}\leq 1$.
    \item We will say that a network has rank $r$ if the rank of the matrix before the activation function is $r$.
\end{itemize}

\begin{tcolorbox}[enhanced,width=5.5in, drop fuzzy shadow southwest,    boxrule=0.4pt,,colframe=yellow!50!blue,colback=blue!10]
We start by showing that almost surely any two layer random \acl{relu} network with \acl{BNs}, of width $\wid^2$ can reconstruct any one layer \acl{relu} network with \acl{BNs} and $\wid$ width, by tuning the normalization scaling and shifting parameters. 
\end{tcolorbox}
\begin{proposition}\label{prop: simple form of one layer}
Any layer of a linear network with \acl{BNs}, \ie 
\begin{equation}
    f(\data) = \mscale\dfrac{\weights \data -\mean}{\var} + \mshift,
\end{equation}
can be equivalently expressed as
\begin{equation}
    f(\data) = \mscale'\weights\data +\mshift'
\end{equation}
\end{proposition}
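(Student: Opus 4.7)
The plan is to carry out a direct algebraic manipulation, using the fact that at inference time the batch mean $\mean$ and batch variance $\var$ are constants (as emphasized earlier in the paper). The key observation is that division by $\var$ is itself a diagonal linear operation, and hence composes with $\mscale$ to give another diagonal matrix; meanwhile, the mean-subtraction term becomes a pure bias that can be absorbed into the shift.

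Concretely, I would first rewrite the normalization $\tfrac{\weights\data-\mean}{\var}$ as $\mathbf{V}^{-1}(\weights\data-\mean)$, where $\mathbf{V}$ is the diagonal matrix with entries given by the (constant) coordinates of $\var$ (with the usual $\epsilon\id$ stabilizer folded in, as in equation \eqref{eq:BN_type2}). Then I would distribute:
\begin{equation*}
f(\data)=\mscale\mathbf{V}^{-1}\weights\data \;-\; \mscale\mathbf{V}^{-1}\mean \;+\; \mshift .
\end{equation*}
Since $\mscale$ and $\mathbf{V}^{-1}$ are both diagonal, their product is a diagonal matrix, so I would define
\begin{equation*}
\mscale' := \mscale\mathbf{V}^{-1}, \qquad \mshift' := \mshift - \mscale\mathbf{V}^{-1}\mean = \mshift - \mscale'\mean,
\end{equation*}
and conclude $f(\data)=\mscale'\weights\data+\mshift'$, as claimed.

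The only thing to be careful about is just the bookkeeping that $\mscale'$ is diagonal (so it is a legitimate BN-style scaling matrix) and that all the quantities being absorbed into $\mshift'$ are indeed fixed at inference time; no non-trivial step is required. There is no real obstacle here — the proposition is essentially a rewriting lemma whose purpose is to justify using the simpler form \eqref{eq:scaling_transform} in the rest of the analysis rather than the bulkier BN expression, so that subsequent proofs (such as those of \cref{thm:width} and \cref{th: depth}) can treat a normalization layer as a plain diagonal scaling followed by a bias.
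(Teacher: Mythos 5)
Your proposal is correct and matches the paper's proof exactly: the paper likewise sets $\mscale' = \mscale/\var$ and $\mshift' = \mshift - \mscale'\mean$ and notes the result follows trivially. Your write-up merely spells out the diagonality and constancy bookkeeping that the paper leaves implicit.
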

\begin{proof}
 The proof follows trivially by setting $\mscale' =\mscale/\var$ and $\mshift' = \mshift - \mscale'\mean$
\end{proof}

\begin{lemma}\label{lem: approximation of one layer} Consider a randomly initialized two layer \acl{relu} network 
 $f$ with \acl{BN} and width $\wid^2$; then  
by  appropriately choosing the scaling and shifting parameters of the \acl{BNs}, the network can be functionally equivalent with  any one layer network $h$ of the same architecture and width $\wid$ with probability $1$.
\end{lemma}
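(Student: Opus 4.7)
The plan is to reduce the problem to a linear system for the diagonal entries of the first normalization matrix, solved via the Khatri-Rao identity, after first killing the ReLU nonlinearity by forcing every neuron of the hidden layer into its linear regime. By \cref{prop: simple form of one layer}, the target one layer network can be written as $h(\data) = \mscale^{*}\im\data + \mshift^{*}$ for some diagonal $\mscale^{*}\in\R^{d\times d}$ and some $\mshift^{*}\in\R^{d}$. Write the two layer random network as
\[
f(\data) \;=\; \mscale_{2}\weights_{2}\,\relu\!\parens{\mscale_{1}\weights_{1}\data + \mshift_{1}}\; +\; \mshift_{2},
\]
where $\weights_{1}\in\R^{d^{2}\times d}$ and $\weights_{2}\in\R^{d\times d^{2}}$ have i.i.d.\ entries from a continuous bounded distribution.

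First I would linearize the hidden layer. Since $\|\data\|\leq 1$ and the entries of $\weights_{1}$ are bounded, each coordinate of $\mscale_{1}\weights_{1}\data$ stays in a bounded interval; so I can pick the coordinates of $\mshift_{1}$ large enough (neuron by neuron, in terms of the corresponding diagonal entry of $\mscale_{1}$) that every pre-activation is non-negative on the whole unit ball. The ReLU then acts as the identity and
\[
f(\data) \;=\; \mscale_{2}\weights_{2}\mscale_{1}\weights_{1}\data \;+\; \parens{\mscale_{2}\weights_{2}\mshift_{1} + \mshift_{2}}.
\]
Next I would set $\mscale_{2}:=\id_{d}$ and $\mshift_{2}:=\mshift^{*}-\weights_{2}\mshift_{1}$, which reduces $f\equiv h$ to finding a diagonal $\mscale_{1}$ solving $\weights_{2}\mscale_{1}\weights_{1} = \mscale^{*}\im$.

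To solve this matrix equation I would invoke \cref{lem: solution with khatri-rao app}, which rewrites it as the $d^{2}\times d^{2}$ linear system
\[
\parens{\weights_{2}\kar\weights_{1}^{T}}\,\vecd\parens{\mscale_{1}} \;=\; \myvec\parens{\mscale^{*}\im}.
\]
Then \cref{lem: KR full rank app} guarantees that, since $\weights_{2}$ and $\weights_{1}^{T}$ are independent matrices in $\R^{d\times d^{2}}$ with continuous i.i.d.\ entries, the coefficient matrix $\weights_{2}\kar\weights_{1}^{T}$ is invertible with probability one. This yields a unique $\mscale_{1}$ for every target $(\mscale^{*},\im)$, and combined with the choices of $\mscale_{2},\mshift_{1},\mshift_{2}$ above gives $f\equiv h$ on the unit ball almost surely.

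The one nontrivial ingredient is the invertibility of $\weights_{2}\kar\weights_{1}^{T}$: its rows are not independent, so a direct Gaussian elimination style argument fails. The route in \cref{lem: KR full rank app} is to view $\det\parens{\weights_{2}\kar\weights_{1}^{T}}$ as a polynomial in the $2d^{3}$ entries of $\weights_{1},\weights_{2}$, exhibit an explicit $\{0,1\}$ assignment that makes it equal $\pm 1$ (so the polynomial is not identically zero), and then apply \cref{lem: measure zero} to conclude that its zero set has Lebesgue measure zero. Everything else in the argument---the choice of $\mshift_{1}$, the absorbing of the affine correction into $\mshift_{2}$, and the final composition---is routine bookkeeping.
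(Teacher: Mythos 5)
Your proposal is correct and follows essentially the same route as the paper: linearize the hidden layer by choosing $\mshift_{1}$ large enough relative to $\abs{\batchscale_{1}^{\neuron}}$, absorb the resulting affine offset into $\mshift_{2}$, reduce to the matrix equation $\weights_{2}\mscale_{1}\weights_{1}=\mscale^{*}\im$, and solve it via the Khatri-Rao vectorization identity together with the almost-sure invertibility of $\weights_{2}\kar\weights_{1}^{T}$ (proved exactly as you describe, by exhibiting a $\{0,1\}$ assignment making the determinant nonzero and invoking the measure-zero lemma). The only cosmetic difference is that the paper folds $\mscale^{*}$ and the batch mean/variance constants into $\mscale_{2}$ and $\mshift_{2}$ rather than into the right-hand side of the linear system, which changes nothing substantive.
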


\begin{proof}
Using \cref{prop: simple form of one layer}
we can write the one layer \acl{relu} network with \acl{BNs} as
\begin{equation}
   h(\data) = \relu(\mscale^*\im \data + \mshift^*)
\end{equation}
where $\mscale^*,\im\in\R^{\wid\times\wid}$ and $\mshift^*\in\R^{\wid}$.

The output of a  two layer neural network with two layer of \acl{BN}, can be written as
\begin{equation}
   f(\data) = \relu\left(\mscale' \dfrac{\altweights
   \relu\left(\mscale\dfrac{\weights\data -\mean}{s} + \mshift\right) -\mean'}{\var'} +\mshift'\right)
\end{equation}
where $\mscale \in \R^{\wid^2\times \wid^2}$, $\mscale' \in \R^{\wid\times \wid}$ and they are diagonal matrices, $\mshift \in\R^{\wid^2}$ and  $\mshift \in \R^\wid$. Also $\weights\in\R^{\wid^2\times \wid}$ and $\altweights\in\R^{\wid\times \wid^2} $.

The output of the $i$-th neuron in first layer before the \acl{relu} activation is a vector with values
\begin{equation}
   f^1_\neuron(\data) = \frac{1}{\var}\batchscale_\neuron\inner{\weights_{\neuron }}{ \data} -\frac{1}{\var}\batchscale_\neuron\mean_\neuron+ \batchshift_\neuron
\end{equation}
Recall that we assume $\twonorm{\data} \leq 1$, and that the initializations are from a bounded distribution, say $\norm{\weights}\leq C$\footnote{We use here the Frobenius norm.}. Further, let $c = \max\{\abs{\mean_\neuron}\}$ and $c'=\abs{1/\var}$. Then, 
\begin{align*}
-\frac{1}{\var}\batchscale_\neuron\inner{\weights_{\neuron }}{ \data} +\frac{1}{\var}\batchscale_\neuron\mean_\neuron &\leq \abs{\frac{1}{\var}\batchscale_\neuron\inner{\weights_{\neuron }}{ \data} -\frac{1}{\var}\batchscale_\neuron\mean_\neuron} \\
&\leq \abs{\batchscale_\neuron/\var}\abs{\inner{\weights_\neuron}{\data} - \mean_\neuron}\\
&\leq c'\abs{\batchscale_\neuron}(\abs{\inner{\weights_\neuron}{\data}}+\abs{ \mean_\neuron})\\
&\leq c'\abs{\batchscale_\neuron}\parens*{\twonorm{\weights_\neuron}\twonorm{\data} + \abs{\mean_\neuron}}\\ 
&\leq c'(C+c)\abs{\batchscale_\neuron}.
\end{align*}
Hence, if $\batchshift_\neuron \geq C'\abs{\batchscale_\neuron}$
for  $C'=c'(c+C)$,  
then
$\relu(f^1_i(\data)) = f^1_i(\data)$ because $f^1_i(\data)\geq 0$. Then, the output of the Neural Network is
\begin{equation}
   f(\data) =\relu\left(\frac{1}{\var\var'}\mscale' \altweights
   \mscale\weights\data - \frac{1}{\var\var'}\mscale' \altweights
   \mscale\mean + \frac{1}{\var'}\mscale' \altweights\mshift  - \frac{1}{\var'}\mscale'\mean'+\mshift'\right)
\end{equation}
Setting $\mscale' := \var\var'\mscale^*$ and $\mshift' := -\frac{1}{\var'}\mscale' \altweights\mshift + \frac{1}{\var\var'}\mscale'\altweights\mscale\mean + \frac{1}{\var'}\mscale'\mean' + \mshift^*$, we get
\begin{equation}
   f(\data) =\relu\parens{\mscale^*\altweights
   \mscale\weights\data + \mshift^*}
\end{equation}
In order to prove that this Neural Network can exactly express any layer $h(\data) = \relu(\mscale^*\im \data + \mshift^*)$ it is sufficient to show that 
\begin{equation}
   \altweights
   \mscale\weights = \im
\end{equation}
By \cref{lem: solution with khatri-rao app}, combined with \cref{thm:Capelli} it is sufficient to prove that the Khatri-Rao product of $\altweights, \weights^T$ is full rank, then the linear system will have a unique solution. Finally, using \cref{lem: KR full rank app} concludes the proof.
\end{proof}

\begin{remark}
Since we showed in the proof of \cref{lem: approximation of one layer}, how we can incorporate the \acl{BN} parameters of the targeted network in our analysis, we are going to ignore them from now on, for simplicity of exposition.
\end{remark}
Having established this result, we can show that if the targeted neural network is in fact low rank, by doubling the depth, we just need width that scales with the rank. When we say that the neural network is low rank, this can imply two facts one the scaling parameter of the \acl{BNs} has some zeros or that the weight matrix is low rank. In the first case, it follows trivially from the above proof that we will just have to focus on the non-zero rows since we can match the zeros one by choosing the corresponding scaling parameters to be zero. We treat the second case below.
\begin{corollary}
Consider a randomly initialized four layer \acl{relu} network  with \acl{BN} and width $\wid\mrank $; then  
by  appropriately choosing the scaling and shifting parameters of the \acl{BNs}, the network can be functionally equivalent with  any one layer network of the same architecture, width $\wid$ and weight matrix of $\mrank< \wid$ with probability $1$ .
\end{corollary}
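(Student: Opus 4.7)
My plan is to reduce this corollary to Theorem~\ref{thm:width} via a low-rank factorization of the target. Since the target is a one-layer BN network, Proposition~\ref{prop: simple form of one layer} lets me absorb its \acl{NZ} parameters into an effective weight and bias, so I may assume without loss of generality that $g(x)=\relu(W^{*}x+b^{*})$ with $W^{*}\in\mathbb{R}^{d\times d}$ of rank $r$. Factor $W^{*}=UV$ with $U\in\mathbb{R}^{d\times r}$ and $V\in\mathbb{R}^{r\times d}$, and reinterpret $g$ as a depth-$2$ ReLU network with a width-$r$ bottleneck. Once this recasting is in place, the corollary becomes a direct application of Theorem~\ref{thm:width} at depth $\dep'=2$.

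The one wrinkle is that $UV$ has no ReLU between $V$ and $U$, whereas a bona fide depth-$2$ ReLU network requires one. I would handle this exactly as in the main proof of Theorem~\ref{thm:width}: pick a bias $\beta^{*}_{1}\in\mathbb{R}^{r}$ whose coordinates strictly exceed $\max_{\|x\|\leq 1}|(Vx)_{i}|$, which is possible because $V$ is deterministic with bounded norm and the domain is bounded. Then $\relu(Vx+\beta^{*}_{1})=Vx+\beta^{*}_{1}$ on the whole domain, and absorbing $-U\beta^{*}_{1}$ into the top-layer bias yields
\[
g'(x)\;:=\;\relu\!\bigl(U\,\relu(Vx+\beta^{*}_{1})+b^{*}-U\beta^{*}_{1}\bigr)\;=\;\relu(UVx+b^{*})\;=\;g(x)
\]
for all $\|x\|\leq 1$. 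So $g'$ is a genuine two-layer ReLU network with layer widths $d\to r\to d$ that is functionally equivalent to $g$ on the domain.

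Next I would invoke Theorem~\ref{thm:width} on $g'$ in its non-uniform-width form (the extension flagged in the remark following Theorem~\ref{th: depth}). In that construction each target layer mapping $d_{\mathrm{in}}\to d_{\mathrm{out}}$ is simulated by two layers of the random network with intermediate width $d_{\mathrm{in}}d_{\mathrm{out}}$. Applied to the two factors of $g'$, this yields an $f$ with widths $d\to dr\to r\to dr\to d$, i.e.\ depth $4$ and maximum width $dr$, matching the corollary exactly. Probability-$1$ solvability of the two resulting Khatri--Rao systems follows from Lemmas~\ref{lem: solution with khatri-rao} and~\ref{lem: KR full rank }, whose proofs carry over verbatim to the rectangular shapes $r\times dr$ and $d\times dr$ arising here, via the measure-zero polynomial argument of Lemma~\ref{lem: KR full rank app}.

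The main obstacle is conceptual rather than technical: one must recognize that a rank-$r$ one-layer network silently encodes a width-$r$ bottleneck between two ReLU layers via the bias-activation trick. Once that translation is made, no new random-matrix estimate, concentration bound, or inductive argument is required beyond what is already available in Theorem~\ref{thm:width}; the diagonal BN matrices $\mscale_{2}$ and $\mscale_{4}$ in the two reconstructed blocks can simply be set to the identity, as in the proof sketch in Section~\ref{sec: wide}.
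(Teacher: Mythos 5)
Your proposal is correct and follows essentially the same route as the paper: factor the rank-$r$ weight matrix into $d\times r$ and $r\times d$ factors, treat the target as a depth-$2$ network with a width-$r$ bottleneck, and reconstruct each factor with a two-layer block of width $dr$ via the linearization-plus-Khatri--Rao argument of Theorem~\ref{thm:width}. Your explicit handling of the artificial middle ReLU via the large-bias trick is a point the paper leaves implicit, but it is the same mechanism already used to linearize the interior ReLUs of $f$, so nothing new is needed.
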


\begin{proof}
Since the weight matrix $\im$ has rank $\mrank<\wid$, we can use SVD decomposition to write this matrix as two low rank matrices, \ie
\begin{equation}
    \im = \rmA^*\rmB^*
\end{equation}
where $\rmA^*\in\R^{\wid\times \mrank}$ and $\rmB^*\in\R^{\mrank\times \dd}$. We will only show how to reconstruct this matrix, since the rest of the proof follows as that of \cref{lem: approximation of one layer}. We will use the first two layers to reconstruct  $\rmB^*$  and  the last two layers to  reconstruct $\rmA^*$. More specifically, as we did in the proof of \cref{lem: approximation of one layer}, we first linearize the relu and then we want to show that the following linear system has a solution
\begin{align}
    \altweights_1\mscale_1\weights_1 =\rmB^*
\end{align}
where $\altweights_1\in\R^{r\times dr},\mscale_1\in\R^{dr\times dr}$ and $\weights_1\in\R^{dr\times d}$. This system has a solution as we showed in \cref{lem: approximation of one layer}. We 
now consider as $\data$ the output of these two layers which will be $\rmB^*\data$ after the reconstruction and we use the next two layers to reconstruct $\rmA^*$.
\end{proof}
\begin{remark}
We note here we can also treat $\mscale^*\im$ as one matrix and scale the width w.r.t. the rank of this matrix.
\end{remark}
\begin{theorem}[Restatement of \cref{thm:width}]\label{th: width app}Let $g(\data)= \im[\dep]\relu(\im[{\dep-1}]\relu(\hdots\relu(\im[1]\data))$ be any ReLU network of rank $\rankk$, depth $\dep$, where $\data\in\R^d$ and $\im[i]\in\bR^{\wid\times \wid}$ with $\|\im[i]\|\leq 1$ for all $i=1,\hdots \dep$. Consider a randomly initialized  ReLU network of depth $2\dep$ with  \acl{BN}:
\begin{equation*}
f(\data) = \mscale_{2\dep}\weights_{2\dep}\relu(\mscale_{2\dep-1}\weights_{2\dep-1}\relu(\hdots\relu(\mscale_1\weights_1\data + \mshift_1))+\mshift_{2\dep-1}) + \mshift_{2\dep} 
\end{equation*}
where $\mscale_i\in\mat^{\wid^2\times\wid^2}$, $\weights_i\in\bR^{d\times d^2}$ for all odd $i\in[2\dep]$ and  $\mscale_i\in\bR^{\wid\times \wid}$, $\weights_i\in\bR^{d^2\times d}$ for all even $i\in[2\dep]$. Then one can compute the \acl{BN} parameters $\mscale_1\dots \mscale_{2l}$ and $\mshift_1\dots \mshift_{2l}$, so that  with probability 1 the two networks are equivalent $f\equiv g$, i.e., $f(\data)=g(\data)$ for all $\data\in\R^d$ with $\|\data\|\leq 1$.

\end{theorem}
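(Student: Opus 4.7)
My plan is to reconstruct $g$ layer-by-layer: I will construct the pair of layers $2i-1,2i$ of $f$ to reproduce the action of the $i$-th layer of $g$ on the range of inputs that this layer receives. An induction on $i$ will then combine the per-layer reconstructions. Throughout, I will use the randomness of $\weights_1,\dots,\weights_{2\dep}$ only through \cref{lem: KR full rank} (full-rankness of the Khatri-Rao product) and \cref{lem: solution with khatri-rao} (recasting the matrix equation as a Khatri-Rao linear system).

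For the base case I take the first two layers of $f$ and show that for every $\data$ with $\|\data\|\leq 1$, the identity $\mscale_2\weights_2\relu(\mscale_1\weights_1\data+\mshift_1)+\mshift_2=\mscale_1^*\im[1]\data+\mshift_1^*$ can be forced. The key observation is that by picking each coordinate $\batchshift_1^{\,i}$ large enough relative to $|\batchscale_1^{\,i}|$ (the constant depending only on $\max_{i,j}|(\weights_1)_{ij}|$ and the domain bound $\|\data\|\le 1$), every ReLU of the first layer is activated, so the first layer acts linearly on the input. Then I choose $\mscale_2:=\mathbb{I}$ and $\mshift_2:=-\weights_2\mshift_1+\mshift_1^*$, which cancels the auxiliary bias $\weights_2\mshift_1$ and installs the target shift. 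What remains is to solve
\begin{equation*}
\weights_2\,\mscale_1\,\weights_1 \;=\; \mscale_1^*\im[1]
\end{equation*}
for the diagonal matrix $\mscale_1$. By \cref{lem: solution with khatri-rao} this is equivalent to the linear system $(\weights_2\kar\weights_1^\top)\,\mathrm{vecd}(\mscale_1)=\mathrm{vec}(\mscale_1^*\im[1])$, and by \cref{lem: KR full rank} the Khatri-Rao product $\weights_2\kar\weights_1^\top$ is a.s.\ invertible since $\weights_1,\weights_2$ have i.i.d.\ entries from a continuous distribution; hence the system has a (unique) solution with probability $1$.

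For the inductive step I assume layers $1,\dots,2(i-1)$ of $f$ already realize the first $i-1$ layers of $g$ exactly on every $\data$ with $\|\data\|\le 1$, so that the vector entering layer $2i-1$ of $f$ coincides with the vector entering layer $i$ of $g$; call it $\rvy$. By the assumption $\|\im[j]\|\le 1$ and $\|\data\|\le 1$, this $\rvy$ lies in a set bounded by some explicit constant (depending on the \acl{NZls} of $g$), which is all I need in order to choose $\mshift_{2i-1}$ large enough to linearize the ReLUs of layer $2i-1$ for every admissible $\rvy$. I then repeat exactly the base-case argument with $(\weights_{2i-1},\weights_{2i})$ in place of $(\weights_1,\weights_2)$ and target matrix $\mscale_i^*\im[i]$; the required Khatri-Rao system is independent of the previous layers' randomness, so the full-rankness still holds with probability $1$. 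A union bound over the $\dep$ per-layer zero-measure failure events keeps the total failure probability at $0$.

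The main technical hurdle is the linearization step: I must quantify $\mshift_{2i-1}$ in terms of the worst-case magnitude of the pre-activation $\mscale_{2i-1}\weights_{2i-1}\rvy$ over the admissible set of $\rvy$'s. This is where the assumptions $\|\im[j]\|\le 1$ and the boundedness of the initialization distribution enter: they yield a uniform constant $C'$ such that any choice $\batchshift_{2i-1}^{\,j}\ge C'|\batchscale_{2i-1}^{\,j}|$ suffices, and because $\mscale_{2i-1}$ is determined \emph{after} $\mshift_{2i-1}$ (it only needs to match the target matrix equation), I can first commit to a $\mshift_{2i-1}$ rescaled proportionally to the yet-to-be-chosen entries of $\mscale_{2i-1}$. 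Once this bookkeeping is in place, everything else reduces to the two probabilistic ingredients \cref{lem: solution with khatri-rao,lem: KR full rank} already established in \cref{sec: wide}, and the theorem follows.
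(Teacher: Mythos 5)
Your proposal is correct and follows essentially the same route as the paper: linearize the odd layers' ReLUs by taking $\batchshift \ge C'|\batchscale|$, cancel the auxiliary bias with the next layer's shift, reduce each per-layer matching to the linear system $(\weights_{2i}\kar\weights_{2i-1}^\top)\,\mathrm{vecd}(\mscale_{2i-1})=\mathrm{vec}(\mscale_i^*\im[i])$, and invoke the almost-sure full-rankness of the Khatri-Rao product, then compose layer-wise. The paper packages the per-layer step as \cref{lem: approximation of one layer} and applies it to each layer rather than writing an explicit induction, but the content is the same.
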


\begin{proof}
To prove this theorem, we apply  \cref{lem: approximation of one layer} layer-wise for every layer of $g$, to get corresponding two layers of $f$.  
\end{proof}

\begin{remark}
    The size of the network can be the same throughout all layers, by zero padding the matrices, without changing the result.
\end{remark}

\section{Depth Overparameterization}\label{app: depth}
\begin{figure}
   \centering
   \includegraphics[trim={0 18em 6em 0}, clip, width=\textwidth]{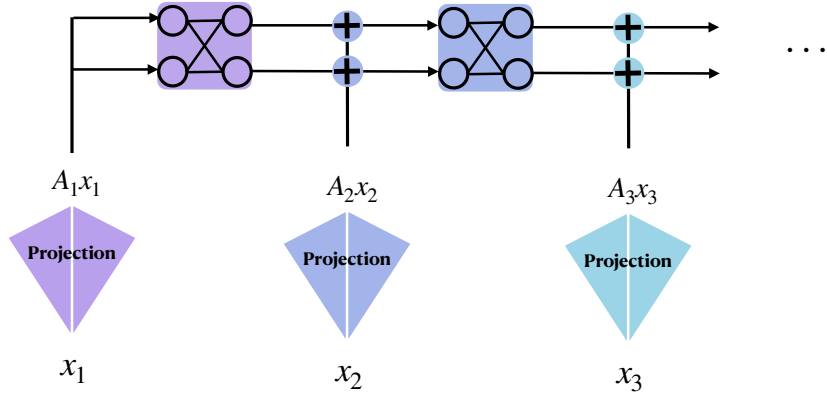}
   \caption{A visualization of the skip connections.  $\data_i$ is the $i$-th partition of the input $\data$.}
   \label{fig: deep network}
\end{figure}

\begin{tcolorbox}[enhanced,width=5.5in, drop fuzzy shadow southwest,    boxrule=0.4pt,,colframe=yellow!50!blue,colback=blue!10]
In this section we will show how the combination of a form of skip connections and depth can lead to the same results, with a reduced width at each layer. In \cref{fig: deep network} we have an illustration of the architecture used. 
\end{tcolorbox}
\begin{lemma}\label{lem: non zero component} Let $\rmA\in\mat^{n\times nm}$, $\rmC\in\mat^{m\times nm}$ be random matrices, $\rmD\in\mat^{nm\times nm}$ a diagonal fixed matrix with non-zero elements and $\iw$ a fixed vector which is  not the zero vector. Define $\rmB = (\rmA\rmD)\kar\rmC$, then
\begin{equation}
    \probof{\inner{\ermB^{-1}_i}{\iw} = 0} = 0
\end{equation}
where $\ermB^{-1}_i$ is the $i-$th row of the inverse of $\rmB$.

\end{lemma}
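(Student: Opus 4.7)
The plan is to reduce the claim to a Lemma \ref{lem: measure zero}-style non-vanishing polynomial argument, in the same spirit as the proof of Lemma \ref{lem: KR full rank app}. Setting $\rvy:=\rmB^{-1}\iw$, we have $\inner{\ermB^{-1}_i}{\iw}=\rvy_i$, and by Cramer's rule
\begin{equation*}
\inner{\ermB^{-1}_i}{\iw}=\frac{\det(\rmB^{(i)})}{\det(\rmB)},
\end{equation*}
where $\rmB^{(i)}$ denotes $\rmB=(\rmA\rmD)\kar\rmC$ with its $i$-th column replaced by $\iw$. Because $\rmD$ is diagonal with nonzero entries, each entry of $\rmA\rmD$ is a nonzero rescaling of the corresponding entry of $\rmA$, and is therefore still independent and continuously distributed; Lemma \ref{lem: KR full rank app} then gives $\det(\rmB)\neq 0$ almost surely. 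Hence it suffices to show $\probof{\det(\rmB^{(i)})=0}=0$.

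Every entry of $\rmB^{(i)}$ at a column $j\neq i$ is the degree-$2$ monomial $\rmD_{jj}\rmA_{a,j}\rmC_{b,j}$ for an appropriate $(a,b)$, so $\det(\rmB^{(i)})$ is a polynomial in the entries of $\rmA$ and $\rmC$. By Lemma \ref{lem: measure zero}, it then suffices to exhibit a single $(\rmA,\rmC)$ at which this polynomial does not vanish. The main (mild) obstacle is that the Khatri-Rao structure forces the entries \emph{within} a column of $\rmB$ to share the variables $\rmA_j$ and $\rmC_j$; however, \emph{different} columns of $\rmB$ use disjoint sets of variables, so each column of $\rmB$ can be assigned freely by choosing $\rmA_j$ and $\rmC_j$ separately. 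Concretely, pick $k\in[nm]$ with $\iw_k\neq 0$ (which exists as $\iw$ is nonzero) and any bijection $\pi\colon [nm]\setminus\{i\}\to [nm]\setminus\{k\}$. For each $j\neq i$ set
\begin{equation*}
(\rmA\rmD)_j := e_{\lceil \pi(j)/m\rceil}\in\R^n,\qquad \rmC_j := e_{((\pi(j)-1)\bmod m)+1}\in\R^m,
\end{equation*}
which is realizable through $\rmA_j := \rmD_{jj}^{-1}(\rmA\rmD)_j$ since $\rmD_{jj}\neq 0$; the $i$-th columns of $\rmA$ and $\rmC$ may be set arbitrarily. The identity $e_a\kron e_b=e_{(a-1)m+b}$ then gives that the $j$-th column of $\rmB$ equals $e_{\pi(j)}$ for every $j\neq i$, so the columns of $\rmB^{(i)}$ are the standard basis vectors $\{e_\ell:\ell\neq k\}$ at positions $j\neq i$ together with $\iw$ at position $i$. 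Expanding the determinant along column $i$ yields $\det(\rmB^{(i)})=\pm\,\iw_k\neq 0$.

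This shows that $\det(\rmB^{(i)})$ is a non-trivial polynomial in the entries of $\rmA,\rmC$; Lemma \ref{lem: measure zero} then implies that its zero set has Lebesgue measure zero. Since $\rmA$ and $\rmC$ have independent continuous entries, we obtain $\probof{\det(\rmB^{(i)})=0}=0$, which together with $\det(\rmB)\neq 0$ almost surely gives $\probof{\inner{\ermB^{-1}_i}{\iw}=0}=0$, completing the plan.
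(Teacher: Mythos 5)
Your proof is correct and follows essentially the same route as the paper's: both identify $\inner{\ermB^{-1}_i}{\iw}$ (up to the factor $1/\det(\rmB)$) with the determinant of $\rmB$ after replacing its $i$-th column by $\iw$, and then show this polynomial in the entries of $\rmA,\rmC$ is not identically zero by exhibiting an assignment that turns the Khatri-Rao product into (a scaling of) a permutation of the identity, invoking the measure-zero lemma to conclude. The only difference is cosmetic: you handle general $i$ and a general nonzero coordinate $\iw_k$ explicitly via the bijection $\pi$, where the paper argues ``without loss of generality'' with $i=1$ and $w_1^*\neq 0$.
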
       
\begin{proof}
Without loss of generality, we will focus on the first row of $\rmB^{-1}$, $\ermB^{-1}_1$ and we will drop the index $1$. We know that the elements of this row are $(\ermB^{-1})_{i} = \frac{1}{\det(\rmB)}(-1)^{1+i}\det_{-i,-1}$, where $\det_{-i,-1}$ is the determinant of matrix $\rmB$ with the $i-$th row and the first column deleted. So 
\begin{equation}\label{eq: determinant}
    \inner{\ermB^{-1}}{\iw} = \sum_{i=1}^{nm}(-1)^{1+i}\iw_i \det_{-i,-1}
\end{equation}
where we skipped the term $\frac{1}{\det(\rmB)}$ because it doesn't affect whether the inner product is zero or not.  Notice that \cref{eq: determinant} is actually the determinant of $\rmB$ if we substitute the first column with the vector $\iw$. Our matrix is now
\begin{equation}
    \rmB' =\begin{bmatrix}
       w_1^* & a_{1,2}d_{2} c_{1,2} & \hdots &a_{1,nm}d_{nm} c_{1,nm}\\
       w_2^* & a_{1,2}d_{2}c_{2,2}  & \hdots &a_{1,nm}d_{nm} c_{2,nm}\\
       \vdots& \vdots& \vdots &\vdots\\
        w_m^* & a_{1,2}d_{2}c_{m,2}  & \hdots &a_{1,nm}d_{nm} c_{m,nm}\\
       \vdots& \vdots& \ddots &\vdots\\
       w_{nm-m+1}^* & a_{n,2}d_{2} c_{1,2} & \hdots &a_{n,nm}d_{nm} c_{1,nm}\\
       \vdots& \vdots& \vdots &\vdots\\
       w_{nm}^* & a_{n,2}d_{2} c_{m,2} & \hdots &a_{n,nm}d_{nm} c_{m,nm}
    \end{bmatrix}
\end{equation}
Since $\iw$ is not a zero vector, there exist at least one non zero element of $\iw$. Again without loss of generality, assume that $w_1^*$ is that element. Then, we assign to the matrices $\rmA, \rmC$ the same values as in the proof of  \cref{lem: KR full rank }. In that case, the matrix is now the diagonal matrix $\rmD$ with the first row substituted by the vector $\iw$. This matrix is full rank, since all columns and rows are linearly independent.
\end{proof}

\begin{theorem}[Restatement of \cref{th: depth}]\label{th: depthapp}
Consider the following neural network
\begin{equation*}
    g(\data) = \im[l]\relu(\hdots\relu(\im[1]\data))
\end{equation*}
where $\im[i]\in\mat^{d\times d}$ with $\|\im[i]\|\leq 1$ for all $i=1,\hdots,l$.

Let the  $j(\slayers +1)$-th layer of the neural network $f$ be 
\begin{align*}
    f^j(\altdata) &= \relu(\weights^{j}_{\slayers+1}L^{j}_{\slayers }) \text{ for } j=1,\hdots, l,\\
    \text{where,}&\altdata\in\R^d,\\
    &\weights^j_{\slayers+1}\in\mat^{d\times dk}, \text{ for } j=1,\hdots,l\\
    &L^{j}_1 = \mscale^{j}_1\weights^{j}_1\altdata^{j}_1 + \mshift^{j}_1,\\
    &L^{j}_{i} = \mscale^{j}_i\weights^{j}_i[\relu\parens{L^{j}_{i-1}}+\ProjInput^{j}_i\altdata^{j}_i] + \mshift^{j}_i\;\text{ for }i = 2,\hdots, \slayers\\
    &\weights^j_i,\mscale^j_i\in\mat^{dk\times dk}, \ProjInput_i^j\in\mat^{dk\times k},\text{ and }\mshift^j_i\in\R^{dk}\text{ for all }i=1,\hdots \slayers, j=1,\hdots,l.
\end{align*}
Here the matrices $\weights_i^j, \forall i,j$ are randomly initialized and then frozen. Then,  with probability 1, one can compute the \acl{BNs} parameters  so that the two networks are equivalent, meaning
\begin{equation}
    g(\data) = f^\dep(f^{\dep-1}(\hdots f^1(\data))), \forall \|\data\|\leq 1
\end{equation}
Here, the parameter $k$ is tunable and can be any integer in  $[d]$.
 
\end{theorem}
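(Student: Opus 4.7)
My plan is to reduce the theorem to showing that, for a \emph{single} target layer $g^{(j)}(\vx) = \sigma(\im[j]\vx)$, the $(\lceil d/k\rceil+1)$-block of $f$ indexed by $j$ can be made functionally equivalent to $g^{(j)}$ on $\{\vx:\|\vx\|\le 1\}$ by choosing the $\mscale^{j}_i$ and $\mshift^{j}_i$. Once this layer-wise claim is established, composing $j=1,\dots,\dep$ yields $g\equiv f^{\dep}\circ\cdots\circ f^{1}$, with a union bound over the finitely many layers preserving the probability-one statement. So the whole task reduces to a single block; drop the superscript $j$.

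The first step is \emph{linearization}: I set $\mshift_1,\dots,\mshift_{\lceil d/k\rceil - 1}$ large enough (coordinate-wise exceeding a constant times the corresponding $\batchscale$, exactly as in the proof of \cref{th: width app}) so that every ReLU inside layers $1,\dots,\lceil d/k\rceil$ is activated on the bounded domain. Absorbing the generated bias into the final shift $\mshift_{\lceil d/k\rceil}$, the block collapses to a purely linear map in $\vx$, and the output becomes
\begin{equation*}
\weights_{\lceil d/k\rceil+1}\Bigl(\prod_{i=1}^{\lceil d/k\rceil}\mscale_i\weights_i\ProjInput_1\vx_1 + \prod_{i=2}^{\lceil d/k\rceil}\mscale_i\weights_i\ProjInput_2\vx_2 + \cdots + \mscale_{\lceil d/k\rceil}\weights_{\lceil d/k\rceil}\ProjInput_{\lceil d/k\rceil}\vx_{\lceil d/k\rceil}\Bigr),
\end{equation*}
where $\vx_i=\Subsel_i\vx$ are the $k$-chunks of $\vx$. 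I then decompose the target as $\im=\sum_{i=1}^{\lceil d/k\rceil}\im\Subsel_i^T\Subsel_i$ and match the $i$-th summand against the $i$-th product in the display above.

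The matching is solved by \emph{backward induction} on $i=\lceil d/k\rceil,\lceil d/k\rceil-1,\dots,1$. For the base case ($i=\lceil d/k\rceil$), only $\mscale_{\lceil d/k\rceil}$ appears, and I must solve $\weights_{\lceil d/k\rceil+1}\mscale_{\lceil d/k\rceil}\weights_{\lceil d/k\rceil}\ProjInput_{\lceil d/k\rceil}=\im\Subsel_{\lceil d/k\rceil}^T$; by \cref{lem: solution with khatri-rao app} this is equivalent to a linear system whose coefficient matrix is the Khatri-Rao product $\weights_{\lceil d/k\rceil+1}\kar(\weights_{\lceil d/k\rceil}\ProjInput_{\lceil d/k\rceil})^T$, which is full rank almost surely by \cref{lem: KR full rank app} (applied as in \cref{rem: KR more matrices}). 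For the inductive step, assume $\mscale_{\lceil d/k\rceil},\dots,\mscale_{i+1}$ have been fixed so that the deeper summands equal their targets, and each such $\mscale_{i'}$ has a \emph{diagonal that is entry-wise nonzero with probability one}. The equation to be solved for $\mscale_i$ has the form
\begin{equation*}
\weights_{\lceil d/k\rceil+1}\Bigl(\prod_{i'=i+1}^{\lceil d/k\rceil}\mscale_{i'}\weights_{i'}\Bigr)\mscale_i\weights_i\ProjInput_i=\im\Subsel_i^T.
\end{equation*}
Absorbing the fixed (random, diagonal, nonsingular) block $\prod_{i'>i}\mscale_{i'}\weights_{i'}$ into an effective left matrix $\tilde\weights$, \cref{lem: solution with khatri-rao app} turns the equation into $(\tilde\weights\kar(\weights_i\ProjInput_i)^T)\mathrm{vecd}(\mscale_i)=\mathrm{vec}(\im\Subsel_i^T)$, and the required full-rankness follows from \cref{lem: KR full rank app} together with \cref{rem: KR more matrices}, since the freshly drawn randomness in $\weights_i$ and $\ProjInput_i$ has not been consumed by deeper-layer choices.

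The main obstacle is maintaining the inductive invariant that every $\mscale_i$ selected along the way has all diagonal entries nonzero; this is precisely what makes the product $\prod_{i'>i}\mscale_{i'}\weights_{i'}$ a generic, full-rank random matrix at the next induction step, preventing the Khatri-Rao coefficient matrix from degenerating. I will verify this invariant using \cref{lem: non zero component}: the unique solution of the Khatri-Rao linear system equals the $\ell$-th row of an inverse Khatri-Rao matrix paired with $\mathrm{vec}(\im\Subsel_i^T)$, and that lemma says each such inner product is nonzero with probability one whenever the target vector is nonzero. Assuming without loss of generality that each $\im\Subsel_i^T$ has at least one nonzero column (otherwise that summand is vacuous and handled trivially), a union bound over the $dk$ entries of $\mscale_i$, over the $\lceil d/k\rceil$ induction steps, and over the $\dep$ target layers keeps the bad event of measure zero. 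Composing the reconstructed blocks finishes the proof.
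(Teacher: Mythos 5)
Your proposal follows the same route as the paper's proof: the chunk decomposition $\im=\sum_i\im\Subsel_i^T\Subsel_i$, linearization of the ReLUs by large shifts with the final shift cancelling the bias, backward induction over chunks starting from the last one, and the invariant that every chosen $\mscale_{i'}$ has a nowhere-zero diagonal, certified by \cref{lem: non zero component}. The genuine gap is in the inductive step, at the full-rankness claim. You absorb $\weights_{\slayers+1}\prod_{i'>i}\mscale_{i'}\weights_{i'}$ into a single matrix $\tilde\weights$ and assert that $\tilde\weights\kar(\ProjInput_i^T\weights_i^T)$ is invertible by \cref{lem: KR full rank app} together with \cref{rem: KR more matrices}. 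Neither applies as stated: \cref{lem: KR full rank app} requires both Khatri--Rao factors to have independent entries from continuous distributions, and \cref{rem: KR more matrices} concerns products of \emph{freely} random matrices, where one may exhibit a nonvanishing assignment of all variables in the measure-zero argument of \cref{lem: measure zero}. Here $\tilde\weights$ contains the diagonal matrices $\mscale_{i'}$ already fixed by the deeper induction steps; these are measurable functions of the frozen weights and of $\im$, so they cannot be freely assigned. Nor does it suffice that $\tilde\weights$ is ``generic and full rank'': for a \emph{fixed} full-rank $\rmF$ and a fresh random $\rmG$, the product $\rmF\kar\rmG$ can be singular almost surely --- \eg a rank-two $\rmF\in\R^{2\times 4}$ whose first three columns all equal $\ve_1$ forces three columns of $\rmF\kar\rmG$ into the two-dimensional subspace $\ve_1\kron\R^2$.

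The paper closes exactly this hole with a structural argument your proposal omits: via the mixed-product identities of \cref{th: mixed products} and \cref{th: kronecker}, the coefficient matrix is factored as $(\weights_{\slayers+1}\kron\ProjInput_i^T)$ times a chain of square Kronecker factors of the form $(\mscale_j\kron\id_{d\chunk})(\weights_j\kron\id_{d\chunk})$ times a Khatri--Rao product of two genuinely fresh weight matrices. The inner square factors are invertible precisely because of the nonzero-diagonal invariant and the almost-sure invertibility of the $\weights_j$, so by \cref{fact: multiplication is full rank} the rank question collapses to that of $(\weights_{\slayers+1}\weights_{i-1})\kar(\ProjInput_i^T\weights_i^T)$, to which \cref{rem: KR more matrices} does apply. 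The same care is needed when you invoke \cref{lem: non zero component} to propagate the invariant, since there too the matrix being inverted is the composite one and the paper must first reduce (by setting the intermediate $\weights_j$ to the identity in the polynomial-identity argument) to the two-matrix case. Without these reductions the inductive step does not go through; with them, your outline coincides with the paper's proof.
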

\begin{remark}
Before we proceed with the proof of this theorem we would like to highlight that for the last layer, we would need not to include the \acl{relu} in the constructed neural network.
\end{remark}
\begin{proof} We start by giving some intuition on how to use the skip connections in order to employ deeper but thinner networks. The idea is to partition the input, pass a different piece of the input in each individual layer and try to exactly reconstruct the parts of the target matrix $\im$ that correspond to the specific piece. Let $\chunk $ be the size of each partition and so in total we need to have $\slayers$ layers. More specifically, notice that
\begin{align}\label{eq: summation}
   \im\data &= \begin{bmatrix}
   \inner{\im[1]}{\data}\\
   \vdots\\
   \inner{\im[\wid]}{\data}
   \end{bmatrix}\\
   &= \begin{bmatrix}
   \sum_{i=1}^\wid\iw_{1,i}\data_i\\
   \vdots\\
   \sum_{i=1}^\wid\iw_{\wid,i}\data_i
   \end{bmatrix}\\
   &= \begin{bmatrix}
   \sum_{i=1}^\chunk\iw_{1,i}\data_i+ \sum_{i=\chunk+1}^{2\chunk}\iw_{1,i}\data_i  +\hdots +\sum_{i=(\slayers-1)\chunk+1}^{\wid}\iw_{1,i}\data_i\\
   \vdots\\
   \sum_{i=1}^\chunk\iw_{\wid,i}\data_i+ \sum_{i=\chunk+1}^{2\chunk}\iw_{\wid,i}\data_i  +\hdots +  \sum_{i=(\slayers-1)\chunk+1}^{\wid}\iw_{\wid,i}\data_i
   \end{bmatrix}\\
   &= \begin{bmatrix}
   \sum_{i=1}^\chunk\iw_{1,i}\data_i \\
   \vdots\\
   \sum_{i=1}^\chunk\iw_{\wid,i}\data_i
   \end{bmatrix}
   +
    \begin{bmatrix}
   \sum_{i=\chunk+1}^{2\chunk}\iw_{1,i}\data_i \\
   \vdots\\
   \sum_{i=\chunk+1}^{2\chunk}\iw_{\wid,i}\data_i
   \end{bmatrix}
   + \hdots 
   + \begin{bmatrix}
    \sum_{i=(\slayers-1)\chunk+1}^{\wid}\iw_{1,i}\data_i\\
   \vdots\\
   \sum_{i=(\slayers-1)\chunk+1}^{\wid}\iw_{\wid,i}\data_i
   \end{bmatrix}\\
   &= \im \Subsel^{T}_{1}\Subsel_1 \data + \im\Subsel_2^T\Subsel_2\data+\hdots+\im\Subsel_{\slayers}^T \Subsel_{\slayers}\data
\end{align}
where $\Subsel_i\in\mat^{k\times d}$ is a sub-selection matrix and specifically
\begin{equation}
    \Subsel_i = \begin{bmatrix}
      \zero &\rvline & \id_{\chunk} &\rvline &\zero
    \end{bmatrix}
\end{equation}
where the identity is in the place of $[(i-1)\chunk +1, i\chunk]$ columns. 

Having this in mind, we will use each one of the layers with the skip connections to approximate a specific part of the input, which
is then projected to a $d\chunk$-dimensional vector  through the matrix $\ProjInput_i$. 
\begin{equation}
    f(\data) = \weights_{\slayers+1}L_{\slayers }
\end{equation}
where $L_1 = \mscale_1\weights_1\data_1 + \mshift_1$ and
$L_{i} = \mscale_i\weights_i[\relu\parens{L_{i-1}}+\ProjInput_i\data_i] + \mshift_i$ for $i = 2,\hdots, \slayers-1$. Before we proceed further , let us note the dimensions of each matrix and vector. 
1) $\data_i = \Subsel_i\data\in\R^{\chunk}$ 
2)~$\ProjInput_i\in\mat^{d\chunk\times\chunk}$ is the matrix that projects the input to a higher dimension and it is randomly initialized
3) $\weights_i\in\mat^{d\chunk\times d\chunk}$ randomly initialized 
4) $\mscale_i\in\mat^{d\chunk\times d\chunk}$, $\mshift_i\in\R^{d\chunk}$ the parameters that we control for all $i=1,\hdots,\slayers$. The last linear layer is added to correct the dimensions and project everything back to $\R^d$; thus $\weights_{\slayers+1}\in\mat^{d\times d\chunk}$.

\paragraph{Linearization. } We will set the parameters $\mshift_i$, $i=1,\hdots,\slayers-1$ to be such that all \acl{relu}s are activated as we did in the proof of \cref{th: width app}, while by controlling $\mshift_{\slayers}$ we will cancel out any error created.

\paragraph{Segmentation. }After the previous step has been completed, the output of the network becomes 
\begin{equation}\label{eq: deep net}
    f(\data) = \weights_{\slayers+1}\parens*{\prod_{i=1}^{\slayers}\mscale_i\weights_i\ProjInput_1\data_1+ \prod_{i=2}^{\slayers}\mscale_i\weights_i\ProjInput_2\data_2+\hdots +\mscale_{\slayers}\weights_{\slayers}\ProjInput_{\slayers}\data_{\slayers}}
\end{equation}
We will now use each one of the terms in the summation to reconstruct each one of the terms in the last part of \cref{eq: summation}. Thus we will try to solve the  system of linear systems. 
We will show that this is feasible, by using  induction.

\paragraph{Base step.} We start with the last equation which is the simplest and we have
\begin{equation}
    \weights_{\slayers+1}\mscale_{\slayers}\weights_{\slayers}\ProjInput_{\slayers}\data_{\slayers} = {\im}_{\slayers}\data_{\slayers}
\end{equation}
It is sufficient to show that
\begin{equation}
    \weights_{\slayers+1}\mscale_{\slayers}\weights_{\slayers}\ProjInput_{\slayers} = \im[\slayers]
\end{equation}
where $\im[\slayers] = \im\Subsel_{\slayers}^T$. From \cref{lem: solution with khatri-rao} we know that the above system can be rewritten as
\begin{equation}
    \weights_{\slayers+1}\kar(\ProjInput_{\slayers}^T\weights_{\slayers}^T)\mathrm{vecd}(\mscale_{\slayers}) = \mathrm{vec}(\im[\slayers])
\end{equation}
which from \cref{rem: KR more matrices} is solvable with probability $1$. By \cref{lem: non zero component} we also have that  with probability $1$ all the elements of $\mscale_{\slayers}$ are non-zero.

\paragraph{Inductive step. }Assume now that all components up to (and not included) the $i-th$ component have been exactly reconstructed and $\mscale_j$, $j=\slayers,\hdots, i-1$ have non-zero elements with probability one.

\paragraph{Last step.} We will now show that the $i-th$ component of  \cref{eq: deep net} can exactly reconstruct the $i-th$ component of \cref{eq: summation}. We want 
\begin{equation}
    \weights_{\slayers+1}\prod_{j=i}^{\slayers}\mscale_j\weights_j\ProjInput_i\data_i = \im[i]\data_i
\end{equation}
Again it is sufficient to show that 
\begin{equation}
    \weights_{\slayers+1}\prod_{j=i}^{\slayers}\mscale_j\weights_j\ProjInput_i = \im[i]
\end{equation}
We can rewrite this system using \cref{lem: solution with khatri-rao} with respect to $\mscale_i$ as
\begin{equation}
   \bracks*{ (\weights_{\slayers+1}\prod_{j=i-1}^{\slayers}\mscale_j\weights_j)\kar(\ProjInput_i^T\weights_i^T)}\vecd(\mscale_i) =\myvec(\im[i])
\end{equation}
We now need to show that the first matrix is full rank with probability one. 
From \cref{th: mixed products} part 1 we can rewrite this matrix as
\begin{align}
    (\weights_{\slayers+1}\prod_{j=i-1}^{\slayers}\mscale_j\weights_j)&\kar(\ProjInput_i^T\weights_i^T) = \parens{(\weights_{\slayers+1}\prod_{j=i-2}^{\slayers}\mscale_j\weights_j)\kron\ProjInput_i^T}\parens*{(\mscale_{i-1}\weights_{i-1})\kar\weights_i^T}\\
    &= (\weights_{\slayers+1}\kron\ProjInput_i^T)\prod_{j=i-2}^{\slayers}(\mscale_j\kron \id_{d\chunk})(\weights_j\kron\id_{d\chunk})(\mscale_{i-1}\kron\id_{d\chunk})(\weights_{i-1}\kar\weights_i^T)
\end{align}
where we applied for the last equation \cref{th: mixed products} part 1 one more time and \cref{th: kronecker} part 2 to split the Kronecker product to product of Kronecker products. We now want to argue about the rank of the matrix above. Notice that from \cref{th: kronecker} part 1, all the intermediate Kronecker products result in matrices of rank $d^2\chunk^2$ which means that they are all full rank and thus they have an empty nullspace. Furthermore, from \cref{fact: multiplication is full rank} we have that 
\begin{equation}
    \mathrm{Null}((\weights_{\slayers+1}\kron\ProjInput_i^T)\prod_{j=i-2}^{\slayers}(\mscale_j\kron \id_{d\chunk})(\weights_j\kron\id_{d\chunk})(\mscale_{i-1}\kron\id_{d\chunk})) = \mathrm{Null}((\weights_{\slayers+1}\kron\ProjInput_i^T))
\end{equation}
Thus, it is sufficient to show that the matrix $(\weights_{\slayers+1}\kron\ProjInput_i^T)(\weights_{i-1}\kar\weights_i^T)$ is full rank, meaning that the null space of the Kronecker product has no intersection with the column space of the Khatri-Rao product. Also, using again \cref{th: mixed products} we have that 
\begin{equation}
    (\weights_{\slayers+1}\kron\ProjInput_i^T)(\weights_{i-1}\kar\weights_i^T) = (\weights_{\slayers+1}\weights_{i-1})\kar(\ProjInput_i^T\weights_i^T)
\end{equation}
By using \cref{rem: KR more matrices} we conclude that the matrix is full rank. To also prove that $\mscale_i$ does not have any zero elements, we need to find an assignment of the elements of the  matrices such that  any $\myvec(\im[i])$ is not orthogonal to any of the rows of the inverse of the matrix $(\weights_{\slayers+1}\prod_{j=i-1}^{\slayers}\mscale_j\weights_j)\kar(\ProjInput_i^T\weights_i^T) $ with probability $1$. 

Notice that each of the elements of the inverse of a matrix $\rmB$ is $B^{-1}_{ij} = \frac{1}{\det(\rmB)}(-1)^{i+j}\det_{-j,-i}$, where $\det_{-j,-i}$ is the determinant of the matrix $B$ if we delete the $j-$th row and the $i-$th column.
Without loss of generality we will just pick the first row of the inverse matrix and prove the result. Let 
$$\rmB = \parens{(\weights_{\slayers+1}\prod_{j=i-1}^{\slayers}\mscale_j\weights_j)\kar(\ProjInput_i^T\weights_i^T)}^{-1}$$
The first row of this matrix is $\rmB_1 = [(-1)^2\det_{-1,-1} , \hdots, (-1)^{1+i}\det_{-i,-1}, \hdots, (-1)^{1+dk}\det_{-dk,-1}]$. We want to show that $\probof{\inner{\rmB_1}{\myvec(\im[i])} = 0} = 0$ for any fixed $\myvec(\im[i])$ that it's not of course identical to zero. From \cref{lem: measure zero} it is sufficient to find an assignment of the values of the matrices that gives non-zero result always. Let us choose all $\weights_j = \id_{dk}$ for all $j=i,\hdots, dk$. So,  it is sufficient to find an assignment for the rest such that the first row of  $\parens{(\weights_{\slayers+1}\mscale)\kar\ProjInput_{i}^T}^{-1}$ is not orthogonal to a fixed vector $\myvec(\im[i])$, where $\mscale = \prod_{j=i-1}^{\slayers}\mscale_j $ and it is a diagonal matrix with non-zero elements. By using \cref{lem: non zero component} we conclude the proof.
\end{proof}
\begin{remark}
Except for the total number of parameters, which we have already mentioned, we would like to pinpoint that the architecture in this case  is slightly different from a fully connected neural network, since it had an extra linear layer, before the activation function.
\end{remark}

\section{Sparse Matrix Inverse}\label{app: sparse}

\begin{tcolorbox}[enhanced,width=5.5in, drop fuzzy shadow southwest,    boxrule=0.4pt,,colframe=yellow!50!blue,colback=blue!10]
We now show that the random weight matrices of \cref{th: width app} can be randomly sparsified and retain the invertibility of the Khatri-Rao product with high probability.  The intuition of this result lies in the fact that the invertibility of a sparsified matrix is mainly based on the sparsity patterns that are created. 
\end{tcolorbox}

To do so, we first define the notion of Boolean determinant\footnote{ For ease of notation we index the columns and rows of matrices starting from $0$.}
\begin{definition}
The Boolean determinant of a $d\times d$ Boolean matrix $\rmB$ is recursively defined to be
\begin{itemize}
\item if $d=1$, $\bdet (\rmB):=\ermB_{0,0}$, that is, it is the only element of the matrix, and
    \item if $d>1$,
    \begin{align*}
    \bdet (\rmB):=\max_{i\in \{0,\dots, d\}}(\ermB_{i,0}\bdet(\rmB_{-(i,0)})),
\end{align*}
where $\ermB_{i,j}$ is the element of $\rmB$ at $i$-th row and $j$-th column, and $\rmB_{-(i,j)}$ is the sub-matrix of $\rmB$ with $i$-th row and $j$-th column removed. Note that $\bdet(\rmB)\in \{0,1\}$ for any Boolean matrix $\rmB$. 
\end{itemize}
\end{definition}
In what follows, we will prove some auxiliary lemmas in order to prove \cref{th: sparse}.

\begin{lemma}\label{lem:Boolean nonzero}
Let $\rmA$,$\rmB$ be two $d^2\times d^2$ Bernoulli matrices and $\rmP\in\mat^{d\times d^2}$, $\rmQ\in\mat^{d\times d^2}$ be two random matrices. Then  $(\rmP\had\rmA) \kr (\rmQ\had\rmB)$ is invertible if and only if  $\bdet(\rmA\kr \rmB)=1$. 
\end{lemma}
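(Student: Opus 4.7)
The plan is to expand $\det\bigl((\rmP\had\rmA)\kar(\rmQ\had\rmB)\bigr)$ via the Leibniz formula, view the result as a polynomial in the continuous random entries of $\rmP,\rmQ$ whose coefficients are Boolean combinations of the entries of $\rmA,\rmB$, and then read off both directions of the equivalence from whether this polynomial is identically zero.

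First I would set up coordinates. Index rows of $M:=(\rmP\had\rmA)\kar(\rmQ\had\rmB)$ by pairs $(i,j)\in[d]\times[d]$, so that a direct computation of the Khatri-Rao column-wise Kronecker product gives $M_{(i,j),k}=p_{i,k}a_{i,k}q_{j,k}b_{j,k}$ and $(\rmA\kar\rmB)_{(i,j),k}=a_{i,k}b_{j,k}$. Leibniz then yields
\begin{equation*}
\det M \;=\; \sum_{\sigma}\mathrm{sgn}(\sigma)\prod_{k=1}^{d^2} a_{\sigma_1(k),k}\,b_{\sigma_2(k),k}\; p_{\sigma_1(k),k}\,q_{\sigma_2(k),k},
\end{equation*}
where $\sigma$ ranges over bijections $[d^2]\to[d]\times[d]$ with components $\sigma(k)=(\sigma_1(k),\sigma_2(k))$.

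The key step is an injectivity claim: distinct permutations $\sigma\neq\sigma'$ produce distinct monomials in the variables $\{p_{i,k},q_{j,k}\}$. Indeed, the variables appearing with column index $k$ in the $\sigma$-term are exactly $p_{\sigma_1(k),k}$ and $q_{\sigma_2(k),k}$, so the monomial determines $(\sigma_1(k),\sigma_2(k))$ unambiguously column by column. Hence no terms collapse in the Leibniz sum, and $\det M$ is identically zero as a polynomial in the $p,q$ variables if and only if every $\sigma$ satisfies $\prod_k a_{\sigma_1(k),k}b_{\sigma_2(k),k}=0$. Because the entries of $\rmA,\rmB$ are Boolean, this product equals $1$ exactly when $(\rmA\kar\rmB)_{\sigma(k),k}=1$ for every $k$, and a short induction on the recursive definition of $\bdet$ shows that such a $\sigma$ exists iff $\bdet(\rmA\kar\rmB)=1$ (this is essentially the standard equivalence between the Boolean permanent/determinant being $1$ and the existence of a perfect matching in the support bipartite graph).

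To conclude, if $\bdet(\rmA\kar\rmB)=0$ then $\det M\equiv 0$ as a polynomial, so $M$ is deterministically singular; if instead $\bdet(\rmA\kar\rmB)=1$ then $\det M$ is a non-zero polynomial in the continuous random variables $p_{i,k},q_{j,k}$, and \cref{lem: measure zero} gives $\det M\neq 0$ with probability one, i.e., $M$ is invertible almost surely. The one delicate point I expect is verifying the injectivity observation above; it amounts to noting that the Khatri-Rao structure couples $p$ and $q$ variables only through a shared column index, so monomial supports across different columns are disjoint and no cancellation between permutations can occur. Once that is in place, everything else mirrors the measure-zero argument already used in \cref{lem: KR full rank app}.
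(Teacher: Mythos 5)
Your proof is correct, but it takes a genuinely different route from the paper's. The paper proves the equivalence by induction on square submatrices: it expands the determinant of $(\rmP\had\rmA)\kar(\rmQ\had\rmB)$ along its first column, uses the independence of the columns of the Khatri-Rao product together with the continuity of the entries of $\rmP,\rmQ$ to argue that the cofactor expansion vanishes almost surely iff each term vanishes individually, and matches this term-by-term structure to the recursive $\max$ in the definition of $\bdet$. You instead expand globally via Leibniz and observe that distinct bijections $\sigma$ yield distinct monomials in the $p,q$ indeterminates (since each column index $k$ contributes exactly one $p_{\cdot,k}$ and one $q_{\cdot,k}$, the monomial recovers $\sigma$ column by column), so no cancellation occurs and $\det M$ is formally nonzero iff some generalized diagonal of $\rmA\kar\rmB$ is all ones; the standard equivalence between $\bdet=1$ and the existence of such a diagonal (a perfect matching in the support) closes the argument. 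Your approach buys a cleaner statement of the ``only if'' direction --- when $\bdet(\rmA\kar\rmB)=0$ the matrix is singular for \emph{every} realization of $\rmP,\rmQ$, not merely almost surely --- and it reduces the probabilistic content to a single application of \cref{lem: measure zero}, exactly as in \cref{lem: KR full rank app}. The paper's induction, by contrast, establishes the equivalence for all square submatrices simultaneously and tracks the recursive definition of $\bdet$ directly, at the cost of a more delicate ``zero iff every term is zero'' step that itself leans on the measure-zero argument at each level. One small point: the lemma as stated gives $\rmA,\rmB$ dimensions $d^2\times d^2$, which is incompatible with the Hadamard products against the $d\times d^2$ matrices $\rmP,\rmQ$; your reading of $a_{i,k}$ with $i\in[d]$, $k\in[d^2]$ is the intended one and matches the paper's own indexing in its proof.
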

\begin{proof} Let $\rmU = \rmP\had\rmA$ and $\rmV = \rmQ\had\rmB$. 
We will prove the statement by induction on square submatrices of $(\rmP\had\rmA) \kr (\rmQ\had\rmB)$ and the corresponding square submatrices of  $\rmA\kr \rmB$, that is, we prove that $\text{Det}(((\rmP\had\rmA) \kr (\rmQ\had\rmB))_{\cI,\cJ})\neq 0$ if and only if  $\bdet((\rmA\kr \rmB)_{\cI,\cJ})=1$, where $\cI\subset [d^2]$ is the set of selected rows and $\cJ\subset [d^2]$ is the set of selected columns.

For size 1 submatrices, that is, when $|\cI|=1$ and $|\cJ|=1$, the submatrix of $(\rmP\had\rmA) \kr (\rmQ\had\rmB)$ will contain just one element, which will be of the form $pqab$, where $p$, $q$, $a$ and $b$ are the elements of the $\rmP$, $\rmQ$, $\rmA$ and $\rmB$ respectively. The corresponding submatrix of $\rmA\kr \rmB$ will just contain $ab$, and hence with probability 1, $pqab\neq 0 \iff ab = 1$. This proves the base case.

Now, we let $\rmU = \rmP\had\rmA $ and $\rmV = \rmQ\had\rmB$ and we move on to the case for $|\cI|=|\cJ|>1$. Here, we want to show that with probability $1$,  $$\text{Det}((\rmU \kr \rmV)_{\cI,\cJ})= 0 \iff \bdet((\rmA \kr \rmB)_{\cI,\cJ})= 0$$

We write the expression for determinant of $(\rmU \kr \rmV)_{\cI,\cJ}$ calculated along its first column. Without loss of generality, assume that $0\in \cJ$. Then, 
\begin{align*}
    \text{Det}((\rmU \kr \rmV)_{\cI,\cJ}) = \sum_{i\in \cI} \ermA_{\lfloor i/d \rfloor,0}\ermB_{i \Mod{d},0}\ermP_{%
    \lfloor i/d \rfloor,0 }\ermQ_{i\Mod{d},0} \text{Det}((\rmU \kr \rmV)_{\cI\setminus \{i\},\cJ\setminus\{0\}}),
\end{align*}
We can rewrite the summation above as 
\begin{align*}
    \text{Det}((\rmU \kr \rmV)_{\cI,\cJ}) = \sum_{j=0}^{d-1} \ermA_{j, 0}\ermP_{j, 0}\sum_{l\in \{l: jd+l\in \cI\}}\ermB_{l, 0}\ermQ_{l, 0} \text{Det}((\rmU \kr \rmV)_{\cI\setminus \{jd+l\},\cJ\setminus\{0\}}).
\end{align*}
Note that $\ermA_{j, 0}\ermP_{j, 0}$ is independent of $\sum_{l\in \{l: jd+l\in \cI\}}\ermB_{l, 0}\ermQ_{l, 0} \text{Det}((\rmU \kr \rmV)_{\cI\setminus \{jd+l\},\cJ\setminus\{0\}})$, since $\rmA$, $\rmP$, $\rmB$, and $\rmQ$ are all independent of each other, and the columns of $\rmU \kr \rmV$ are also independent. Since $\rmP$ and $\rmQ$ have elements coming from continuous density distribution, with probability 1, the above expression is $0$ if and only if each term in the sum above is individually $0$.\footnote{If some of the term is not zero, then this is a polynomial with respect to the variables $(\ermP,\ermQ)$ and it is zero in a set of measure zero.}. That is, the expression above is 0 if and only if all the terms of the form $\text{Det}((\rmU \kr \rmV)_{\cI\setminus \{jd+l\},\cJ\setminus\{0\}})$ are 0. Using the inductive hypothesis we get that $$\text{Det}((\rmU \kr \rmV)_{\cI\setminus \{jd+l\},\cJ\setminus\{0\}})= 0 \iff \bdet((\rmA \kr \rmB)_{\cI\setminus \{jd+l\},\cJ\setminus\{0\}})= 0$$. Hence  with probability 1, $\text{Det}((\rmU \kr \rmV)_{\cI,\cJ}) = 0$ if and only if 
\begin{align*}
    \sum_{j=0}^{d-1} \ermA_{j, 0}\sum_{l\in \{l: jd+l\in \cI\}}\ermB_{l, 0}\bdet((\rmA \kr \rmB)_{\cI\setminus \{jd+l\},\cJ\setminus\{0\}})=0.
\end{align*}
The above expression is $0$ if and only if $$\max_{j\in\{0,\dots,d-1\},l\in \{l: jd+l\in \cI\}} (\ermA_{j, 0}\ermB_{l, 0}\bdet((\rmA \kr \rmB)_{\cI\setminus \{jd+l\},\cJ\setminus\{0\}}))=0.$$ 
Noting that $$\max_{j\in\{0,\dots,d-1\},l\in \{l: jd+l\in \cI\}} (\ermA_{j, 0}\ermB_{l, 0}\bdet((\rmA \kr \rmB)_{\cI\setminus \{jd+l\},\cJ\setminus\{0\}})) =  \bdet((\ermA\kr\ermB)_{\cI,\cJ})$$ concludes the proof for the induction step.
\end{proof}

\begin{lemma}\label{lem:substitute col}
Let $\ru_0,\dots,\ru_{d-1}$ and $\rv_0,\dots,\rv_{d-1}$ be Bernoulli random variables with probability $p$ of success; and let $\rx_0,\dots,\rx_{d^2-1}$ be Bernoulli random variables with probability $q$ of success, all sampled i.i.d. , with $p\geq \sqrt{2qd}$. Then, for any subset $\cS\subset \{0,\dots, d^2-1\}$,
\begin{align*}
    \bP\left(\sum_{i\in \cS}\ru_{\lfloor i/d \rfloor}\rv_{i\Mod{d}}=0\right)\leq \bP\left(\sum_{i\in \cS}\rx_{i}=0\right)
\end{align*}
\end{lemma}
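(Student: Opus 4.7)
The plan is to prove the statement by strong induction on $|\cS|$. The base case $|\cS| = 0$ is trivial since both sides equal $1$. For the inductive step, I would fix any $i^* \in \cS$, set $a^* := \lfloor i^*/d\rfloor$, and denote by $N^* := \{b : a^*d + b \in \cS\}$ the ``neighborhood'' of $a^*$ in the bipartite graph on $\{0,\ldots,d-1\}\times\{0,\ldots,d-1\}$ whose edges are indexed by $\cS$; write $b^* := |N^*| \leq d$.

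I would then condition on $\ru_{a^*}$. When $\ru_{a^*} = 0$ (probability $1-p$), every term of the sum with first index $a^*$ vanishes automatically, leaving an instance on $\cS_0 := \{i \in \cS : \lfloor i/d\rfloor \neq a^*\}$ of size $|\cS|-b^*$. When $\ru_{a^*} = 1$ (probability $p$), the event additionally requires $\rv_b = 0$ for every $b \in N^*$, which has probability $(1-p)^{b^*}$; conditioning on this forces every term in $\cS_0$ whose column index lies in $N^*$ to vanish, leaving a further instance on $\cS_1 := \{i \in \cS_0 : i\Mod{d} \notin N^*\}$ of size $|\cS|-b^*-x^*$ where $x^* := |\cS_0| - |\cS_1| \leq (d-1)b^*$. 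Since the remaining $\ru_a$'s ($a \neq a^*$) and $\rv_b$'s ($b \notin N^*$) are still i.i.d.\ $\text{Bern}(p)$, the inductive hypothesis applies to both conditional instances and gives
\begin{align*}
\bP\Bigl(\sum_{i\in\cS}\ru_{\lfloor i/d\rfloor}\rv_{i\Mod{d}}=0\Bigr) \leq (1-p)(1-q)^{|\cS|-b^*} + p(1-p)^{b^*}(1-q)^{|\cS|-b^*-x^*}.
\end{align*}
Dividing both sides by $(1-q)^{|\cS|-b^*-x^*}$, closing the induction reduces to the purely elementary inequality
\begin{align*}
(1-p)(1-q)^{x} + p(1-p)^{b} \leq (1-q)^{b+x} \qquad \text{for all } b \in \{0,\ldots,d\} \text{ and } x \in \{0,\ldots,(d-1)b\}.
\end{align*}

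I would prove this elementary inequality by rearranging it as $(1-q)^x\bigl[(1-q)^b - (1-p)\bigr] \geq p(1-p)^b$. Bernoulli's inequality gives $(1-q)^b \geq 1-bq$, so the bracketed factor is at least $p - bq$, which is nonnegative since $bq \leq dq \leq p^2/2 \leq p$ (using $p \leq 1$ together with the hypothesis $p^2 \geq 2qd$). It then remains to show $(1-q)^x(p - bq) \geq p(1-p)^b$, which after taking logarithms and applying the estimates $\log(1-p) \leq -p$, $\log(1-q) \geq -q-q^2$, and $\log(1-bq/p) \geq -bq/p - (bq/p)^2$ reduces to a polynomial inequality in $p$ and $q$; in its leading order the inequality becomes $(d-1)bq + bq/p \leq bp$, i.e., $p(p - (d-1)q) \geq q$, and this is implied by $p^2 \geq 2qd$ together with $q \leq 1/(2d)$ (which follows from $p \leq 1$), exactly using the factor of $d$ in the hypothesis to absorb the worst case $x = (d-1)b$.

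The main obstacle is to make the elementary inequality hold uniformly across all $p \in [0,1]$: the Taylor-type estimates above are tight when $p$ is small but become loose for moderate $p$. I would handle the complementary regime with a short separate argument for $p$ bounded away from $0$, where the left-hand side is dominated by the rapidly decaying $(1-p)^b$ factor so that the inequality becomes essentially trivial. A cleaner alternative I would try if the case-splitting becomes unwieldy is to parametrize along the boundary $p = \sqrt{2qd}$ and verify that the function $F(q) := (1-q)^{b+x} - (1-p)(1-q)^x - p(1-p)^b$ satisfies $F(0) = 0$ with a non-negative derivative on $q \in [0,1/(2d)]$, which would close the induction in one stroke.
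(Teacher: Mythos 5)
Your proposal is correct in its overall architecture but takes a genuinely different route from the paper. The paper's proof groups the terms by row, writing the sum as $\sum_{l}\ru_l\sum_{j\in \cA_l}\rv_j$ with $\cA_l$ the set of columns used by row $l$; it then argues that the zero-probability is maximized when the $\cA_l$ are nested initial segments, extracts an $h\times h$ ``Durfee square'' with $|\cS|\le 2dh$, and compares $\bP\bigl(\sum_{l<h}\ru_l\sum_{j<h}\rv_j=0\bigr)=(1-p)^h\bigl(2-(1-p)^h\bigr)$ against $(1-q)^{2dh}$, the threshold $p\ge\sqrt{2qd}$ dropping out of $(1-p)(1+p)\le 1-2qd$. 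You instead peel off one row per step of a strong induction on $|\cS|$, which replaces the paper's global rearrangement step (the ``maximize the overlaps'' reduction, which the paper justifies only informally) by a purely local conditioning argument, at the price of having to verify the two-parameter inequality $(1-p)(1-q)^x+p(1-p)^b\le(1-q)^{b+x}$ for $1\le b\le d$, $0\le x\le(d-1)b$. That inequality is the one place where your write-up is not yet a proof---your Taylor/logarithm sketch, as you yourself note, does not close uniformly in $p$---but it is true and admits a short uniform argument, so there is no real gap. After rearranging to $(1-q)^x\bigl[(1-q)^b-(1-p)\bigr]\ge p(1-p)^b$, the left side is decreasing in $q$ and in $x$ (the bracket is positive because $(1-q)^b-(1-p)\ge p-bq\ge p-p^2/2>0$), so it suffices to treat $q=p^2/(2d)$ and $x=(d-1)b$; then $(1-q)^b-(1-p)\ge p-bq$ reduces the task to $(1-q)^{(d-1)b}\bigl(1-\tfrac{bp}{2d}\bigr)\ge(1-p)^b$, and taking $b$-th roots together with $(1-q)^{d-1}\ge 1-(d-1)q\ge 1-p^2/2$ and $\bigl(1-\tfrac{bp}{2d}\bigr)^{1/b}\ge 1-\tfrac{bp}{2d}\ge 1-\tfrac{p}{2}$ reduces everything to
\[
\Bigl(1-\tfrac{p^2}{2}\Bigr)\Bigl(1-\tfrac{p}{2}\Bigr)-(1-p)=\tfrac{p}{4}\bigl((1-p)^2+1\bigr)\ge 0 .
\]
Both routes end up using the hypothesis $p\ge\sqrt{2qd}$ at exactly the same strength. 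Two cosmetic points: $b^*\ge 1$ automatically because $i^*\in\cS$, which is what makes the induction terminate; and your fallback claim that $F(0)=0$ holds only along the parametrization $p=\sqrt{2qd}$, not for fixed $p$, so the derivative-in-$q$ alternative would need care.
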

\begin{proof}
Consider any $\cS$ and define the set $\cA_l = \braces{j: j= i\Mod{d} \land  l=\lfloor i/d \rfloor \text{ for some }i\in\cS}$ for $l =0,\hdots,d-1$. Then,
\begin{align}
    \sum_{i\in \cS}\ru_{\lfloor i/d \rfloor}\rv_{i\Mod{d}}&=\sum_{l=0}^{d-1}\ru_l\sum_{j\in \cA_l}\rv_{j}
\end{align}
Hence,
\begin{align}
\bP\left(\sum_{i\in \cS}\ru_{\lfloor i/d \rfloor}\rv_{i\Mod{d}}\right)&=\bP\left(\sum_{l=0}^{d-1}\ru_l\sum_{j\in \cA_l}\rv_{j}=0\right)\\
    &=\bP\left(\sum_{l=1}^{d-1}\ru_l\sum_{j\in \cA_l}\rv_{j}=0\right)\bP\left(\ru_0\sum_{j\in \cA_0}\rv_{j}=0\middle| \sum_{l=1}^{d-1}\ru_l\sum_{j\in \cA_l}\rv_{j}=0\right).
\end{align}
Looking at the term $\bP\left(\ru_0\sum_{j\in \cA_0}\rv_{j}=0\middle| \sum_{l=1}^{d-1}\ru_l\sum_{j\in \cA_l}\rv_{j}=0\right)$, note that if $|\cA_0\cap \bigcup_{l=1}^{d-1}\cA_l|=0$, then $\bP\left(\ru_0\sum_{j\in \cA_0}\rv_{j}=0\middle| \sum_{l=1}^{d-1}\ru_l\sum_{j\in \cA_l}\rv_{j}=0\right)=\bP\left(\ru_0\sum_{j\in \cA_0}\rv_{j}=0\right)$. 
Otherwise, $$\bP\left(\ru_0\sum_{j\in \cA_0}\rv_{j}=0\middle| \sum_{l=1}^{d-1}\ru_l\sum_{j\in \cA_l}\rv_{j}=0\right)\geq\bP\left(\ru_0\sum_{j\in \cA_0}\rv_{j}=0\right)$$. 
In order to show this, assume without loss of generality, that $|\cA_0\cap \bigcup_{l=1}^{d-1}\cA_l|=k$. This means that at least $k$ elements that are contained in $\cA_0$ is also contained  in at least one of the other $\cA_l$ sets, this immediately implies  that at least $k$ terms in the summation of $\sum_{j\in\cA_0}\rv_j $ are zero. Notice that 
\begin{align}
\probof{\ru_0 \sum_{j\in\cA_0}\rv_j  =0 }
 &= \probof{\ru_0 = 0} + \probof{\sum_{j\in\cA_0}\rv_j  =0} - \probof{\braces{\ru_0=0}\cap\braces{\sum_{j\in\cA_0}\rv_j  =0}}\\
 &= \probof{\ru_0 = 0} + \probof{\sum_{j\in\cA_0}\rv_j  =0} - \probof{\braces{\ru_0=0}}\probof{\braces{\sum_{j\in\cA_0}\rv_j  =0}}
 \end{align}
 Since $\ru_0$ is independent of $\rv_j$s. It is obvious that if we know that at least one of the $\rv_j$s is zero, then the probability that the sum of them is zero is increased ($\rv_j$s are also independent to each other). Thus the more $\rv_j$s we know that are zero the more this function is increased. (We can write the last expression as $p+q^{\abs{\cA_0} -k} - pq^{\abs{\cA_0} -k}$ and it is easy to see that as $k$ is increased the value of the expression is increased).
 
Thus, to maximize $\bP\left(\sum_{i\in \cS}\ru_{\lfloor i/d \rfloor}\rv_{i\Mod{d}}\right)$, we need to maximize the size of overlaps of the form $\cA_0\cap \bigcup_{l=1}^{d-1}\cA_l$. To proceed now, let $a_l:=|\cA_l|$. Then,
\begin{align}
\bP\left(\sum_{i\in \cS}\ru_{\lfloor i/d \rfloor}\rv_{i\Mod{d}}\right)&=\bP\left(\sum_{l=0}^{d-1}\ru_l\sum_{j\in \cA_l}\rv_{j}=0\right)\\
&\leq \bP\left(\sum_{l=0}^{d-1}\ru_l\sum_{j=0}^{a_l-1}\rv_{j}=0\right)\tag*{(This maximizes the overlap amongst $\cA_l$'s)}.
\end{align}
Without loss of generality, assume that $a_0\geq a_1\geq \dots$. Then, let $h$ be the smallest integer such that $|\{i:a_i \geq h \}| \leq h $. Next, we claim that $\abs{\cS}=: n \leq 2dh$. To see how, let $\cT_1=\{i:a_i\geq h\}$ and $\cT_2=\{i:a_i < h\}$. Then, $n=\sum a_i=\sum_{i\in \cT_1}a_i+\sum_{i\in \cT_2}a_i\leq dh+dh=2dh$, since $|\cT_1|\leq h$, $a_i\leq h$ for $a_i$ in $\cT_2$, $a_i\leq d$ and $|\cT_2|\leq d$.

Continuing on from the equation above, 
\begin{align*}
\bP\left(\sum_{i\in \cS}\ru_{\lfloor i/d \rfloor}\rv_{i\Mod{d}}\right)&\leq \bP\left(\sum_{l=0}^{d-1}\ru_l\sum_{j=0}^{a_l-1}\rv_{j}=0\right)\\
&\leq \bP\left(\sum_{l=0}^{h-1}\ru_l\sum_{j=0}^{h-1}\rv_{j}=0\right).
\end{align*}
The sum above is $0$ if and only if $\sum_{l=0}^{h-1}\ru_l=0$ or $\sum_{j=0}^{h-1}\rv_{j}=0$. The probability of that is $2(1-p)^h - (1-p)^{2h} =(1-p)^h(2-(1-p)^h)$. On the other hand, $\bP\left(\sum_{i\in \cS}\rx_{i}=0\right)=(1-q)^n\geq (1-q)^{2dh}$. Hence, we need the following to be true:
\begin{align*}
    &(1-p)^h(2-(1-p)^h)\leq (1-q)^{2dh}\\
    \text{or}\quad &(1-p)(2-(1-p)^h)^{1/h}\leq (1-q)^{2d}
\end{align*}
Note that $(2-(1-p)^h)^{1/h}\leq 1+p$ and $(1-q)^{2d}\geq 1-2qd$. Hence, it is sufficient for the following to hold:
\begin{align*}
    (1-p)(1+p)\leq 1-2qd.
\end{align*}From the above, we get that $p\geq \sqrt{2qd}$ suffices.
\end{proof}

This lemma shows that if we replace a column of $\rmA\kr \rmB$ with i.i.d.  Bernoulli random variables with probability of success $q$, then the probability that the new matrix has Boolean determinant zero is higher than that of the original one. To see how, just let $\cS$ be the set of indices $i$ for which $\bdet((\rmA\kr \rmB)_{-(i,0)}) = 1$. Then, the LHS in the statement of this lemma is just the probability of $\bdet(\rmA\kr \rmB) = 0$ and RHS is the probability of the new matrix having Boolean determinant zero. Since the columns of $\rmA\kr \rmB$ are independent, we can keep replacing the columns one-by-one to get a matrix with all elements sampled i.i.d. from ${Bern}(q)$. Then, we can use the following lemma on the new matrix.

We now restate a theorem from \cite{Basak2018}. Though mentioned in the main part, we also state it here for completeness and clarity:
\begin{theorem}[Theorem 1.1 in \citep{Basak2018}]\label{th: app- Bernoulli}
let $\rmA\in\mat^{d\times d}$ be an a matrix with i.i.d. $Ber(p)$ entries.  Then,  there exist absolute constants $0 < c, \bar{c}, C <\infty$ such that for any $\eps>0$, and $\rho$ such
that $n\rho \geq \log(1/\rho)$, we have
\begin{equation*}
    \prob\parens*{\braces*{(s_{min}(\rmA)\leq \bar{c}\eps\exp(-C\dfrac{\log(1/\rho)}{\log(n\rho)}\sqrt{\dfrac{\rho}{n}}))\cap\Omega_o^c}}\leq \eps + n^{-{c}}
\end{equation*}
where $s_{min}$ is the minimum singular value of the matrix $\rmA$ and $\Omega_0$ is the event that any column or some row is identical to $0$. 
\end{theorem}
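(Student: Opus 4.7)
The plan is to follow the invertibility-of-sparse-matrices program of Rudelson--Vershynin, adapted to the Bernoulli($\rho$) setting by carefully tracking the sparsity parameter $\rho$ throughout. The first step is a decomposition of the sphere: I would partition the unit sphere $S^{n-1}$ into \emph{compressible} vectors $\mathrm{Comp}(\delta,\mu)$ (vectors within Euclidean distance $\mu$ of the set of vectors with at most $\delta n$ nonzero coordinates) and \emph{incompressible} vectors $\mathrm{Incomp}(\delta,\mu)$, where the parameters $\delta,\mu$ are taken to depend on $\rho$. Then the smallest singular value $s_{\min}(\rmA)=\inf_{x\in S^{n-1}}\|\rmA x\|_2$ is bounded separately on the two pieces.

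On the compressible piece, the strategy is a standard $\varepsilon$-net argument: construct a net $\cN$ of size $\exp(C\delta n\log(1/\delta))$ for $\mathrm{Comp}(\delta,\mu)$, establish the small-ball estimate $\bP(\|\rmA x\|_2\le t\sqrt{n})\le (C_0 t/\sqrt{\rho})^{c\rho n}$ for a fixed unit vector $x$ (via a Lévy concentration/Esseen inequality tailored to sparse Bernoulli sums), and union-bound over $\cN$ after intersecting with $\Omega_o^c$ (which guarantees no column has too few active rows with overwhelming probability). Choosing $\delta$ just below $\rho/\log(1/\rho)$ makes the net size beat the small-ball bound, yielding $\inf_{x\in\mathrm{Comp}}\|\rmA x\|_2\ge c\sqrt{\rho n}$ off a failure event of probability $\le n^{-c}$.

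On the incompressible piece, I would invoke the ``invertibility via distance'' reduction of Rudelson--Vershynin: for $x\in\mathrm{Incomp}(\delta,\mu)$ a constant fraction of coordinates are of order $1/\sqrt{n}$, so
\begin{equation*}
\bP\!\left(\inf_{x\in\mathrm{Incomp}}\|\rmA x\|_2\le t\sqrt{\rho/n}\right)\le \frac{1}{\delta n}\sum_{k=1}^n \bP\!\left(\mathrm{dist}(\rmA_{\cdot k},H_k)\le t\sqrt{\rho}/\mu\right),
\end{equation*}
where $H_k=\mathrm{span}(\rmA_{\cdot j}:j\ne k)$. Writing $\mathrm{dist}(\rmA_{\cdot k},H_k)=|\langle v,\rmA_{\cdot k}\rangle|$ for a random unit vector $v\perp H_k$ (independent of $\rmA_{\cdot k}$), this becomes a Lévy concentration estimate for $\sum_i v_i \rmA_{ik}$, a sparse weighted Bernoulli sum. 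Controlling this is the real substance of the proof and is where the logarithmic factor $\log(1/\rho)/\log(n\rho)$ enters: one must show that, conditionally on $\Omega_o^c$, the vector $v$ has a large \emph{least common denominator} $\mathrm{LCD}(v)$, whence a sparse Littlewood--Offord inequality (proved via a structure theorem $\grave{a}$ la Tao--Vu or by randomized rounding to the integer lattice) yields $\bP(|\langle v,\rmA_{\cdot k}\rangle|\le s)\lesssim s+\exp(-c\rho n \log(n\rho)/\log(1/\rho))$ up to rescaling.

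The main obstacle is the LCD/structure step in the sparse regime: unlike the dense case where one can bootstrap arbitrary anti-concentration from a single-entry Berry--Esseen, here a Bernoulli($\rho$) entry contributes only $\rho$ of ``fluctuation,'' so the exponent in the small-ball bound must be tracked as $\rho n$ rather than $n$, and the bootstrapping from a coarse net to a fine net at scale $\exp(-C\log(1/\rho)/\log(n\rho)\cdot\sqrt{\rho/n})$ requires a delicate iteration (with $\log(n\rho)$ entering through the number of refinement steps). Once this is in hand, combining the compressible and incompressible bounds, excluding $\Omega_o$, and optimizing over the net scales produces the claimed estimate $s_{\min}(\rmA)\le \bar c\,\varepsilon\exp\!\bigl(-C\tfrac{\log(1/\rho)}{\log(n\rho)}\sqrt{\rho/n}\bigr)$ on $\Omega_o^c$ with probability at most $\varepsilon+n^{-c}$.
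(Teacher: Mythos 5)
The first thing to note is that the paper does not prove this statement at all: it is imported verbatim as Theorem~1.1 of the cited reference \citep{Basak2018} and used as a black box, so there is no in-paper proof to compare your argument against. What you have written is an outline of how the result is actually established in that reference, and at the level of architecture it is faithful: the compressible/incompressible decomposition of the sphere with sparsity-dependent parameters, the $\varepsilon$-net plus small-ball argument on the compressible part, the invertibility-via-distance reduction on the incompressible part, and the reduction of the distance estimate to anti-concentration of $\langle v, \rmA_{\cdot k}\rangle$ for a normal vector $v$ are all the right ingredients, and you correctly identify that the exponent must be tracked as $\rho n$ rather than $n$ and that $\Omega_0^c$ is needed to rule out trivially degenerate columns.

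That said, as a proof the proposal has a genuine gap exactly where you flag ``the real substance'': the sparse Littlewood--Offord inequality and the accompanying structure theorem for the least common denominator of the normal vector $v$ are asserted, not proved, and this is the part of \citep{Basak2018} that does not follow from the dense Rudelson--Vershynin machinery by bookkeeping alone. In particular, the claim that conditionally on $\Omega_0^c$ the vector $v$ has large LCD, and the precise mechanism by which the factor $\log(1/\rho)/\log(n\rho)$ emerges from the iterative refinement of net scales, are stated as targets rather than derived; without them the quantitative form of the bound in the theorem is not obtained. So the outline is a correct map of the territory but would need the sparse anti-concentration step filled in before it constitutes a proof. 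For the purposes of this paper none of that is required: the theorem is legitimately invoked as an external result, and the only things the authors actually extract from it are the choice $\rho = \log n / n$ and the conclusion that the matrix is singular with probability at most of order $n^{-c}$.
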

\begin{remark}
It is easy to see that $n\rho\geq \log(1/\rho)$ is equivalent with  $\rho\geq \log n/n - \log\log n$.
we note here that we focus on  the invertibility of matrix $\rmA$, that is the probability that $s_{min}(\rmA)>0$. As a result,  let $\eps$ to be exponentially - actually arbitrarily - small  and $\rho = \log n/n$; then the matrix $\rmA$ is non-singular with probability of the order $1/n^c$. We emphasize that the constant $c$ is universal.
\end{remark}

\begin{theorem}[Restatement of \cref{th: sparse}]\label{th: appsparse} For the setting of \cref{th: width app}, consider that  the random matrices of each of the layers are sparsified with probability $p=\Theta(\sqrt{\log{\wid}/\wid})$, meaning that each of their elements is zero with probability $1-p$. If the depth of the network is polynomial in the input, \ie $\dep = \text{poly}(\wid)$ then the results of \cref{th: width app} hold with probability at least $1-1/d$. 
\end{theorem}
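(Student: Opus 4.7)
The plan is to reduce the theorem, layer by layer, to the invertibility of Khatri-Rao products of two sparsified random matrices, then pass from real-valued invertibility to a Boolean-determinant statement, then reduce that Boolean question to a question about fully i.i.d.\ Bernoulli matrices, and finally invoke \cref{th: app- Bernoulli}. A union bound over the polynomially many layers closes the argument once the per-layer failure probability is driven to $d^{-\bar{C}}$ for a sufficiently large constant $\bar{C}$.

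First I would recall from the proof of \cref{th: width app} that, for each pair of layers used to simulate a target layer, the only place where the random weights $\weights_{2i-1},\weights_{2i}$ enter the feasibility argument is through the invertibility of the Khatri-Rao product $\weights_{2i}\kar \weights_{2i-1}^T$. Under sparsification, each such random matrix is replaced by $\weights \had \rmM$ with $\rmM$ Boolean having i.i.d.\ $\mathrm{Bern}(p)$ entries, independent of $\weights$. By \cref{lem:Boolean nonzero}, the invertibility of $(\weights_1\had\rmM_1)\kar(\weights_2\had\rmM_2)$ is equivalent, with probability $1$ over the continuous randomness, to $\bdet(\rmM_1\kar\rmM_2)=1$. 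So the whole question collapses to a Boolean-determinant statement on the sparsity patterns.

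Next I would use \cref{lem:substitute col} to replace the columns of $\rmM_1\kar\rmM_2$ one at a time: since the columns are independent, each substitution by a fresh vector with i.i.d.\ $\mathrm{Bern}(q)$ entries (with $p\geq\sqrt{2qd}$) can only increase the probability that the Boolean determinant is zero. Iterating over all $d^2$ columns produces a matrix $\rmM'$ with fully i.i.d.\ $\mathrm{Bern}(q)$ entries such that $\prob(\bdet(\rmM_1\kar\rmM_2)=0)\le\prob(\bdet(\rmM')=0)$. To apply \cref{th: app- Bernoulli} at the smallest permissible density $\rho=\log(d^2)/d^2$, which guarantees nonsingularity with probability at least $1-d^{-2c}$ for a universal $c$, I would then amplify: draw $\bar{C}/c$ independent copies $\rmE_1,\dots,\rmE_{\bar{C}/c}$ at density $\rho$ and take $\rmM''=\rmE_1\lor\cdots\lor\rmE_{\bar{C}/c}$, which is itself i.i.d.\ $\mathrm{Bern}(q)$ with $q=\Theta(\log d/d^2)$. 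Since ordinary nonsingularity implies nonzero Boolean determinant, $\bdet(\rmM'')=1$ except with probability at most $d^{-2\bar{C}}$.

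Finally I would take a union bound over the $2l=\mathrm{poly}(d)$ Khatri-Rao products appearing in the construction. Choosing $\bar{C}$ large enough relative to the polynomial degree of $l$ makes the total failure probability at most $1/d$. Back-solving the constraint $p\ge\sqrt{2qd}$ with $q=\Theta(\log d/d^2)$ yields the advertised sparsity $p=\Theta(\sqrt{\log d/d})$. The main obstacle, in my view, is the stochastic-domination step embodied in \cref{lem:substitute col}, because the entries of $\rmM_1\kar\rmM_2$ are highly dependent (each row of $\rmM_1$ and each row of $\rmM_2$ is reused in many entries), so one has to carefully control how much the overlap pattern among the support sets $\cA_l$ can inflate the probability of a zero sum; the chosen threshold $p\ge\sqrt{2qd}$ is exactly what survives the worst-case overlap bound $|\cS|\le 2dh$ used there.
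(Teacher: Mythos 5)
Your proposal follows essentially the same route as the paper's proof: the reduction to invertibility of the Khatri-Rao product of sparsified matrices, the equivalence with the Boolean determinant via \cref{lem:Boolean nonzero}, the column-by-column stochastic domination of \cref{lem:substitute col} under $p\ge\sqrt{2qd}$, the amplification of \cref{th: app- Bernoulli} by taking the element-wise \emph{or} of $\bar{C}/c$ independent $\mathrm{Bern}(\rho)$ matrices, and the final union bound over the $2\dep=\mathrm{poly}(\wid)$ layers. The argument and the resulting sparsity level $p=\Theta(\sqrt{\log\wid/\wid})$ match the paper's proof, so no further comparison is needed.
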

\begin{proof}
It is sufficient to prove that after sparsifying the matrices of the two first  layers of our construction, \ie $\weights_1$, $\weights_2$ with $p\geq \Theta(\sqrt{\log d/d})$, the probability that their Khatri-Rao product is no longer invertible is polynomially small in $d$. Let $\rmM_1,\rmM_2$ be \emph{Bern(p)} then from  \cref{lem:Boolean nonzero}
we have that it is sufficient that $\bdet(\rmM_1\kar\rmM_2)\neq 0$. From \cref{lem:substitute col}, we actually get that by substituting one column that has variables which are ${Bern(q)}$, with $p\geq\sqrt{2qd}$ results in matrix which has a higher probability of its Boolean determinant being zero. By applying this argument at each column we get a new matrix $\rmM$, which elements are i.i.d. ${Bern(q)}$. 

We will now compute the value of $q$.
Using \cref{th: Bernoulli}, we get that every Bernoulli i.i.d. matrix $Bern(\rho)$, with $\rho= \log d^2/d^2$, is singular with probability at most $d^{-2c}$ for a universal constant $c$. 
However, we need the probability of being singular to be of the order $d^{-\bar{C}}$ for some $\bar{C}$ which we will determine later in the proof. 
To achieve this smaller probability, consider drawing independently $\rmE_i$, $i\in[\frac{\bar{C}}{c}]$ $Bern(\rho)$ matrices. Then the probability of all of them being singular is at most $d^{-2\bar{C}}$. It is straightforward to see that whenever the determinant of a Boolean matrix is non-zero then the Boolean Determinant of that matrix is also non-zero. Hence, with probability at least $1-d^{-2\bar{C}}$, at least one of these matrices has non-zero Boolean determinant. We create our matrix $\rmM' = \rmE_1 \lor\hdots\lor \rmE_\frac{\bar{C}}{c}$, where the `or' operation is done element-wise. Note that $\rmM' $ has a non-zero Boolean determinant if any of the matrices $ \rmE_1\dots \rmE_\frac{\bar{C}}{c}$ has a  non-zero one. Thus $\rmM'$ has a non-zero Boolean determinant with probability at least $1-d^{-2\bar{C}}$.
Note that $\rmM'$ is itself an i.i.d. Bernoulli matrix, with each element being $Bern(q')$ with $q' \leq  \rho \bar{C} = \bar{C}\log d^2/d^2$. Hence we can set $q= \bar{C}\log d^2/d^2$.

Since $\rmM'$ has a zero Boolean determinant with probability at most  $d^{-2\bar{C}}$, the probability that 
$\bdet(\rmM_1\kar\rmM_2) = 0$ is also smaller than $d^{-2\bar{C}}$, and hence $\rmW_1\kar\rmW_2$ are invertible with probability at least $1-d^{-2\bar{C}}$.
Then to ensure that the Khatri-Rao product of the sparse matrices is invertible for all the $2l$ layers of $f$, we need to do a union bound over all the layers. Thus, the probability that any of the Khatri-Rao products are not invertible is at most $d^{-2\bar{C}}2l$. Since we have assumed that $l=\cO(\text{poly}(d))$, then for a large enough constant $\bar{C}$, the overall probability of the union bound can be driven down to be at most $1/d$. Recalling that $p=\sqrt{2dq}$, we get that $p=\Theta(\sqrt{\log d/d})$ suffices for the theorem to hold.

 \end{proof}

 \section{Experiments}\label{app: experiments}

\begin{figure}[h]
    \centering
    \begin{subfigure}{\textwidth}
         \centering
         \includegraphics[width=0.49\textwidth]{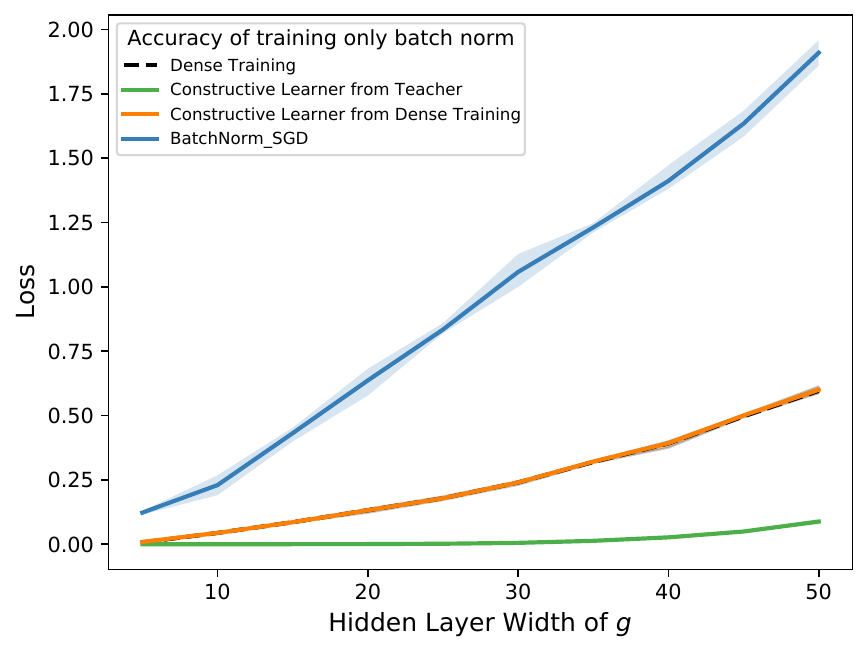}
        \hfill
         \includegraphics[width=0.49\textwidth]{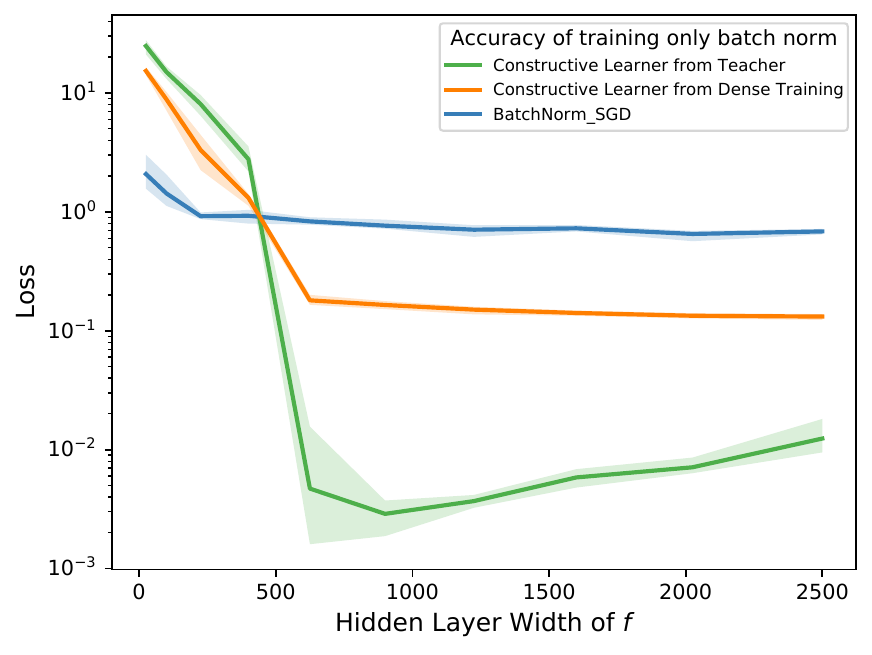}
     \end{subfigure}
     
    \caption{Verifying our constructions for \acl{BNs} parameters on data created by a random neural network $g$. The baseline algorithm (Blue curve) corresponds training the \acl{BNs} parameters of the network $f_1$ on this data using SGD, while the other parameters (weight matrices) are kept frozen at initialization. The green curve represents the network $f_2$ created by our construction by directly using the teacher network $g$. We also train a dense neural network $g'$ with same architecture as $g$ using SGD (dashed black curve). We then use our construction on $g'$ to create $f_3$, which is represented by the orange curve. In the figure on the left, we vary the width of the hidden layer of $g$, and set the width of $f_1,f_2,$ and $f_3$ to be the one proposed in Theorem \ref{thm:width}. In the figure on the right, we fix the width of $g$ to be 25, and vary the widths of $f_1,f_2,$ and $f_3$. }

    \label{fig:exp1}
\end{figure}
To verify our construction numerically, we designed the following experiment: We create a random target network $g(\vx)$ with one hidden layer of width $d$, with ReLU activations. The output of the target network is simply the sum of outputs of the hidden layer neurons. We try to learn this function using the network $f(\vx)$ from Theorem \ref{thm:width} employing three algorithms:
\begin{itemize}
    \item Baseline: We freeze the weight parameters of $f$ and train only the \acl{BNs} parameters using SGD.
    \item Constructive algorithm, directly from target network: We freeze the weight parameters of $f$ and set the \acl{BNs} parameters according to the construction prescribed in the proof of Theorem \ref{thm:width}.
    \item Constructive algorithm, from a dense learnt network: For this approach, we assume that we do not have access to the network $g$ but only to the input/output pairs produced by it. Thus, we first take another randomly initialized network $g'$ with the same architecture as $g$, and train it using SGD on the data generated by $g$. Then, we create $f$ by freezing its weight parameters and setting the \acl{BNs} parameters according to the construction prescribed in the proof of Theorem \ref{thm:width}, but using $g'$ in the construction instead of $g$.  
\end{itemize}
The results are shown in Figure \ref{fig:exp1} (left). We see that both the constructive algorithms beat the baseline. These show that training \acl{BNs} parameters using SGD might not be the best algorithm. While the constructive algorithm which uses the target network directly might not be viable to use in practice (since the parameters of $g$ are not available), the constructive algorithm that first trains a dense network and uses that to construct the \acl{BNs} parameters shows that there can be other, indirect ways of training \acl{BNs} parameters, which are better than simply using SGD. 

To see the dependence of error on the width of $f$ for the three algorithms, we fixed the width of $g$ to be $d=25$, and repeated the first experiment with varying widths of $f$. The results for this experiment are shown in Figure \ref{fig:exp1} (right). Here we see that while at smaller widths the baseline (SGD) performs better (still only achieves a constant loss), as the width increases, our constructive algorithms beat SGD.
Note that here we used pseudo-inverse for setting the \acl{BNs} parameters of $f$ using our construction, since at lower widths, the matrices that need to be inverted become rank-deficient.

\begin{figure}[h]
    \centering
      
    \begin{subfigure}{\textwidth}
         \centering
         \includegraphics[width=0.49\textwidth]{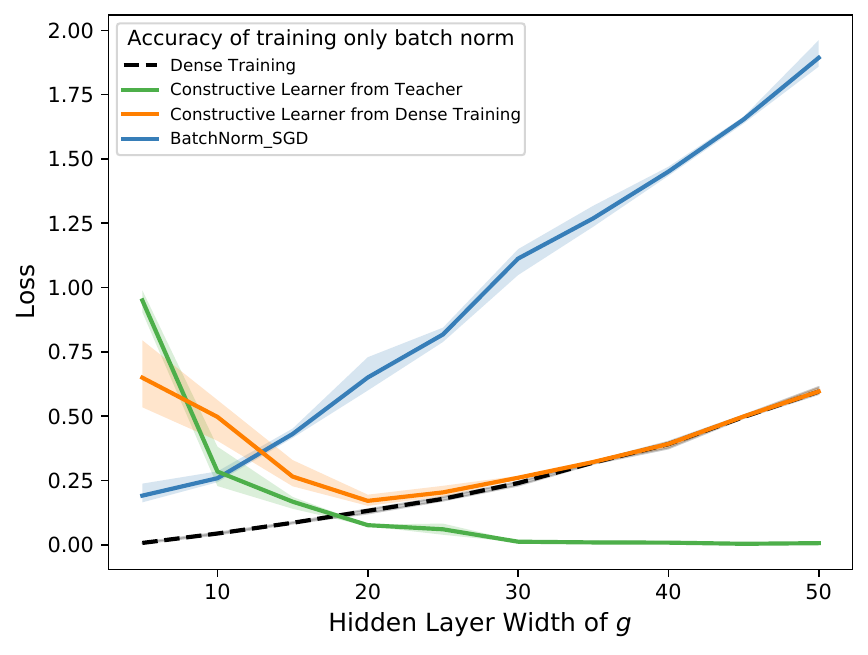}
        \hfill
         \includegraphics[width=0.49\textwidth]{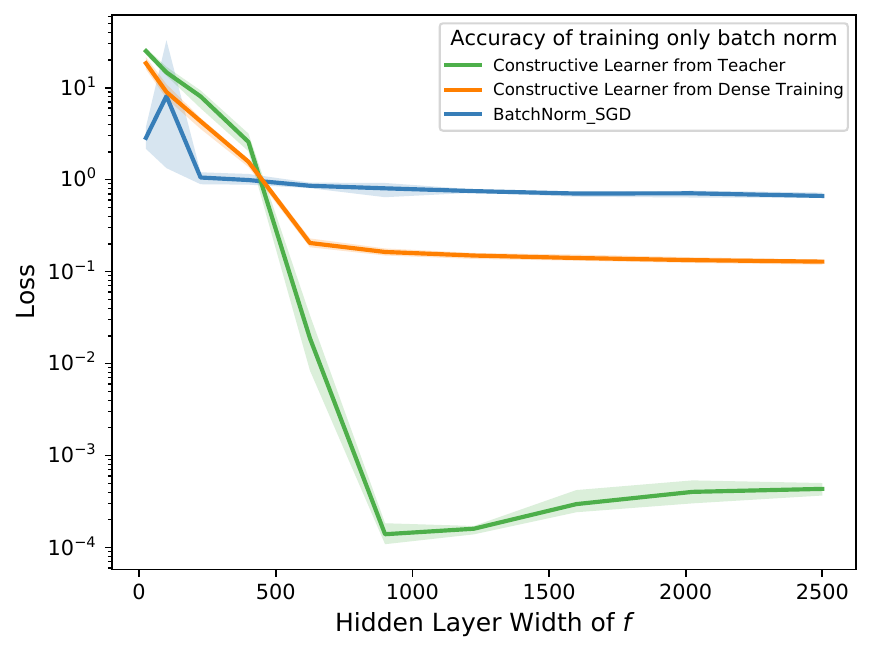}
         \caption{Sparsity = 0.2 (Weight matries have 20\% non-zero weights)}\label{fig:exp2a}
     \end{subfigure}
    \begin{subfigure}{\textwidth} 
         \centering
         \includegraphics[width=0.49\textwidth]{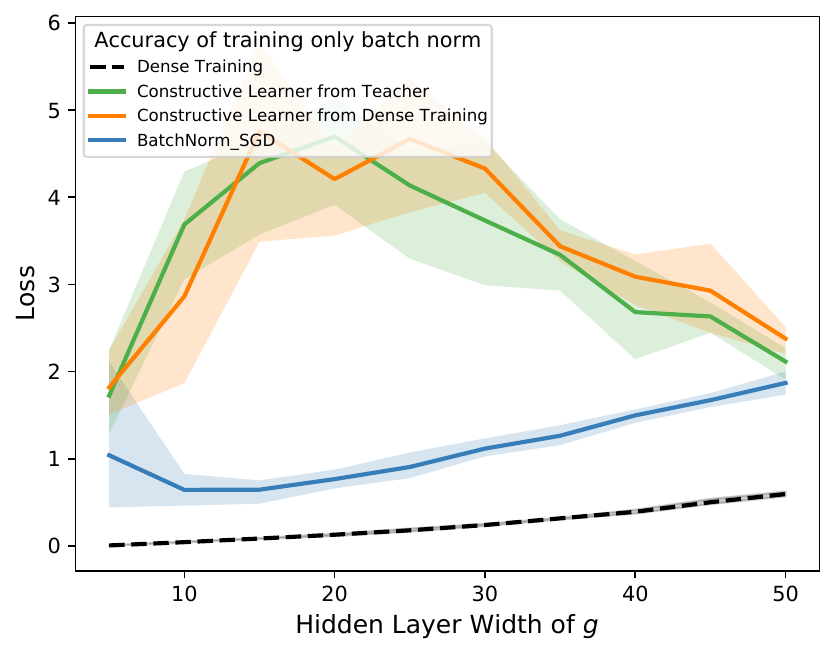}
        \hfill
         \includegraphics[width=0.49\textwidth]{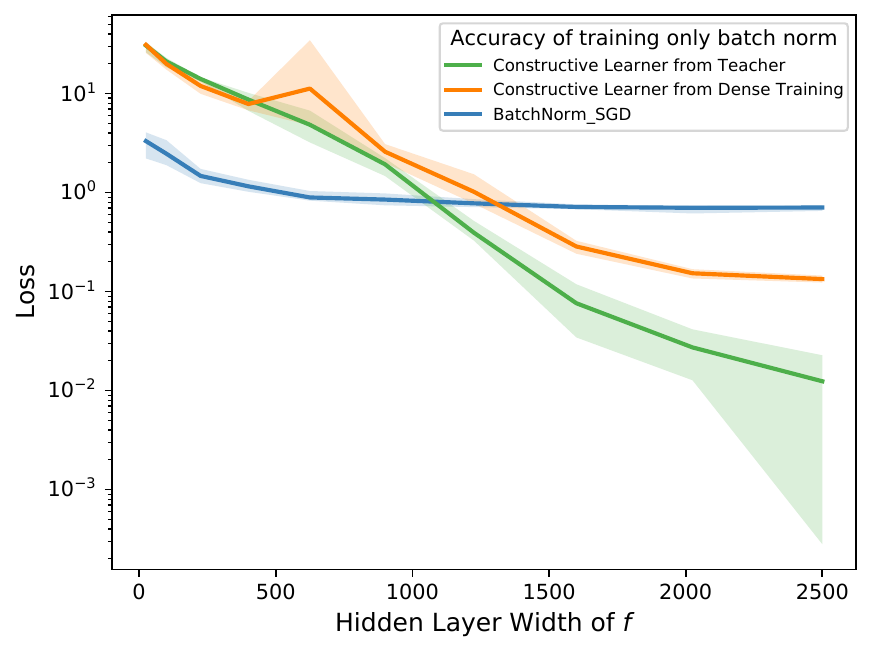}
         \caption{Sparsity = 0.05 (Weight matries have 5\% non-zero weights)}
     \end{subfigure}
    \caption{For the same setting as in Figure \ref{fig:exp1}, we set the sparsity of the frozen weights of $f_1,f_2,$ and $f_3$ to be 20\% and 5\%. The figures above show the results for these experiments. (Bottom left) We see that if the network is very sparse, and the width is also small, then SGD beats our construction.}
    \label{fig:exp2}  
\end{figure}

We tested the three algorithms when the networks $f_1,f_2$, and $f_3$ were constrained to have sparse weights. The results are shown in Figure \ref{fig:exp2}. In the plot on the left in Subfigure \ref{fig:exp2a}, we see that when the dimensions are low as compared to the sparsity, the orange and the black curve are far; as well as the green curve has high error. However, as we increase the dimensions keeping the sparsity fixed, the orange and black curves come closer and the green curve also achieves almost 0 error. This agrees with the results of Theorem \ref{th: sparse}. However, we also see that when the network is extremely sparse (sparsity=0.05), none of the algorithms achieve small loss. However, SGD (blue curve) still achieves much better loss that the constructive algorithms.

\paragraph{Experiment Details.}

The training set and the test set each consisted of 1 million samples with random Gaussian inputs and the labels were set according to the teacher network. The experiments were repeated 5 times for each method, and the smallest and the largest losses were discarded while computing the error bars for the figures. For learning \acl{BNs} params using SGD (blue curve), three learning rate schedulers were tried: Cosine annealing scheduler, exponential decay scheduler, and constant learning rate, out of which constant learning rate performed the best. The experiments were run on a machine with Intel i9-9820X CPU with 131 GB RAM and GeForce RTX 2080 Ti GPU with 11GB RAM. 
The code can be found here - \url{https://anonymous.4open.science/r/batch-norm_git-8F5B/} .

\end{document}